\newtheorem{theorem}{Theorem}[section]
\newtheorem{claim}[theorem]{Claim}
\newtheorem{lemma}[theorem]{Lemma}
\newtheorem{example}[theorem]{Example}
\newtheorem{corollary}[theorem]{Corollary}
\newtheorem{remark}[theorem]{Remark}
\newenvironment{reminder}[1]{\bigskip
	\noindent {\bf Reminder of #1.}\em}{\smallskip}
\newtheorem{definition}[theorem]{Definition}
\newcommand{\Lap}{\mathbf{Lap}}
\newcommand{\calA}{\mathcal{A}}
\newcommand{\AAA}{\mathcal{A}}
\newcommand{\DDD}{\mathcal{D}}
\newcommand{\BBB}{\mathcal{B}}
\newcommand{\N}{\mathbb{N}}
\newcommand{\calD}{\mathcal{D}}
\newcommand{\calX}{{\mathcal{X}}}
\newcommand{\calY}{{\mathcal{Y}}}
\newcommand{\reals}{\mathbb{R}}
\newcommand{\eps}{\varepsilon}
\newcommand{\calS}{\mathcal{S}}
\newcommand{\SSS}{\mathcal{S}}
\newcommand{\Geom}{\mathtt{Geom}}
\newcommand{\supp}{\mathsf{supp}}
\newcommand{\ignore}[1]{}
\title{\~{O}ptimal Differentially Private Learning of Thresholds and Quasi-Concave Optimization}
\author{{\normalfont Edith Cohen}\thanks{Google Research and Tel Aviv University. \texttt{edith@cohenwang.com}.} \and Xin Lyu\thanks{UC Berkeley and Google Research. \texttt{lyuxin1999@gmail.com}.} \and Jelani Nelson\thanks{UC Berkeley and Google Research. \texttt{minilek@alum.mit.edu}.} \and Tam\'{a}s Sarl\'{o}s\thanks{Google Research. \texttt{stamas@google.com}.} \and Uri Stemmer\thanks{Tel Aviv University and Google Research. \texttt{u@uri.co.il}.}}
\date{November 11, 2022}
\begin{document}

\maketitle

\begin{abstract}
    The problem of learning threshold functions is a fundamental one in machine learning. Classical learning theory implies sample complexity of $O(\xi^{-1} \log(1/\beta))$ (for generalization error $\xi$ with confidence $1-\beta$).  The private version of the problem, however, is more challenging and in particular, the sample complexity must depend on the size $|X|$ of the domain. Progress on quantifying this dependence, via lower and upper bounds, was made in a line of works over the past decade.  In this paper, we finally close the gap for approximate-DP and provide a nearly tight upper bound of $\widetilde{O}(\log^* |X|)$, which matches a lower bound by Alon et al (that applies even with improper learning) and improves over a prior upper bound of $\widetilde{O}((\log^* |X|)^{1.5})$ by Kaplan et al. We also provide matching upper and lower bounds of $\tilde{\Theta}(2^{\log^*|X|})$ for the additive error of private quasi-concave optimization (a related and more general problem).  Our improvement is achieved via the novel Reorder-Slice-Compute paradigm for private data analysis which we believe will have further applications.
\end{abstract}

\section{Introduction}

Motivated by the large applicability of learning algorithms to settings involving personal individual information, Kasiviswanathan et al.~\cite{KLNRS08} introduced the model of {\em private learning} as a combination of {\em probably approximately correct (PAC)} learning \cite{valiant1984theory,vapnik71uniform} and {\em differential privacy} \cite{DMNS06}.
For our purposes, we can think of a (non-private) learner as an algorithm that operates on a {\em training set} containing
labeled random examples (from some distribution over some domain $X$), and outputs a hypothesis $h$ that %
misclassifies fresh examples with probability at most (say) $\tfrac{1}{10}$. 
It is assumed that the ``true'' classification rule, which is unknown to the learner, is taken from a (known) class $C$ of possible classification rules, where intuitively, learning becomes ``harder'' as the class $C$ becomes ``richer''.  
A {\em private learner} must achieve the same goal while guaranteeing that the choice of $h$ preserves {\em differential privacy} of the training set. This means that the
choice of $h$ should not be significantly affected by any particular labeled example in the training set. Formally, the definition of differential privacy is as follows.
\begin{definition}[\cite{DMNS06}]\label{def:DP}
Let $\AAA:X^*\rightarrow Y$ be a randomized algorithm whose input is a dataset $D\in X^*$. Algorithm $\AAA$ is {\em $(\eps,\delta)$-differentially private (DP)} if for any two datasets $D,D'$ that differ on one point (such datasets are called {\em neighboring}) and for any outcome set $F\subseteq Y$ it holds that
$
\Pr[\AAA(D)\in F]\leq e^{\eps}\cdot\Pr[\AAA(D')\in F]+\delta.
$
\end{definition}

Since its inception, research on the private learning model has largely focused on understanding the {\em amount of data} that is needed in order to achieve both the privacy and the utility goals simultaneously (a.k.a.\ the {\em sample complexity of private learning}). The holy grail in this line of research is to come up with a (meaningful) combinatorial measure that given a class $C$ characterizes the sample complexity of privately learning $C$. However, after almost 15 years of intensive research, this question is still far from being well-understood. At a high level, works on the sample complexity of private learning can be partitioned into two meta approaches:

\begin{enumerate}
    \item[{\bf 1.}] {\bf Deriving generic upper and lower bounds (as a function of the class ${\boldsymbol{C}}$).} This avenue has produced several fascinating results, that relate the sample complexity of private learning to the
    {\em Littlestone dimension} of the class $C$, a combinatorial dimension that is known to characterize {\em online learnability} (non-privately) \cite{AlonBLMM22}. However, the resulting bounds are {\em extremely} loose (exhibiting a tower-like gap between them). Furthermore, it is known that, in general, this is the best possible in terms of the Littlestone dimension alone.
    
    \item[{\bf 2.}] {\bf Focusing on specific test-cases, squeezing them until the end to reveal structure.} 
    This avenue has produced several fascinating techniques that has found many applications, even beyond the scope of private learning. 
    Arguably, the most well-studied test-case is that of {\em one dimensional threshold functions}, where the class $C$ contains all functions that evaluate to 1 on a {\em prefix} of the (totally ordered) domain $X$.\footnote{
    Let $X\subseteq\reals$. A threshold function $f$ over $X$ is specified by an element $u\in X$ so that $f(x)=1$ if $x\leq u$ and $f(x)=0$ for $x>u$. In the corresponding learning problem, we are given a dataset containing labeled points from $X$ (sampled from some unknown distribution $\DDD$ over $X$ and labeled by some unknown threshold function $f^*$), and our goal is to output a hypothesis $h:X\rightarrow\{0,1\}$ such that ${\rm error}_{\DDD}(h,f^*)\triangleq\Pr_{x\sim\DDD}[h(x)\neq f^*(x)]$ is small.
    } Even though this class is trivial to learn without privacy considerations, in the private setting it is surprisingly complex.  The sample complexity of privately learning threshold functions has been studied in a sequence of works~\cite{BKN10,ChaudhuriH11,BeimelNS13,doi:10.1137/140991844,DBLP:conf/focs/BunNSV15,BunDRS18,DBLP:conf/stoc/AlonLMM19,KaplanLMNS20}, producing many interesting tools and techniques that are applicable much more broadly. 
\end{enumerate}

In this work we present new tools and proof techniques that allow us to obtain a {\em tight} upper bound on the sample complexity of privately learning threshold functions (up to lower order terms). %
This concludes a long line of research on this problem. 
In addition, we present matching upper and lower bounds for the related problem of {\em private quasi-concave optimization}. Before presenting our new results, we survey some of the progress that has been made on these questions.

\subsection{On our current understanding of privately learning threshold functions}

Early works on the sample complexity of  private learning focused on the case where the privacy parameter $\delta$ is set to zero, known as the {\em pure private} setting. While this significantly limits the applicability of the model, the pure-private setting is often much easier to analyze. Indeed, already in the initial work on private learning, Kasiviswanathan et al.~\cite{KLNRS08} presented a generic bound of $O(\log|C|)$ on the sample complexity of learning a class $C$ with pure privacy.\footnote{To simplify the exposition, in the introduction we omit the dependency of the sample complexity in the utility and privacy parameters.} This implies an upper bound of $O(\log|X|)$ on the sample complexity of privately learning threshold functions over an ordered domain $X$ (because $|C|=|X|$ for this class). Beimel et al.~\cite{BKN10} presented a matching lower bound for {\em proper} pure-private learners (these are learners whose output hypothesis must itself be a threshold function). Feldman and Xiao \cite{doi:10.1137/140991844} then showed that this lower bound also holds for pure-private {\em improper} learners. 

The sample complexity of privately learning thresholds in the more general setting, where $\delta$ is not restricted to be zero (known as {\em approximate privacy}), was studied by Beimel at al.~\cite{BeimelNS13}, who showed an improved upper bound of $\tilde{O}\left(8^{\log^*|X|}\right)$ on the sample complexity. This is a dramatic improvement in asymptotic terms over the pure-private sample complexity (which is $\Theta(\log|X|)$), coming tantalizingly close to the non-private sample complexity of this problem (which is constant, independent of $|X|$). Interestingly, to obtain this result, Beimel at al.~\cite{BeimelNS13} introduced a tool for {\em privately optimizing quasi-concave functions} (to be surveyed next), a generic tool which has since found many other applications. Bun et al.~\cite{DBLP:conf/focs/BunNSV15} then presented a different approximate-private learner with improved sample complexity of $\tilde{O}\left(2^{\log^*|X|}\right)$, and another different construction with similar sample complexity was presented by \cite{BunDRS18}. 
Bun et al.~\cite{DBLP:conf/focs/BunNSV15} also showed a lower bound of $\Omega(\log^*|X|)$ that holds for any (approximate) private {\em proper}-learner for thresholds. Alon  et al.~\cite{DBLP:conf/stoc/AlonLMM19} then proved a lower bound of $\Omega(\log^*|X|)$ that holds even for {\em improper} learners for thresholds. Finally, a recent work of Kaplan et al.~\cite{KaplanLMNS20} presented an improved algorithm with sample complexity $\tilde{O}((\log^*|X|)^{1.5})$. 

To summarize, our current understanding of the task of privately learning thresholds places its sample complexity somewhere between $\Omega(\log^*|X|)$ and $\tilde{O}((\log^*|X|)^{1.5})$.

\subsection{Privately optimizing quasi-concave functions}
Towards obtaining their upper bound for privately learning thresholds, Beimel et al.~\cite{BeimelNS13} defined a family of optimization
problems, called {\em quasi-concave optimization problems}. The possible solutions are ordered,
and quasi-concavity means that if two solutions $x\leq z$ have quality of at least $q$, then any solution
$x\leq y\leq z$ also has quality of at least $q$. The optimization goal is to find a solution with (approximately) maximal quality. Beimel et al.~\cite{BeimelNS13} presented a private algorithm for optimizing such problems, guaranteeing additive error at most $\tilde{O}\left(8^{\log^*T}\right)$, where $T$ is the number of possible solutions. They observed that the task of learning thresholds can be stated as a quasi-concave optimization problem, and that this yields a private algorithm for thresholds over a domain $X$ with sample complexity $\tilde{O}\left(8^{\log^*|X|}\right)$. 
Since the work of Beimel et al.~\cite{BeimelNS13}, quasi-concave optimization was used as an important component for designing private algorithms for several other problems, including geometric problems~\cite{BeimelMNS19,GaoS21}, clustering~\cite{NissimSV16,feldman2017coresets}, and privately learning halfspaces~\cite{BeimelMNS19,DBLP:conf/nips/KaplanMST20}. 

We stress that later works on privately learning thresholds (following \cite{BeimelNS13}) did not present improved tools for quasi-concave optimization (instead they worked directly on learning thresholds). As quasi-concave optimization generalizes the task of learning thresholds (properly), the lower bound of \cite{DBLP:conf/focs/BunNSV15} also yields a lower bound of $\Omega(\log^*T)$ on the additive error of private algorithms for quasi-concave optimization. That is, our current understanding of private quasi-concave optimization places its additive error somewhere between $\Omega(\log^*T)$ and $\tilde{O}\left(8^{\log^*T}\right)$. 
An improved upper bound would imply improved algorithms for all of the aforementioned applications, and a stronger lower bound would mean an inherent limitation of the algorithmic techniques used in these papers.

\subsection{Our contributions}
Our main result is presenting a private algorithm for learning thresholds, with {\em optimal} sample complexity (up to lower order terms):
\begin{theorem}[Informal version of Theorem~\ref{theo:learning-threshold}]\label{thm:IPPintro}
There is an approximate private algorithm for (properly) learning threshold functions over an ordered domain $X$ with sample complexity $\widetilde{O}(\log^* |X|)$.
\end{theorem}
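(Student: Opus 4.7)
The plan is to reduce privately learning threshold functions to the \emph{interior point problem} (IPP): given a dataset $D$ from an ordered domain $X$, privately output a point in $[\min(D), \max(D)]$. A standard reduction shows that a private IPP solver using $n$ samples over $X$ yields a proper threshold learner with sample complexity $\widetilde{O}(n)$, so the goal becomes producing an $(\eps,\delta)$-DP IPP solver with $n = \widetilde{O}(\log^* |X|)$ samples.

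I would adopt the recursive framework pioneered by Beimel et al.\ and Bun et al.: on input over a domain of size $T$, a single ``compression'' sub-routine privately produces a new dataset in a domain of size roughly $\log T$ that still carries an interior point of the original. After $O(\log^* T)$ rounds the domain is of constant size and IPP is trivially solvable. Treating each round as an independent query, advanced composition contributes a $\sqrt{\log^* |X|}$ factor to the sample complexity, which is precisely the source of the $\widetilde{O}((\log^* |X|)^{1.5})$ bound of Kaplan et al.

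To eliminate this factor I would develop the Reorder-Slice-Compute (RSC) paradigm: at each recursion level, reorder the dataset by a carefully chosen (possibly data-dependent) rule, slice the reordered dataset into disjoint chunks, and run the compression sub-routine in parallel on each chunk. Since the chunks are disjoint, the per-level privacy cost should be governed by parallel rather than sequential composition and remain $O(\eps)$ instead of $O(\eps/\sqrt{\log^* |X|})$. Summed across all $O(\log^* |X|)$ levels this yields the claimed $\widetilde{O}(\log^* |X|)$ sample complexity for IPP, and the reduction above then gives Theorem~\ref{thm:IPPintro}.

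The main obstacle is designing a Reorder step that is simultaneously informative and privacy-preserving: a purely data-independent slicing cannot exploit the order structure that IPP relies on, whereas a naive data-dependent reordering will leak too much about individual points. I would therefore formalize RSC as a generic composition theorem, asserting that a sequence of mechanisms which (i) reorder the data by a low-sensitivity rule, (ii) slice it, and (iii) privately compute on each slice, has overall privacy loss essentially matching parallel composition. The two hardest pieces are then (a) engineering an order-respecting, low-sensitivity reordering rule that keeps the compression sub-routine correct on each slice, and (b) proving the RSC composition theorem and carefully tracking the polylog overhead hidden in the $\widetilde{O}(\cdot)$ across all $\log^* |X|$ recursion levels without reintroducing any root-$k$ loss.
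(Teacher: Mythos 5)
Your high-level skeleton matches the paper: reduce proper threshold learning to the interior point problem, compress the domain from $|X|$ to $\log|X|$ recursively for $\log^*|X|$ rounds, and invoke a ``Reorder-Slice-Compute'' composition principle to avoid the $\sqrt{\log^*|X|}$ loss of advanced composition. But there is a genuine gap in how you propose to prove the RSC composition theorem, and it sits exactly at the crux of the paper. You assert that because the slices are disjoint, ``the per-level privacy cost should be governed by parallel rather than sequential composition.'' This is false as stated, and the paper devotes its Example~\ref{example:intro} to the counterexample: if the slices are prefixes of a (re)ordered dataset taken sequentially, then removing a single element shifts \emph{every} subsequent slice boundary by one, so \emph{all} $\tau$ slices differ between neighboring datasets and every per-slice computation leaks. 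Disjointness alone does not give you parallel composition here, because the partition into slices is itself data-dependent and sequentially adaptive. Your proposed remedy --- restricting to a ``low-sensitivity reordering rule'' --- does not address this domino effect and is also not what the algorithm needs: the reorderings used for thresholds (sorting, and the tree-path embedding $E_{\embd}$) are themselves highly non-private and only required to be adjacency-preserving.

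The two ideas you are missing are (i) randomizing each slice size by an additive $\Geom(1-e^{-\eps})$ noise, and (ii) a simulation-based privacy proof in which a data holder, queried only when the two neighboring executions' slices actually differ, uses a carefully constructed synchronization mapping (Lemma~\ref{lemma:sync-map}) to re-align the two executions with constant probability per query. This yields that only $O(\log(1/\hat\delta))$ rounds ever incur a privacy charge, giving $(O(\eps\log(1/\hat\delta)),\hat\delta+2\tau\delta)$-DP independent of $\tau$ --- note this is a $\log(1/\hat\delta)$ blowup in $\eps$, not the clean parallel-composition bound you claim. A further point your plan glosses over: within one level of the recursion the compression is not ``run in parallel on each chunk''; rather each level peels off three slices (the $t$ smallest, the $t$ largest, and the last $2t$ points that fall off the selected tree path) and recurses on the remainder, and since the embedding is not order-preserving, one must also handle the fact that $E_{\embd}$ is only \emph{almost} adjacency-preserving (Lemma~\ref{lemma:embed-adjacent}), guarded by a private AboveThreshold test on the balance parameter $\Gamma$. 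Without items (i) and (ii) your ``RSC composition theorem'' has no proof, and the claimed $\widetilde{O}(\log^*|X|)$ bound does not follow.
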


This improves over the previous upper bound of $\widetilde{O}\left((\log^* |X|)^{1.5}\right)$ by \cite{KaplanLMNS20}, and matches the lower bound of $\Omega(\log^*|X|)$ by \cite{DBLP:conf/focs/BunNSV15,DBLP:conf/stoc/AlonLMM19} (up to lower order terms). This concludes a long line of research aimed at understanding the sample complexity of this basic problem. A key to our improvement is a novel paradigm, which we refer to as the {\em Reorder-Slice-Compute} paradigm (to be surveyed next), allowing us to simplify both the algorithm and the analysis of \cite{KaplanLMNS20}.

Inspired by our simplified algorithm for thresholds, we design a new algorithm for private quasi-concave optimization with an improved error of $\tilde{O}\left(2^{\log^* T}\right)$, a polynomial improvement over the previous upper bound of  $\tilde{O}\left(8^{\log^* T}\right)$ by \cite{BeimelNS13}. 

\begin{theorem}[Informal version of Theorem~\ref{theo:quasi-concave-upper-bound}]\label{thm:upperIntro}
There exists an approximate-private algorithm for quasi-concave optimization with additive error $\widetilde{O}(2^{\log^*T})$, where $T$ is the number of possible solutions.
\end{theorem}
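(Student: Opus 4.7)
\textbf{Plan for Theorem~\ref{thm:upperIntro}.} I would prove the $\widetilde{O}(2^{\log^*T})$ additive error bound by constructing a recursive private algorithm whose recursion depth is $\log^* T$ and whose per-level error blow-up is a constant factor of $2$ (rather than the $8$ of Beimel et al.~\cite{BeimelNS13}). The recursion at a subproblem of size $T$ partitions the ordered solution space $[T]$ into consecutive blocks of length roughly $\log T$ and then privately identifies a single block whose interior contains a near-optimal solution; the algorithm then recurses inside that block of size $O(\log T)$. Quasi-concavity is what makes this sensible: the set of blocks attaining near-optimal quality is a contiguous range of block indices, so an ``approximately good block'' is enough, rather than the exact best one.

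The heart of the proof is designing the per-level block-selection step using the Reorder-Slice-Compute (RSC) paradigm. Concretely, I would reorder the data according to an ordering induced by the block partition and the current quality function, and then compute the block-selection statistic only on a short slice of the reordered dataset. Privacy is charged to the slice rather than to the full input, which lets one replace the multi-step, composition-heavy subroutine of \cite{BeimelNS13} with a single, tightly analyzed invocation. This is exactly where the constant factor per level drops from $8$ to $2$. Once $T$ falls below a small constant threshold, I would fall back to the Exponential Mechanism with the quality function $q$ as the base case, contributing only polylogarithmic additive error.

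For the analysis, privacy is obtained by tuning each level's privacy parameters and applying advanced composition across the $\log^* T$ recursive levels, paying only polylogarithmic factors. Utility follows from solving the recurrence
\begin{equation*}
\mathrm{err}(T) \;\le\; 2\cdot\mathrm{err}(O(\log T)) + \mathrm{polylog}(T),
\end{equation*}
which unrolls to $\widetilde{O}(2^{\log^* T})$. The main obstacle I anticipate is proving that the per-level blow-up is genuinely only a factor of $2$: any larger constant changes the base of the final exponential, so the RSC block-selection step has to be analyzed extremely tightly. Controlling the sensitivity of the sliced statistic, ensuring that the contiguity property of near-optimal blocks is preserved across levels of recursion, and handling boundary cases where the optimum straddles two adjacent blocks are all delicate subtleties that I expect to absorb most of the proof's effort.
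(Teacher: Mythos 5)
Your proposal captures the correct shape of the answer (recursion of depth $\log^* T$ with a base-$2$ per-level blowup) but the mechanism you describe for producing the factor of $2$ is not the right one, and you are missing the central reduction that the paper uses.

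The paper does not prove the upper bound by directly recursing on quasi-concave subproblems and solving an error recurrence of the form $\mathrm{err}(T)\le 2\,\mathrm{err}(O(\log T))+\mathrm{polylog}(T)$. Instead it introduces a new privacy notion, \emph{cumulatively-DP}, where two datasets over an ordered domain are deemed adjacent if their cumulative counts differ by at most one at every prefix. It then shows (i) a cumulatively-DP interior-point algorithm with sample complexity $n$ yields a private quasi-concave optimizer with additive error $O(n)$, by encoding the discrete derivative of the (shifted, truncated) quality function as a multiset and calling the IPP algorithm on it; and (ii) the TreeLog embedding map $E_{\embd}$, which is adjacency-preserving in the ordinary DP sense, is \emph{not} cumulative-adjacency-preserving but only doubles the cumulative distance at each application (Lemma~\ref{lemma:embed-cumu-adjacent}). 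Because there are $\log^*|X|$ levels of embedding, a pair of cumulatively-adjacent inputs become datasets at cumulative distance $2^{\log^*|X|}$ at the bottom of the recursion, so the per-level DP components must all be run with privacy parameter $\eps' = \eps/2^{\log^*|X|}$, forcing the trimming parameter (and hence the IPP sample complexity and the final additive error) to scale as $\widetilde{O}(2^{\log^*|X|}/\eps)$. Nothing in this argument is a per-level doubling of \emph{error}; it is a per-level doubling of \emph{sensitivity under the cumulative adjacency metric}, handled by group privacy, and the whole composition is ordinary composition over the $3\log^*|X|$ slices, not advanced composition. Your recurrence asserts a factor of $2$ multiplying the recursive error term without deriving where it would come from; without the cumulatively-DP framing it is not clear why the per-level blowup would be $2$ rather than some other constant, which is exactly what separates the new bound from Beimel et al.'s $8^{\log^*T}$.

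A second, smaller point: the paper explicitly notes that for the quasi-concave upper bound the refined RSC privacy theorem is \emph{not} needed (deterministic slice sizes suffice), because the $\log^*|X|$ savings from RSC is dominated by the $2^{\log^*|X|}$ cost of group privacy anyway. So the RSC machinery, which you placed at the heart of your argument, is not where the improvement over $8^{\log^*T}$ comes from. If you want to fill the gap, the essential missing idea is the cumulatively-DP notion, the reduction from quasi-concave optimization to cumulatively-DP IPP, and a tight per-level analysis of how the tree embedding degrades cumulative adjacency.
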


As we mentioned, this immediately translates to improved algorithms for all of the applications of private quasi-concave optimization. Given the long line of improvements made for the related task of privately learning thresholds (culminating in Theorem~\ref{thm:IPPintro}), one might guess that similar improvements could be achieved also for private quasi-concave optimization, hopefully reaching error linear or polynomial in $\log^*T$. Surprisingly, we show that this is not the case, and present the following {\em tight} lower bound (up to lower order terms).

\begin{theorem}[Informal version of Theorem~\ref{theo:quasi-concave-lower-bound}]\label{thm:lowerIntro}
Any approximate-private algorithm for quasi-concave optimization must have additive error at least $\widetilde{\Omega}(2^{\log^*T})$, where $T$ is the number of possible solutions.
\end{theorem}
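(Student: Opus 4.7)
The goal is to lower bound the additive error of any $(\eps,\delta)$-DP quasi-concave optimizer on $T$ solutions by $\tilde{\Omega}(2^{\log^*T})$. The plan is to combine a reduction from the Interior Point Problem (IPP), whose $\Omega(\log^*|X|)$ improper lower bound is due to~\cite{DBLP:conf/stoc/AlonLMM19}, with a ``scale-amplification'' step that converts the $\log^*$-sample-hardness of IPP into an exponential-in-$\log^*$ lower bound on the additive error.

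First, I would set up a base reduction: for each dataset $D$ over an auxiliary domain $X$, define a quasi-concave quality function $q_D : \{1,\ldots,T\}\to\N$ of sensitivity $1$ in $D$ whose unique peak $y^*(D)$ is a low-sensitivity functional of $D$ (e.g., a rounded median, re-addressed into $\{1,\ldots,T\}$ via an injection $\phi$). Running a DP quasi-concave optimizer on $q_D$ and post-processing its output via $\phi^{-1}$ yields a DP algorithm for an IPP-like task on $X$, whose sample complexity cannot exceed that of the optimizer. A naive version of this reduction, where $q_D(y)$ drops by $1$ per unit of distance from $y^*$, only recovers the trivial $\tilde{\Omega}(\log^*T)$ lower bound.

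Second, to amplify to $2^{\log^*T}$, I would exploit the freedom to shape the peak of $q_D$ very sharply. Arranging $q_D$ so that its quality drops by one unit for each \emph{doubling} of $|y-y^*(D)|$ (still quasi-concave and sensitivity-$1$), an additive error $\alpha$ forces the output to lie within distance $2^{\alpha}$ of $y^*(D)$. Encoding a recursively nested hierarchy of sub-IPP instances into tower-spaced concentric shells around $y^*(D)$---one sub-instance per IPP-recursion level---an algorithm with additive error $\alpha$ must already solve the sub-IPP located at shell index $\alpha$, whose effective domain has tower-height $\alpha$. Invoking the IPP lower bound at that scale yields the target $\tilde{\Omega}(2^{\log^*T})$ bound, matching Theorem~\ref{thm:upperIntro}.

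The main obstacle will be engineering the nested $q_D$ so that it is quasi-concave \emph{globally} on $\{1,\ldots,T\}$ despite simultaneously encoding an entire hierarchy of IPP instances around the peak, while keeping the sensitivity in $D$ bounded. The natural candidate is a piecewise-linear log-pyramid with breakpoints at tower-spaced radii from $y^*(D)$, where each shell contributes one unit of quality budget that the algorithm must ``spend'' to reach the next shell. The core technical work is to verify that the super-level sets of the combined function remain intervals (so that the function is genuinely quasi-concave), that each shell faithfully implements one step of the~\cite{DBLP:conf/focs/BunNSV15,DBLP:conf/stoc/AlonLMM19} IPP-hardness recursion, and that the small dependencies on $\eps$ and $\delta$ absorbed into the $\tilde{\Omega}$ propagate correctly across the $\log^*T$ recursion levels via standard group-privacy and advanced composition arguments.
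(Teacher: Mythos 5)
There is a genuine gap: your amplification mechanism is not the one the paper uses, and as sketched it does not deliver the bound. Two concrete problems. First, the ``log-pyramid'' $q_D$ that drops by one unit per doubling of $|y-y^*(D)|$ is in tension with the sensitivity requirement: $y^*(D)$ is a data-dependent functional, and when one point of $D$ is replaced the peak can move by an amount that is not bounded in terms of $1$, so the entire pyramid shifts and $q_D(y)$ changes by much more than $1$ near the old and new peaks. This is exactly the ``core technical work'' you defer, and it is where the construction breaks rather than a verification detail. Second, even granting the shape, your accounting does not produce the target: invoking the known $\Omega(\log^*)$ IPP lower bounds of \cite{DBLP:conf/focs/BunNSV15,DBLP:conf/stoc/AlonLMM19} at ``shell index $\alpha$'' yields a constraint of the form $n\gtrsim \alpha$ (a sample-complexity requirement for that sub-instance), not $\alpha\gtrsim 2^{\log^* T}$. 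No citation of an $\Omega(\log^*)$ bound can be bootstrapped to $2^{\log^*}$ by reshaping a single quality function; a new hardness recursion is required.

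The paper's actual argument (Theorem~\ref{theo:quasi-concave-lower-bound}) gets the exponential from the \emph{privacy} side, not from the quality profile. The reduction from quasi-concave optimization to the interior point problem instantiates $q$ as $f_{\mathrm{IPP}}(D,\cdot)$, and the observation is that any two datasets that are \emph{cumulatively adjacent} (all cumulative counts $|\{x\in D: x\le y\}|$ differ by at most $1$) induce quality functions at $\ell_\infty$-distance $1$; hence an optimizer with additive error $n$ yields an IPP algorithm with $O(n)$ samples satisfying the much stronger \emph{cumulative}-DP guarantee. The lower bound is then a rerun of the \cite{DBLP:conf/focs/BunNSV15} recursion under this stronger privacy notion (Lemma~\ref{lemma:lb-induction}, Algorithm~\ref{algo:reduction-lb}): because each source point $x_i$ may spawn \emph{two} points $y_i^0,y_i^1$ while preserving cumulative adjacency of the constructed datasets, the required sample size \emph{doubles} at each of the $\log^*|X|$ recursion levels, giving $2^{\log^* T}$. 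If you want to salvage your write-up, the missing ingredient is precisely this amplified adjacency notion and the doubling step; the nested-shells/log-pyramid device can be discarded.
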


We view this lower bound as having an important conceptual message, because private quasi-concave optimization is the main workhorse (or more precisely, the only known workhorse) for several important tasks, such as privately learning (discrete) halfspaces \cite{BeimelMNS19,DBLP:conf/nips/KaplanMST20}. As such, current bounds on the sample complexity of privately learning halfspaces are exponential in $\log^*|X|$, but it is conceivable that this can be improved to a polynomial or a linear dependency. The lower bound of Theorem~\ref{thm:lowerIntro} means that either this is not true, or that we need to come up with fundamentally new algorithmic tools in order to make progress w.r.t.\ halfspaces.

\subsubsection{The Reorder-Slice-Compute paradigm}
Towards obtaining our upper bounds, we introduce a simple, but powerful, paradigm which we call the {\em Reorder-Slice-Compute (RSC)} paradigm. For presenting this paradigm, let us consider the following algorithm (call it algorithm $\BBB$) that is instantiated on an input dataset $D$, and then for $\tau\in\N$ rounds applies a DP algorithm on a ``slice'' of the dataset.

\vspace{5px}
\noindent\fbox{%
    \parbox{\textwidth}{%
    \vspace{-10px}
\begin{enumerate}[itemsep=0px]
    \item Take an input dataset $D\in X^n$ containing $n$ points from some domain $X$.
    \item For round $i=1,2,\dots,\tau$:
    \begin{enumerate}[topsep=0px]
        \item Obtain an integer $m_i$, an $(\eps,\delta)$-DP algorithm $\AAA_i$ and an ordering $\prec^{(i)}$ over $X$.
        \item\label{step:introm} $S_i\leftarrow$ the largest $m_i$ elements in $D$ under $\prec^{(i)}$.
        \item $D\leftarrow D\setminus S_i$.
        \item $r\leftarrow\AAA(S_i)$.
        \item Output $r$.
    \end{enumerate}
\end{enumerate}
\vspace{-10px}
    }%
}
\vspace{5px}

As $\BBB$ performs a total of $\tau$ applications of $(\eps,\delta)$-DP algorithms, standard composition theorems for DP state that algorithm $\BBB$ itself is $\approx(\eps\sqrt{\tau},\delta\tau)$-DP. This analysis, however, seems wasteful at first glance, because each $\AAA_i$ is applied on a {\em disjoint} portion of the input dataset $D$. That is, the (incorrect) hope here is 
that we do not need to pay in composition
since each data point from $D$ is ``used only
once''. The failure point of this idea is that by deleting {\em one} point from the data, we can create a ``domino effect'' that effects (one by one) many of the sets $S_i$ throughout the execution. This is illustrated in the following example.
\begin{example}\label{example:intro}
Suppose that $X=\N$, and that $m_1=\dots =m_\tau=m$ (for some parameter $m$), and that all of the orderings $\prec^{(1)},\dots,\prec^{(\tau)}$ are the standard ordering of the natural numbers. Now consider the two neighboring datasets $D=\{1,2,3,4,5,...n\}$ and $D'=D\setminus\{1\}$. Then during the execution on $D$ we have that $S_1=\{1,2,\dots,m\}, S_2=\{m+1,\dots,2m\}$, and so on, while during the execution on $D'$ we have that $S'_1=\{2,\dots,m+1\}, S'_2=\{m+2,\dots,2m+1\}$, and so on. That is, even though $D$ and $D'$ differ in only one point, and even though this point is ``used only once'', it generates differences in the output distribution of {\em all} of the iterations, and hence, does not allow us to avoid paying in composition.
\end{example}

A natural idea for trying to tackle this issue, which has been contemplated by several previous papers, is to add noise to the size of each slice \cite{DBLP:conf/focs/BunNSV15,KaplanLMNS20,DBLP:conf/nips/SadigurschiS21}. Specifically, the modification is that in Step~\ref{step:introm} of algorithm $\BBB$ we let $S_i$ denote the largest $(m_i+{\rm Noise})$ elements (for some appropriate noise distribution), instead of the largest $m_i$ elements. The hope is that these noises would ``mask'' the domino effect mentioned above. Indeed, in Example~\ref{example:intro}, if during the first iteration of the execution on $D$ the noise is bigger by one than the corresponding noise during the execution on $D'$, then we would have that only $S_1$ and $S'_1$ differ by one point (the point 1), and after that the two executions continue identically. Thus, the hope is that by correctly ``synchronizing'' the noises between the two executions (such that only the size of the ``correct'' set $S_i$ gets modified by 1), we can make sure that only one iteration is effected, and so we would not need to apply composition arguments.

Although very intuitive, analyzing this idea is not straightforward. The subtle issue here is that it is not clear how to synchronize the noises between the two executions. In fact, this appeared in several papers as an open question.\footnote{
We remark that the analysis of algorithm $\BBB$ (with the noises) becomes significantly easier when all the orderings throughout the execution are the same (as in the setting of Example~\ref{example:intro}). The more general setting (with different orderings) is more challenging, and it is necessary for our applications. We refer the reader to \cite{DBLP:conf/nips/SadigurschiS21} for a more elaborate discussion.} 
Furthermore, this issue (almost) exactly describes the bottleneck in the algorithm of \cite{KaplanLMNS20} for privately learning thresholds, capturing the reason for why their algorithm had sample complexity  $\tilde{O}\left((\log^*|X|)^{1.5}\right)$. We analyze this algorithm, and present the following result.
\begin{theorem}[Informal version of Theorem~\ref{theo:partition-private}]\label{thm:RSCintro}
For every $\hat{\delta}>0$, the RSC paradigm, as described in algorithm $\BBB$ above (with appropriate noises of magnitude $\approx\frac{1}{\eps}$), is $(O(\eps\log(1/\hat{\delta})),\hat{\delta}+2\tau\delta)$-DP.
\end{theorem}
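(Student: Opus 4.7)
My plan is to prove privacy by constructing a coupling of the internal randomness of $\BBB(D)$ and $\BBB(D')$ for neighboring datasets differing in a single point $x^*$; say $D = D' \cup \{x^*\}$. The idea is to use the noisy slice sizes (with noise of scale $\approx 1/\eps$) to synchronize the two executions so that the post-slice remaining datasets agree after a single ``critical'' iteration, after which the two executions can be coupled identically. The only privacy loss then comes from that critical iteration (both from the DP of $\calA_{i^*}$ and from the noise shift that realizes the synchronization), plus a small tail contribution from rare bad couplings.

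Concretely, let $N_i, N'_i$ be the noises added to $m_i$ in the two executions. Call iteration $i$ \emph{quiet} in the $D$-execution if $x^*$ is not among the top $m_i + N_i$ elements of the remaining data under $\prec^{(i)}$, and let $i^*$ be the first non-quiet iteration (if any). The coupling I would use is: $N'_i = N_i$ for $i \neq i^*$, and $N'_{i^*} = N_{i^*} - 1$. One checks that under this coupling, (i) for $i < i^*$ the slices agree, $S'_i = S_i$, and the remaining datasets differ exactly by $\{x^*\}$; (ii) at $i^*$, we get $S'_{i^*} = S_{i^*} \setminus \{x^*\}$ and the remaining datasets after this iteration become equal; (iii) for $i > i^*$, the remaining data and coupled noises coincide, so slices and adaptive parameters agree exactly. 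If $x^*$ is never selected in the $D$-execution, the two runs are coupled to be identical.

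The privacy accounting then runs as follows. The unit shift of the noise at $i^*$ has density ratio $e^{\pm\eps}$ pointwise (a sensitivity-$1$ Laplace/geometric shift), contributing $\eps$; the application of $\calA_{i^*}$ to $S_{i^*}$ vs.\ $S'_{i^*}$ (which differ in exactly one point) contributes $(\eps,\delta)$ by the DP of $\calA_{i^*}$; and the remainder of the execution is handled by post-processing, since after $i^*$ the datasets coincide and the noises are coupled identically. The coupling breaks down on a tail event (when $N_{i^*}$ is too small to admit a $-1$ shift, when the noises land in boundary regions due to ordering changes across iterations, or when $x^*$'s fate across adaptively chosen orderings requires larger noise shifts). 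These rare events concentrate into a ``good event'' of probability at least $1 - \hat{\delta}$ under noise of magnitude $\approx 1/\eps$. Within the good event, basic composition over up to $O(\log(1/\hat{\delta}))$ controlled coupling steps yields the $O(\eps \log(1/\hat{\delta}))$ bound on the privacy parameter; the $2\tau\delta$ slack in the $\delta$ parameter absorbs the $\delta$-failure of each $\calA_i$, doubled to cover the two directions of the neighboring relation.

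The main obstacle is the adaptivity: the orderings $\prec^{(i)}$ and the choice of $\calA_i$ at iteration $i$ can depend on the earlier outputs $r_1, \ldots, r_{i-1}$. Once the executions diverge at $i^*$, the parameters at subsequent iterations can differ between the $D$- and $D'$-executions, and the invariant ``remaining datasets differ only by $\{x^*\}$'' must be carefully maintained through the coupling. Making the coupling rigorous under this adaptive behavior---in particular, verifying that post-processing correctly absorbs all downstream differences, and showing that the tail event for coupling failure truly has probability at most $\hat{\delta}$ even when the orderings reposition $x^*$ arbitrarily across iterations---is the delicate technical heart of the proof, and is precisely the obstacle that the Reorder-Slice-Compute analysis is designed to overcome.
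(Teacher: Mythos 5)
Your high-level intuition (synchronize the two executions by shifting the noisy slice size at the iteration where the diff element first enters a slice) is the right starting point, but the deterministic coupling $N'_{i^*}=N_{i^*}-1$ is precisely the ``natural idea'' that the paper explains cannot be analyzed directly, and your proposal does not overcome the obstruction. The core problem is that a unit downward shift of a nonnegative noise variable is not a valid coupling at the boundary: the event $\{N_{i^*}=0 \text{ and iteration } i^* \text{ is critical}\}$ has probability $\Theta(\eps)$ conditioned on reaching a critical iteration, which is a \emph{constant-order} event, not a tail event that can be absorbed into $\hat{\delta}$. When the shift is unavailable, the two executions do \emph{not} synchronize --- the post-slice datasets still differ by one element (a new diff element, repositioned arbitrarily by the next adaptively-chosen ordering), so the ``critical iteration'' recurs. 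In fact one can show (by chaining the $(\eps,0)$-indistinguishability constraints on the reported slice size) that \emph{any} synchronization rule that keeps the published slice size $\eps$-indistinguishable between the two executions can succeed with probability at most a constant bounded away from $1$ per attempt; this is why the final privacy parameter is $O(\eps\log(1/\hat{\delta}))$ rather than $O(\eps)$. Your write-up is internally inconsistent on exactly this point: you first claim a single critical iteration suffices (which would give $O(\eps)$-DP), and then invoke ``composition over up to $O(\log(1/\hat{\delta}))$ controlled coupling steps'' without explaining where multiple steps come from or why their number is bounded by $O(\log(1/\hat\delta))$ with probability $1-\hat{\delta}$.

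The paper's proof fills this gap with two ingredients you are missing. First, it replaces the deterministic shift by an explicitly constructed \emph{randomized} synchronization mapping $R^b_\eps$ (Lemma~\ref{lemma:sync-map}) which, given the residual geometric noise, outputs a reported slice size together with a success bit, such that (a) the pair is $(\eps,0)$-indistinguishable between the two neighboring cases, and (b) synchronization succeeds with probability at least $1/6$ per attempt; the boundary mass at $N=0$ is handled inside this mapping by forcing failure there and compensating elsewhere. Second, the whole argument is packaged as a simulator/data-holder interaction (Lemma~\ref{lemma:intro-simulate}): quiet iterations are simulated with zero privacy cost, each non-quiet iteration is one call to the data holder costing $(3\eps,2\delta)$, and the number of calls is dominated by a geometric random variable, so it is $O(\log(1/\hat{\delta}))$ except with probability $\hat{\delta}$; basic composition over these calls then yields $(O(\eps\log(1/\hat{\delta})),\hat{\delta}+2\tau\delta)$-DP. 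To repair your argument you would need to (i) replace the deterministic shift by a randomized rule and prove its $(\eps,0)$-indistinguishability, (ii) maintain the ``datasets differ by one element'' invariant across failed synchronizations under adaptively changing orderings, and (iii) prove the geometric bound on the number of failures --- which is essentially reconstructing the paper's proof.
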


Note that the privacy parameter $\eps$ does {\em not} deteriorate with $\tau$, as it would when using standard composition theorems. This benefit is what, ultimately, allows us to present our improved algorithms for privately learning thresholds and for quasi-concave optimization. %
As the Reorder-Slice-Compute paradigm is generic, we hope that it would find additional applications in future work.

\subsubsection{A simulation based proof technique} \label{sec:simulation}
Towards analyzing our RSC paradigm, we put forward a new proof technique. 
While obvious in retrospect, and related to prior simulation-based approaches used for proving composition theorems for differential privacy~\cite{KairouzOV15,MurtaghV16}, we believe that our formulation of this proof technique is instructive.

Consider an algorithm $\AAA$ whose input is a dataset, and suppose that we would like to prove that $\AAA$ is DP. To do this, in the proof technique we propose, we design two interactive algorithms: a {\em simulator} $\SSS$ and a {\em data holder} $H$ with the following properties.
The simulator is given two neighboring datasets $D^0$ and $D^1$ but does not know which of these two datasets is the actual input. The task of the simulator is to simulate %
the computation of $\AAA$ on the actual input dataset $D^b$.
The {\em data holder} has, in addition to $D^0,D^1$,  access to the private bit $b$ (and therefore knows the identity of the actual dataset $D^{b}$). The simulator attempts to perform as much of the computation as they can without accessing the data holder. That is, ideally, the data holder is queried only when it is necessary for a  faithful simulation of $\AAA$ on $D^b$.

The privacy cost of the simulation is with respect to the leakage of the private bit $b$ during the interaction between the simulator and the data holder.  Formally,

\begin{lemma}\label{lemma:intro-simulate}
Let $\AAA$ be an algorithm whose input is a dataset. If there exist a pair of interactive algorithms $\SSS$ and $H$ satisfying the following 2 properties, then algorithm $\AAA$ is $(\eps,\delta)$-DP.
\begin{enumerate}
    \item For every two neighboring datasets $D^0,D^1$ and for every bit $b\in\{0,1\}$ it holds that $$\left(\SSS(D^0,D^1)\leftrightarrow H(D^0,D^1,b)\right)\equiv\AAA(D^b).$$ Here  
    $\left(\SSS(D^0,D^1)\leftrightarrow H(D^0,D^1,b)\right)$
    denotes the outcome of $\SSS$ after interacting with $H$.
    \item Algorithm $H$ is $(\eps,\delta)$-DP w.r.t.\ the input bit $b$.
\end{enumerate}
\end{lemma}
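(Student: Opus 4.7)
The plan is to reduce the differential privacy of $\AAA$ to the assumed privacy of $H$ via a post-processing argument on the transcript of the interaction. Fix neighboring datasets $D^0, D^1$ and a bit $b\in\{0,1\}$. By Property~1, the output of $\AAA(D^b)$ is distributed identically to the output $O_b$ that $\SSS$ produces at the end of its interaction with $H(D^0,D^1,b)$, so it suffices to argue that, with $D^0,D^1$ held fixed, $O_b$ is $(\eps,\delta)$-DP as a function of $b$.

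First I would formalize the interaction. Let $T_b$ denote the complete transcript of messages that $H$ sends to $\SSS$ during their interaction; the protocol proceeds in rounds in which $\SSS$ (using its own internal randomness and its knowledge of $D^0,D^1$) issues a query to $H$, and $H$ answers based on $(D^0,D^1,b)$ together with its own randomness. The queries of $\SSS$ are a (possibly adaptive) function of the messages received so far and do not depend on $b$ except through $T_b$ itself. Hence the interactive protocol can be recast as $\SSS$ playing the role of an arbitrary adaptive adversary against the mechanism $H$. Because $H$ is assumed $(\eps,\delta)$-DP w.r.t.\ $b$ and DP is preserved under adaptive post-processing, the distribution of the transcript $T_b$ (with $D^0,D^1$ fixed) is $(\eps,\delta)$-DP in $b$.

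Next I would argue that $O_b$ inherits this guarantee. Conditioned on the transcript $T_b$ and on $\SSS$'s internal coin tosses (which are sampled independently of $b$), the output $O_b$ is a deterministic function of $T_b$, $D^0$, and $D^1$. Therefore $O_b$ arises from $T_b$ by a randomized map that depends only on $D^0,D^1$ and on fresh randomness independent of $b$. Applying post-processing once more yields that $O_b$ is $(\eps,\delta)$-DP in $b$, and combining with Property~1 gives $\Pr[\AAA(D^0)\in F]\leq e^{\eps}\Pr[\AAA(D^1)\in F]+\delta$ for every event $F\subseteq Y$, which is exactly the definition of $(\eps,\delta)$-DP for $\AAA$.

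The one step that deserves care, and which I expect to be the main (though not deep) obstacle, is the justification that $(\eps,\delta)$-DP of $H$ indeed transfers to its full interactive transcript against an adaptive $\SSS$. This is essentially the standard observation behind adaptive composition for differential privacy and can be made rigorous by an induction on the number of rounds of interaction, modelling each round as a DP mechanism whose input is $b$ and whose output is the next message of $H$. No new technique is needed beyond this routine adaptation.
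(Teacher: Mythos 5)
Your argument is correct and is exactly the justification the paper has in mind: the paper states that ``the proof of this lemma is immediate'' and gives no further detail, and your post-processing argument (the output of $\SSS$ is a randomized function, independent of $b$, of the view it receives from $H$, which is $(\eps,\delta)$-DP in $b$ by hypothesis) is the standard way to make that precise. The only remark worth adding is that the step you flag as the ``main obstacle'' is not really one: for an interactive algorithm, being $(\eps,\delta)$-DP w.r.t.\ $b$ is by definition a statement about the full view of the interacting party, so no round-by-round induction or composition is needed beyond reading Property~2 correctly.
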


The proof of this lemma is immediate. Nevertheless, embracing its terminology can simplify privacy proofs. The potential benefit comes from the fact that in order to prove that $\AAA$ is DP, we design two other algorithms that are ``working together'' in order to simulate $\AAA$, {\em under the assumption that both of them know the two neighboring datasets}, where $H$ is trying to ``steer'' $\SSS$ towards simulating $\AAA(D^b)$.

Let us elaborate on the benefits of this proof technique in the context of our RSC paradigm (specified in algorithm $\BBB$ above, with noisy slice sizes $m_i$). Fix two neighboring datasets $D,D'$. We design a simulator that, in every iteration $i\in[\tau]$, samples the noisy slice size $m_i$, and checks if the resulting slices $S_i,S'_i$ (corresponding to $D,D'$) are identical. If so, then the simulator does not need to access the data holder, and therefore does not incur a privacy cost. When the simulator encounters a step where $S_i\neq S'_i$, it calls the data holder to perform the computation. When called, in addition to doing the computation, the data holder also attempts to ``synchronize'' the two executions, and reports back to $\SSS$ if it succeeded. Once synchronization is successful, the simulator can proceed without further assistance from the data holder, and no more privacy cost is incurred. We show that, when done correctly, the number of iterations in which we incur a privacy cost is constant in expectation and with probability at least $1-\hat{\delta}$ it is at most $O(\log(1/\hat{\delta}))$.

\subsubsection{Our new upper bound for privately learning thresholds}

To obtain our (nearly tight) upper bound on the sample complexity of privately learning thresholds, we present a new analysis (and a simplification) of the algorithm of \cite{KaplanLMNS20}, which is made possible using our new RSC paradigm. We next survey the algorithm of \cite{KaplanLMNS20} and explain our improvements. We stress that this presentation is oversimplified. Any informalities made herein will be removed in the sections that follow. 

\paragraph{The interior point problem \cite{DBLP:conf/focs/BunNSV15,KaplanLMNS20}.}
Rather than directly designing an algorithm for learning thresholds, the algorithms of \cite{DBLP:conf/focs/BunNSV15,KaplanLMNS20} (as is ours) are stated for the simpler {\em interior point problem}: Given a dataset $D$ containing (unlabeled) elements from an ordered domain $X$, the interior point problem asks for an element
of $X$ between the smallest and largest elements in $D$. Formally,
\begin{definition}\label{def:IPPalg}
An algorithm $\AAA$ solves the interior point problem (IP) over a domain $X$ with sample
complexity $n$ and failure probability $\beta$ if for every dataset $D\in  X^n$,
$$
\Pr[\min D \leq \AAA(D) \leq \max D] \geq 1 - \beta,
$$
where the probability is taken over the coins of $\AAA$. 
\end{definition}

Note that this problem is trivial without privacy constraints (as any input point is a valid output). Nevertheless, solving it with differential privacy has proven to be quite challenging. In particular, 
as Bun et al.~\cite{DBLP:conf/focs/BunNSV15} showed, privately solving this problem is {\em equivalent} to privately learning thresholds (properly).\footnote{
This equivalence is very simple: Given a private algorithm for the IP problem, we can use it to learn thresholds by identifying an interior point of the input points that reside around the decision boundary. For the other direction, given an unlabeled dataset (an instance to the IP problem), sort it, label the first half of the points as 1 and the other half as 0, and use a private algorithm for thresholds in order to identify a decision boundary. This decision boundary is a valid output for the IP problem.} Thus, all of the aforementioned upper and lower bounds w.r.t.\  thresholds apply also to the IP problem, and it suffices to study this simpler problem 
in order to present upper and  lower bounds for privately learning thresholds (properly).

\paragraph{The algorithm of \cite{KaplanLMNS20}.}
Kaplan et al.~\cite{KaplanLMNS20} presented an algorithm, called \texttt{TreeLog}, for privately solving the IP problem.
At a high level, \texttt{TreeLog} works by embedding the input elements from the
domain $X$ in a smaller domain of size $\log|X|$, while guaranteeing that every interior
point of the embedded elements can be (privately) translated into an interior point of the input
elements. The algorithm is then applied recursively to identify an interior point of the embedded
elements. \texttt{TreeLog} can be informally (and inaccurately) described as follows. %

\vspace{10px}
\noindent\fbox{%
    \parbox{\textwidth}{%
{\bf Input:} Dataset $D\in X^n$ containing $n$ points from the ordered domain $X$. 
\begin{enumerate}[topsep=3px,itemsep=0px]

\item Let $T$ be a binary tree with $|X|$ leaves, where every leaf is identified with an element of $X$.

\item\label{step:trim} For a {\em trimming parameter} $t\approx\frac{1}{\eps_0}\log\frac{1}{\delta}$, let $D_{\rm left}$ and $D_{\rm right}$ denote the $t$ smallest and $t$ largest elements in $D$, respectively. Let $\hat{D}=D\setminus(D_{\rm left}\cup D_{\rm right})$.

\item Assign {\em weights} to the nodes of $T$, where the {\em weight} of a node $u$ is the number of input points (from $\hat{D}$) that belong to the subtree of $T$ rooted at $u$.

\item\label{step:path} Identify a path $\pi$ from the root of $T$ to a node $u_{\pi}$ with weight $t$ (in a very particular way).

\item\label{step:embed} Use the path $\pi$ to embed the input points in a domain of size $\log|X|$, where a point $x\in \hat{D}$ is mapped to the {\em level of the tree} $T$ at which it ``falls off'' the path $\pi$. That is, $x$ is mapped to the level of the last node $u$ in $\pi$ s.t.\ $x$ belongs to the subtree rooted at $u$. Points belonging to the subtree rooted at $u_{\pi}$ (the last node in $\pi$) are not embedded (there are $t$ such points). %

\item\label{step:rec} Recursively identify an interior point $\ell^*\in[\log|X|]$ of the $(n-3t)$ embedded points.

\item Let $u^*$ be the node at level $\ell^*$ of $\pi$. Privately choose 
between the left-most and the right-most descendants of $u^*$; one of them is an interior point w.r.t.\ the dataset $D_{\rm left}\cup D_{\rm right}$.

\end{enumerate}
    }%
} 
\vspace{5px}

To see that the algorithm returns an interior point, suppose that (by induction) the point $\ell^*$ from Step~\ref{step:rec} is indeed an interior point of the embedded points. This means that at least one embedded point is smaller than $\ell^*$ and at least one embedded point is larger than $\ell^*$ (for simplicity we ignore here the case where these points might be {\em equal} to $\ell^*$). This means that at least one input point $x_{\rm before}\in\hat{D}$ falls off the path $\pi$ {\em before} level $\ell^*$ and at least one input point $x_{\rm after}\in\hat{D}$ falls off the path $\pi$ {\em after} level $\ell^*$. Observe that since $x_{\rm before}$ falls off $\pi$ before level $\ell^*$, is does {\em not} belong to the subtree rooted at $u^*$ (the node at level $\ell^*$ of $\pi$). On the other hand, $x_{\rm after}$, which falls off $\pi$ after level $\ell^*$, {\em does} belong to the subtree rooted at $u^*$. That is, the subtree rooted at $u^*$ contains some, but not all, of the input points (from $\hat{D}$). Hence, either the left-most descendant of $u^*$, denoted as $u^*_{\rm left\text{-}most}$, or its right-most descendant, $u^*_{\rm right\text{-}most}$, must be an interior point of $\hat{D}$. As $D_{\rm left}\cup D_{\rm right}$ contains $t$ points which are bigger than any point in $\hat{D}$ as well as $t$ points which are smaller than any point in $\hat{D}$, we get that one of $u^*_{\rm left\text{-}most},u^*_{\rm right\text{-}most}$ is a ``deep'' interior point w.r.t.\ $D_{\rm left}\cup D_{\rm right}$ (with at least $t$ points from each side of it). Choosing such a ``deep'' interior point (out of 2 choices) can be done using standard differentially private tools.

The privacy analysis of \texttt{TreeLog} is more challenging. The subtle point is that the path $\pi$ selected in Step~\ref{step:path} is itself {\em highly non private}. Nevertheless, \cite{KaplanLMNS20} showed that \texttt{TreeLog} is differentially private. Informally, the idea is as follows. Fix two neighboring datasets $D$ and $D'=D\cup\{z\}$ and suppose that the same path $\pi$ is selected during both the execution on $D$ and the execution on $D'$. In that case, the embedded datasets generated by the two executions are neighboring, since except for the additional point $z$, all other points are embedded identically. If this is indeed the case, and assuming by induction that \texttt{TreeLog} (with one iteration less) is differentially private, then the recursive call in Step~\ref{step:rec} satisfies privacy. 
The issue is that the path selected by \texttt{TreeLog} is {\em data dependent} and it could be very different during the two executions. Nevertheless, \cite{KaplanLMNS20} showed that when this path is chosen correctly, then it still holds that neighboring datasets are mapped into neighboring embedded datasets,\footnote{More accurately, the distributions on embedded datasets during the two executions are ``close'' in the sense that there is a coupling between neighboring embedded datasets which have similar probability mass.}
which suffices for the privacy analysis. Importantly, for this argument to go through, it is essential that we do {\em not} embed the ``last'' 
$t$ points that fall off the path $\pi$ (the points that belong to the subtree rooted at $u_{\pi}$).

As the domain size reduces logarithmically with each recursive call, after $\log^*|X|$ steps the domain size is constant, and the recursion ends. (This base case, where the domain size is constant, can be handled using standard DP tools.) So there are $\log^*|X|$ steps throughout the execution. Hence, to obtain $(\eps,\delta)$-DP overall, \cite{KaplanLMNS20} applied composition theorems for DP and used a privacy parameter of $\approx\frac{\eps}{\sqrt{\log^*|X|}}$ in every step. This means that we trim $\approx\frac{\sqrt{\log^*|X|}}{\eps}$ points with each iteration (in Steps~\ref{step:trim} and~\ref{step:embed}) and we thus need at least $(\log^*|X|)^{1.5}$ input points in order to make it through to the end of the recursion.

\paragraph{Leveraging the RSC paradigm to obtain our upper bound.} As an application of our RSC paradigm, we present a significantly improved analysis for algorithm \texttt{TreeLog}. Using the terminology of the RSC paradigm, we observe that each iteration of \texttt{TreeLog} cuts out three ``slices'' from the data: Two slices in Step~\ref{step:trim} (the $t$ smallest and $t$ largest elements) and one slice in Step~\ref{step:embed} (the last $t$ points along the path which are not embedded). We show that the algorithm can be written in terms of the RSC paradigm, where every slice is ``used only once''. As a result, we get that it suffices to use a privacy parameter of (roughly) $\eps/\log\frac{1}{\delta}$ for each step of the recursion, while still ending up with $(\eps,\delta)$-DP overall. So now we only trim $\approx\frac{1}{\eps}\log\frac{1}{\delta}$ points in each step, and hence  $\approx\log^*|X|$ points suffice in order to make it till the end of the recursion. 

We stress that this is non-trivial to do without the RSC paradigm. In particular, one of the challenges here is that the embedding used by \texttt{TreeLog} is {\em not} order preserving, and the input points are ``shuffled'' again and again throughout the execution. As a result, there is no a priori order by which we can define the slices throughout the execution. In fact, to make this work, we need to introduce several technical modifications to the \texttt{TreeLog} algorithm, and to generalize the RSC paradigm to support it.

\subsubsection{Another application of the RSC paradigm: axis-aligned rectangles}

We briefly describe another application of our RSC paradigm. Consider the class $C$ of all axis-aligned rectangles over a finite $d$-dimensional grid $X^d\subseteq\reals^d$. A concept in this class could be thought of as the product of $d$ intervals, one on each axis.
Recently, Sadigurschi and Stemmer \cite{DBLP:conf/nips/SadigurschiS21} presented a private learner for this class with sample complexity $\tilde{O}(d\cdot {\rm IP}(X) )$, where ${\rm IP}(X)$ is the sample complexity needed for privately solving the interior point problem over $X$. As a warmup towards presenting their algorithm, \cite{DBLP:conf/nips/SadigurschiS21} considered the following simple algorithm for this problem.

\vspace{10px}
\noindent\fbox{%
    \parbox{\textwidth}{%

{\bf Input:} Dataset $D\in (X^d\times\{0,1\})^n$ containing $n$ labeled points from $X^d$. 

{\bf Tool used:} An algorithm $\AAA$ for the IP problem over $X$ with sample complexity $m$. 
\begin{enumerate}[topsep=3px,itemsep=0px]

\item Let $S\subseteq D$ denote set of all positively labeled points in $D$ (we assume that there are many such points, as otherwise the all-zero hypothesis is a good output).

\item For every axis $i\in[d]$:

\begin{enumerate}[topsep=-5px,itemsep=0px]
    \item Project the points in $S$ onto the $i$th axis.
    
    \item Let $A_i$ and $B_i$ denote the   smallest $(m+{\rm Noise})$ and the largest $(m+{\rm Noise})$ projected points, respectively, without their labels.
    
    \item Let $a_i\leftarrow \AAA(A_i)$ and $b_i\leftarrow\AAA(B_i)$.
    
    \item Delete from $S$ all points (with their labels) corresponding to $A_i$ and $B_i$.

\end{enumerate}

\item Return the axis-aligned rectangle defined by the intervals $[a_i, b_i]$ at the different axes.

\end{enumerate}
    }%
} 
\vspace{5px}

The utility analysis of this algorithm is straightforward. As for the privacy analysis,
observe that there is a total of $2d$ applications of the interior point algorithm $\AAA$ throughout the execution. Hence, using composition theorems, it suffices to run algorithm $\AAA$ with a privacy parameter of roughly $\eps/\sqrt{d}$. However, this would mean that $m$ (the sample complexity of $\AAA$) is at least $\sqrt{d}/\eps$, and hence, each iteration deletes $\approx\sqrt{d}/\eps$ points from the data and we need to begin with  $|S|\gg d^{1.5}/\eps$ input points. So this only results in an algorithm with sample complexity $\tilde{O}\left( d^{1.5}\cdot{\rm IP}(X)\right)$.

To overcome this, \cite{DBLP:conf/nips/SadigurschiS21} designed a more complex algorithm with sample complexity linear in $d$. They left open the possibility that a better analysis of the simple algorithm outlined above could also result in near optimal sample complexity. Indeed, this follows immediately from our RSC paradigm: Every iteration reorders the data points along a different axis, takes out a ``slice'', and computes an interior point of this slice. Hence, by Theorem~\ref{thm:RSCintro}, it suffices to run $\AAA$ with a privacy parameter of $\approx\eps/\log(\frac{1}{\delta})$, which avoids the unnecessary blowup of $\sqrt{d}$ in the sample complexity.

\begin{corollary}
There is an approximate private algorithm for (properly) learning axis-aligned rectangles over a finite $d$-dimensional grid $X^d\subseteq\reals^d$ with sample complexity $\widetilde{O}(d\cdot \log^* |X|)$.
\end{corollary}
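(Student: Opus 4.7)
The plan is to apply Theorem~\ref{thm:RSCintro} directly to the simple two-pass algorithm reproduced in the excerpt, replacing the $\sqrt{2d}$ loss that ordinary advanced composition would incur by the $O(\log(1/\hat\delta))$ loss afforded by the Reorder-Slice-Compute paradigm. The utility analysis carries over verbatim from \cite{DBLP:conf/nips/SadigurschiS21}, so essentially the entire argument is a privacy reduction.

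First I would recast the algorithm in the RSC template. There are $\tau = 2d$ rounds, indexed by pairs $(i,\mathrm{side})$ for $i\in[d]$ and $\mathrm{side}\in\{\text{left},\text{right}\}$. In round $(i,\text{left})$, the ordering $\prec^{(i,\text{left})}$ is the standard order on the projection of the working set $S$ to coordinate $i$, the (noisy) slice size is $m + \mathrm{Noise}$, and the DP algorithm $\AAA_{i,\text{left}}$ is the interior-point algorithm $\AAA$ over $X$, returning the lower endpoint $a_i$. Round $(i,\text{right})$ is symmetric, using the reversed order and returning $b_i$. After every round the slice is removed from $S$, so each input point appears in at most one slice, matching the RSC template. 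Replacing a single labeled input changes at most one element of the initial positive set $S$, so neighboring inputs yield neighboring working datasets, and the hypotheses of Theorem~\ref{thm:RSCintro} are satisfied.

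Next I would plug in budgets. Setting the per-round parameters to $\eps_0 = \Theta(\eps/\log(1/\hat\delta))$ and $\delta_0 = \delta/(8d)$, and taking $\hat\delta = \delta/2$, Theorem~\ref{thm:RSCintro} certifies overall $(\eps,\delta)$-DP. The crucial feature is that $\eps_0$ is independent of $d$, so each interior-point call needs only $m = {\rm IP}_{\eps_0,\delta_0}(X) = \widetilde O(\log^* |X|)$ samples by Theorem~\ref{thm:IPPintro}. Multiplying by the $2d$ slices consumed yields the claimed bound $2dm = \widetilde O(d\log^*|X|)$ on the number of positively-labeled points required, and a standard Chernoff/VC argument lifts this to the same asymptotic bound on the total labeled sample.

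The main obstacle is not utility, but cleanly fitting the algorithm into the RSC template despite the fact that the orderings $\prec^{(i,\cdot)}$ change from round to round. This is exactly the general setting Theorem~\ref{thm:RSCintro} was designed to handle, so once the alignment is done the privacy conclusion is immediate; I expect the bulk of the formal work to amount to verifying that the noise distribution on slice sizes matches the one assumed by the theorem, and to specifying how the passage from the labeled dataset $D$ to the unlabeled positive subset $S$ preserves the neighboring relation.
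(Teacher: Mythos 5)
Your proposal is correct and follows essentially the same route as the paper: cast the $2d$ passes of the Sadigurschi--Stemmer warm-up algorithm as RSC rounds (reorder by the current axis, slice off the $m+\mathrm{Noise}$ extreme points, run the interior-point algorithm on the slice, delete it), then invoke Theorem~\ref{thm:RSCintro} so that each interior-point call runs with privacy parameter $\approx\eps/\log(1/\hat\delta)$ independent of $d$, giving $m=\widetilde O(\log^*|X|)$ per slice and $\widetilde O(d\log^*|X|)$ overall. The only caveats are the routine ones you already flag (matching the geometric noise of Algorithm~\ref{algo:partition} and checking that sorting by a coordinate is an adjacency-preserving map), which the paper likewise treats as immediate.
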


\subsubsection{Our results for quasi-concave optimization}

As we mentioned, Bun et al.~\cite{DBLP:conf/focs/BunNSV15} showed that privately learning thresholds is equivalent to privately solving the interior point problem. To obtain our results for quasi-concave optimization, we present a {\em stronger equivalence} in the context of quasi-concave optimization. More specifically, we show that private quasi-concave optimization is equivalent to solving the interior point problem {\em with ``amplified'' privacy guarantees}.\footnote{It is not that the privacy parameters are amplified, rather the resulting algorithm for the interior point problem satisfies a stronger (stringent) variant of differential privacy.} We leverage these amplified privacy guarantees to strengthen the lower bound of \cite{DBLP:conf/focs/BunNSV15} for the interior point problem, thereby obtaining our lower bound of $\Omega(2^{\log^* T})$ for privately optimizing quasi-concave functions. We also leverage this equivalence in the positive direction, and design a suitable variant of our DP algorithm for the IP problem (with ``amplified'' privacy guarantees), thereby obtaining our upper bound of $\tilde{O}(2^{\log^* T})$ for privately optimizing quasi-concave functions.

\subsection{Paper structure}
In Section~\ref{sec:RSCprivacy} we describe and analyze the Reorder-Slice-Compute paradigm.  In Section~\ref{sec:bettertreelog} we present and analyze our algorithm for privately learning thresholds.  In Section~\ref{sec:quasiconcave} we present our results for quasi-concave optimization.

\section{Reorder-Slice-Compute}\label{sec:RSCprivacy}

 In this section we introduce the Reorder-Slice-Compute Paradigm.

\paragraph*{Notation.}  For two reals $a,b\ge 0$, we write $a\approx_\eps b$ if $e^{-\eps} b \le a \le e^\eps b$.
A {\em dataset} $D\in X^n$ can be viewed as a multiset of elements from  $X$: The private algorithms we consider are applied to the respective multiset. 
We refer to an ordered multiset as a \emph{list}.
We consider two multisets or two lists $D,D'$ \emph{adjacent}, if and only if one of them (say, $D'$) can be obtained by inserting a single element into the other.  We say that a
deterministic mapping $E:X^*\to X^*$ from multisets to lists is {\em adjacency preserving}, if for every pair of adjacent data sets $D,D'\cup \{x\}$, $E(D)$ and $E(D')$ are equal or adjacent lists.  To simplify the presentation, we will sometimes treat lists as multisets and apply set operations on both multisets and lists ($D\cup\{x\}$ is the multiset $D$ with the multiplicity of $x$ incremented by $1$).

\subsection{The Reorder-Slice-Compute paradigm}\label{sec:RSC}

\SetKwFunction{RSC}{ReorderSliceCompute}
    
Algorithm~\ref{algo:partition} (\RSC) describes our paradigm. 
The algorithm performs $\tau$ adaptively-chosen computations over disjoint slices of an input dataset $D$. Each computation $i\in[\tau]$ is specified by a tuple
$(m_i, \calA_i, E_i)$:
an $(\varepsilon,\delta)$-DP algorithm $\mathcal{A}_i$, a specified approximate slice size (number of elements) $m_i\in\mathbb{N}$, and an adjacency preserving mapping $E_i:X^*\to X^*$ from data sets to lists.\footnote{One example is where $E_i$ is a sorter that receives the data set $D\in X^n$ and a specified order $\prec$ on $X$ and returns the sorted list of $D$ by $\prec$ (as described in the intro). Our paradigm  allows for more  general data processing than sorting.  This flexibility enables us to express a private algorithm for the interior point problem in this paradigm.}

Given the tuple $(m_i, \calA_i, E_i)$, we use $E_i$ to process the input data set $D_{i-1}$, and select the first $\hat{m}_i := m_i +\Geom(1 - e^{-\eps})$ elements of the list $E_i(D_{i-1})$ into the {\em slice} $S_i$. Then, we apply $\mathcal{A}_i$ to $S_i$, publish the result, and set $D_{i}$ to be the (multiset of the) elements of the list $E_i(D_{i-1})$ with the prefix $S_i$ removed.

The algorithm includes an optional {\em delayed-compute} phase, which follows the slicing phase. The slices $(S_i)_{i=1}^{\tau}$ are kept internally. The algorithm then adaptively receives a slice number $i$ and an $(\eps,\delta)$-DP algorithm $\calA'_i$, and publishes $\calA'_i(S_i)$. Note that each slice is called at most once and the choice of the next slice and the selected algorithm may depend on results from prior slices.

We consider the total privacy cost of \RSC. Intuitively, we might hope for it to be close to 
$(\varepsilon,\delta)$-DP, as each data element contributes to at most one slice. The slices, however, are selected from $D$ in an adaptive and dependent manner. We can bound the total privacy cost using DP composition, but this results in a factor of $\tau$ or $\sqrt{\tau}$ (with advanced composition) increase in the privacy cost. A surprisingly powerful tool is our following theorem that avoids such dependence on $\tau$:

\begin{algorithm2e}[H]
\LinesNumbered
    \caption{Reorder-Slice-Compute (RSC)}
    \label{algo:partition}
    
    \SetKwProg{Fn}{Function}{:}{}
    \SetKwProg{Pg}{Program}{:}{\KwRet}
    
    \DontPrintSemicolon
    
    \SetKwFunction{FSC}{SelectAndCompute}
    
    \KwIn{
         Dataset $D = \{x_1,\dots, x_n\}\in X^n$. Integer $\tau\ge 1$. Privacy parameters $0<\eps,\delta < 1$.
    }
    
    \SetKwProg{Init}{Initialize}{:}{}
    
    \Fn{\FSC{$D$, $m$, $\calA$, $E$}}{
        $\hat{m} \gets m + \Geom(1 - e^{-\eps})$\tcp*{$\Geom(p)$ denotes the geometric distribution with parameter $p$}
        $S\gets $ the first $\hat{m}$ elements in $E(D)$ \;
        $D \gets E(D) \setminus S$\;
        $r \gets \calA(S)$ \;
        \KwRet{$(D,S,r)$}\; %
    }
    \Pg{}{
\tcp{Slice and Compute Phase:}
        $D_{0}\gets D$ \;
        \For{$i=1,\dots, \tau$}{
            {\bf Receive} $(m_i, \calA_i, E_i)$ where $m_i\in \mathbb{N}$, an $(\eps,\delta)$-DP algorithm $\calA_i$, and an adjacency-preserving mapping $E_i:X^*\to X^*$ from multisets to lists\;
            $(D_i,S_{i}, r_i)\gets $ \FSC{$D_{i-1}$, $m_i$, $\calA_i$, $E_i$ } \;
            \textbf{Publish} $r_i$ \;
        }
\tcp{Delayed Compute Phase:}
    $I\gets [\tau]$\;
    \While{$I$ is not empty}
        {   
            {\bf Receive} $(i, \calA)$, where $i\in I$ and $\calA$ is an $(\eps,\delta)$-DP algorithm\;
            $I \gets I\setminus\{i\}$\;
            \textbf{Publish} $\calA(S_i)$\;
            
        }
    }
\end{algorithm2e}

\begin{theorem}[Privacy of \RSC]\label{theo:partition-private}
For every $\hat{\delta} > 0$, Algorithm~\ref{algo:partition} is $(O(\eps\log(1/\hat{\delta})),\hat{\delta}+2\tau\delta))$-DP.
\end{theorem}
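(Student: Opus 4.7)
The plan is to apply the simulation-based technique of Lemma~\ref{lemma:intro-simulate}. Fix two adjacent datasets $D^0,D^1$, say with $D^1=D^0\cup\{y\}$, and build a simulator $\SSS$ together with a data holder $H$ that jointly reproduce the distribution of $\AAA(D^b)$ while confining the leakage of $b$ to a small number of rounds. The simulator maintains two virtual states $D^0_i, D^1_i$; in round $i$ it draws $\hat{m}_i\sim m_i+\Geom(1-e^{-\eps})$ and uses $E_i$ to extract the candidate slices $S^0_i, S^1_i$ from the two states. A round is called \emph{good} if $S^0_i=S^1_i$, and \emph{bad} otherwise. In good rounds $\SSS$ runs $\AAA_i$ itself on the shared slice and publishes the answer. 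In bad rounds $\SSS$ forwards the candidate slices to $H$, who knows $b$ and returns $\AAA_i(S^b_i)$. The delayed-compute phase is handled in the same spirit: each stored slice is either good (and the corresponding query can be answered locally) or bad (and is forwarded to $H$).

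The adjacency-preserving property of every $E_i$ ensures the invariant that $D^0_i$ and $D^1_i$ always differ by at most one element, and in particular the candidate slices differ by at most two elements. The heart of the argument is a coupling, tailored to the geometric noise, that has two properties: (a) in a bad round, the two candidate slices differ by exactly one element (so that $H$'s response is $(\eps,\delta)$-DP in $b$ by the hypothesis on $\AAA_i$); and (b) with some constant probability, a bad round synchronizes the two virtual executions (forces $D^0_i=D^1_i$), after which every subsequent round is automatically good. I would implement the coupling by shifting the slice size by one in the $D^1$-execution when, and only when, the extra element $y$ would otherwise straddle the cut; the ratio identity $\Pr[\hat{m}=k]=e^{\eps}\Pr[\hat{m}=k+1]$ for $k\ge m_i$ of the geometric distribution is exactly what lets this shift be absorbed with an $\eps$-cost and a small mass of failures charged to $\hat\delta$.

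Under this coupling, the number $K$ of bad rounds is stochastically dominated by a geometric random variable with constant parameter, so $\Ex[K]=O(1)$ and $\Pr[K>c\log(1/\hat\delta)]\le\hat\delta$ for a suitable constant $c$. Conditioned on $K\le c\log(1/\hat\delta)$, $H$ performs at most $2K$ DP calls in total (bad rounds in the slice phase plus queries hitting bad slices in the delayed-compute phase), each on inputs differing by a single element. Basic composition then yields that on this event $H$ is $(O(\eps\log(1/\hat\delta)),2K\delta)$-DP in $b$. Absorbing the $\hat\delta$ failure mass from the tail bound on $K$, and using the crude deterministic bound $K\le\tau$ to replace $2K\delta$ by $2\tau\delta$, yields the claimed $(O(\eps\log(1/\hat\delta)),\hat\delta+2\tau\delta)$-DP guarantee for $\AAA$ via Lemma~\ref{lemma:intro-simulate}.

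The main obstacle will be constructing the coupling: one must verify both that the joint process reproduces the true distribution of $\AAA(D^b)$ (up to a failure mass of $\hat\delta$) and that the synchronization probability per bad round is bounded below by a constant independent of the adversarially chosen orderings $E_i$, slice sizes $m_i$, and positions of the extra element in $E_i(D^1_{i-1})$. This constancy is precisely where the memoryless property of the geometric distribution is used (and is the reason geometric noise rather than, say, Laplace, is the right choice in Algorithm~\ref{algo:partition}); once the coupling is in hand, the remainder of the proof reduces to routine composition and a union bound over the $2\tau$ invocations of the $(\eps,\delta)$-DP subroutines.
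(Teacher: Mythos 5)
Your high-level plan matches the paper's: fix adjacent datasets, build a simulator/data-holder pair, call the data holder only on ``bad'' rounds, and argue that a constant-probability synchronization event bounds the number of bad rounds by a geometric random variable, so that basic composition over $O(\log(1/\hat\delta))$ rounds gives the claim.

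There is, however, a genuine gap in the part you acknowledge to be the crux: the coupling itself. Two issues are worth flagging. First, your claim that in a bad round ``the two candidate slices differ by exactly one element'' is not correct in general. Consider the unsynchronized case: the slice for $b=0$ is the first $\hat q$ elements of $E_i(D_{i-1})$ while the slice for $b=1$ is the first $\hat q$ elements of $E_i(D_{i-1}\cup\{x\})$; the second replaces one element of the first by the diff element $x$, so they differ in \emph{two} elements. Only when the coupling forces a one-unit shift in the slice size (the synchronized case) do the slices differ by a single element. The per-call DP cost is thus $(2\eps,2\delta)$ rather than $(\eps,\delta)$, which your accounting does not reflect; it is a constant-factor slippage, but it means the invariant you propose to maintain does not actually hold. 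Second, your coupling description is too loose to carry the weight you place on it, and it misattributes the $\hat\delta$ term. You suggest ``shifting the slice size by one in the $D^1$-execution when, and only when, the extra element would otherwise straddle the cut'' and absorbing a ``small mass of failures charged to $\hat\delta$.'' In the paper's construction (the synchronization mapping $R^0_\eps, R^1_\eps$ of Lemma~\ref{lemma:sync-map}), the shift is applied randomly with carefully chosen probabilities depending on the realized geometric offset, calibrated so that the pair $(\hat q,\beta)$ leaked to the simulator is \emph{exactly} $(\eps,0)$-indistinguishable between $b=0$ and $b=1$ — there is no failure mass at this stage. The $\hat\delta$ in the final bound comes solely from the tail of the number of bad rounds, not from the coupling. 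The hard part of the proof is precisely constructing these shift probabilities so that three constraints hold simultaneously: (i) the output is supported on $\{(m,0),(m,1)\}$ versus $\{(m,0),(m-1,1)\}$, (ii) the output distribution on geometric input is $(\eps,0)$-indistinguishable, and (iii) the synchronization probability is bounded below by a universal constant (the paper gets $1/6$). Deciding ``shift iff $y$ straddles the cut'' is a deterministic function of $b$ and the position of the diff element, and would leak $b$ essentially for free; the randomization is essential and its parameters are non-trivial. Your proposal also leaves out the memoryless-redraw step — when the holder is called, it redraws the geometric noise conditioned on $\hat m_i\ge\max(p,m_i)$ — which is what lets the holder's sampled size be an independent fresh geometric, a prerequisite for applying the synchronization mapping at all.
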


We can consider an extension of \RSC where we allow for up to $k$ compute calls for each slice.  The calls can be made at different points and adaptively, the only requirement is that they are made after the slice is finalized. Our analysis implies the following: 

\begin{corollary} [Privacy of \RSC with $k$  computes per slice]\label{coro:kdelayedcomputeate}\label{coro:kdelayed}
For every $k\geq 1$ and $\hat{\delta} > 0$, an extension of 
Algorithm~\ref{algo:partition} that allows for up to $k$ computations on each slice is
$(O(\eps(k+\log(1/\hat{\delta}))),\hat{\delta}+2k\tau\delta))$-DP.
\end{corollary}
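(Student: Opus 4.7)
The plan is to derive Corollary~\ref{coro:kdelayed} by refining the simulation-based analysis behind Theorem~\ref{theo:partition-private} to accommodate the additional compute calls per slice. The slicing structure of the extended algorithm is identical to that of Algorithm~\ref{algo:partition}: only the number of adaptively chosen $(\eps,\delta)$-DP mechanisms applied to each finalized slice has grown from at most two (one in each phase of the base algorithm) to at most $k$. Since synchronization in the simulator/data-holder framework of Lemma~\ref{lemma:intro-simulate} is driven entirely by the slicing and not by the computes, the slice-level analysis of Theorem~\ref{theo:partition-private} applies unchanged: for neighboring $D^0, D^1$ the number of ``bad'' iterations (those with $S_i^0 \ne S_i^1$) is $O(1)$ in expectation and at most $O(\log(1/\hat\delta))$ with probability at least $1 - \hat\delta$; after synchronization, all remaining slices coincide on the two executions.

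Given the synchronization bound, I would then track the data holder's leakage about the identifying bit $b$. For each bad slice the data holder is invoked on up to $k$ $(\eps,\delta)$-DP mechanisms applied to the adjacent pair $S_i^0, S_i^1$, which by basic composition contributes $(k\eps, k\delta)$-DP per bad slice; slices that have already been synchronized can be answered by the simulator itself without any additional cost. Summing $\delta$-contributions across the at most $2k\tau$ potentially instantiated compute events gives the failure term $\hat\delta + 2k\tau\delta$ from the corollary.

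The main obstacle is obtaining the \emph{additive} dependence $\eps(k+\log(1/\hat\delta))$ in the $\eps$-parameter, rather than the multiplicative $\eps k \log(1/\hat\delta)$ that a naive basic-composition accounting of $kB$ queries (with $B = O(\log(1/\hat\delta))$) would produce. I expect this sharper bound to follow from a two-part accounting: the slice-and-compute phase issues at most one compute per bad iteration and thus accrues $O(\eps\log(1/\hat\delta))$ total; the delayed-compute phase's additional $k-1$ per-slice computes accrue at most $O(\eps k)$ once one observes that, under a suitable coupling between the two executions, the within-slice composition cost of $k\eps$ needs to be paid on only one slice --- the one that truly carries the differing element --- while all other bad slices' delayed queries can be serviced by the simulator after the single compute already paid for in the first phase. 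Adding the two contributions gives the stated $O(\eps(k+\log(1/\hat\delta)))$ bound. Carrying out the coupling carefully, so that delayed-compute queries on non-distinguishing bad slices do indeed avoid further data-holder interaction, is where I expect the bulk of the technical effort to lie.
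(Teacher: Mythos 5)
Your setup is right and matches the paper's: the corollary is obtained from the same simulator/data-holder simulation, the slicing and synchronization analysis is untouched by the extra computes, and the $\hat\delta+2k\tau\delta$ term comes from summing the $2\delta$ per holder call over at most $k\tau$ compute events plus the concentration failure. You have also correctly put your finger on the real difficulty: naive accounting gives $O(\eps\, k\log(1/\hat\delta))$, not $O(\eps(k+\log(1/\hat\delta)))$.

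The gap is in your proposed resolution. You claim that under a suitable coupling the $k\eps$ within-slice cost ``needs to be paid on only one slice --- the one that truly carries the differing element --- while all other bad slices' delayed queries can be serviced by the simulator.'' No such coupling exists in this framework. In Algorithm~\ref{algo:simulate}, the diff element propagates through a chain: entering bad round $i_j$ the executions differ in an element $x^{(j-1)}$, the slice $S^1_{i_j}$ contains $x^{(j-1)}$ while $S^0_{i_j}$ contains a different element $x^{(j)}$ (the $\hat q$-th element of $E_i(D_{i-1})$) in its place, and $x^{(j)}$ becomes the new diff element. Hence \emph{every} bad slice, not just one, has content that genuinely depends on the private bit $b$ --- that is precisely what makes it bad, and it is why Lemma~\ref{holdercallprivacy:lemma} charges $(2\eps,2\delta)$ for each second-phase call. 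The simulator knows both candidates $S^0_i$ and $S^1_i$ but cannot evaluate $\calA(S^b_i)$ without the holder, so each of the up-to-$k$ delayed computes on each of the $\Theta(\log(1/\hat\delta))$ bad slices must go through the holder and each leaks $\Theta(\eps)$. You should also know that the paper does not supply the refined argument you are reaching for: it derives the corollary in one line from Lemma~\ref{holdercallprivacy:lemma} together with the bound of $O(\log(1/\hat\delta))$ on the number of bad slices, and the direct consequence of those two lemmas is an $\eps$-parameter of order $\eps\, k\log(1/\hat\delta)$ (at most $k$ holder calls per bad slice, each $O(\eps)$-DP). If the additive form $\eps(k+\log(1/\hat\delta))$ is to be justified, it requires an idea beyond both your coupling and the paper's stated lemmas; as written, your proof of this step would fail.
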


We can also consider performing $k$ adaptive applications of \RSC.
Interestingly, the factor of $\log(1/\hat{\delta})$ loss in privacy is incurred only once:
\begin{theorem}[Privacy of $k$ adaptive applications of \RSC]\label{theo:partition-privatek}
For every $k\geq 1$ and $\hat{\delta} > 0$, 
$k$ adaptive applications of Algorithm~\ref{algo:partition} are
$(O(\eps(k+\log(1/\hat{\delta}))),\hat{\delta}+2k\tau\delta))$-DP.
\end{theorem}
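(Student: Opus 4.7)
The approach is to extend the simulation-based argument underlying the proof of Theorem~\ref{theo:partition-private} to the setting of $k$ adaptive invocations. The plan is to bundle all invocations into a single interactive protocol between a composite simulator $\SSS^\star$ and a composite data holder $H^\star$, and then apply the framework of Lemma~\ref{lemma:intro-simulate} once to the entire composition.

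First I would fix neighboring datasets $D^0,D^1$ and the adaptive adversary that, between invocations and within each one, chooses the tuples $(m_i,\calA_i,E_i)$, the delayed-compute algorithms, and the configuration of the next invocation based on previously published outputs. For the $j$-th invocation, the proof of Theorem~\ref{theo:partition-private} produces a simulator $\SSS_j$ and data holder $H_j$ that jointly reproduce the output distribution of the $j$-th \RSC{} call, while $H_j$ only consults the secret bit $b$ on those rounds where the two simulated executions fail to couple. I would chain $\SSS_1,\dots,\SSS_k$ into $\SSS^\star$, routing every data-holder query to the single $H^\star$ that has persistent access to $(D^0,D^1,b)$; correctness of the joint transcript follows from the correctness of each $\SSS_j$--$H_j$ pair.

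The quantitative engine is a per-invocation tail bound: conditioned on the transcript through invocation $j-1$, the number $N_j$ of data-holder queries issued by $\SSS_j$ is stochastically dominated by a geometric-like random variable with $O(1)$ mean and an exponential tail. This is the same bound that gives the $O(\eps\log(1/\hat\delta))$ cost in Theorem~\ref{theo:partition-private}, and it should transfer to the adaptive setting because the memorylessness of the $\Geom(1-e^{-\eps})$ noise freshly sampled inside the $j$-th invocation ensures a constant per-round coupling probability independent of any prior history. Given this, I would apply a martingale-style Bernstein inequality for sums of conditionally subexponential variables to the total $N := \sum_{j=1}^k N_j$ to obtain $\Pr[N \geq C(k+\log(1/\hat\delta))] \leq \hat\delta$ for a universal constant $C$. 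On the complementary event, basic $\eps$-composition over the $N$ queries of $H^\star$ yields privacy loss $O(\eps(k+\log(1/\hat\delta)))$; outside that event, the failure is absorbed into the additive $\hat\delta$. The $\delta$ contributions from the at most $2k\tau$ invocations of $(\eps,\delta)$-DP primitives (slicing and delayed compute across all $k$ rounds) add up to $2k\tau\delta$, giving the stated guarantee.

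The main obstacle I anticipate is the decoupling step: one has to argue carefully that the conditional distribution of $N_j$, given the full public history and all prior noise realizations, is still dominated by the same constant-expectation sub-exponential law used in the single-invocation analysis. The adversary's adaptive choices affect $m_j$, $\calA_j$, $E_j$ arbitrarily, but the fresh $\Geom(1-e^{-\eps})$ draws inside invocation $j$ are independent of this conditioning, and this is what makes the memoryless coupling argument of Theorem~\ref{theo:partition-private} go through verbatim on a per-invocation basis. Once this conditional domination is in hand, the concentration bound and the accounting of $\eps$-losses over the $N$ queries and $\delta$-losses over the $2k\tau$ primitive calls are standard.
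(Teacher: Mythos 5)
Your proposal is correct and follows essentially the same route as the paper: the paper likewise runs the simulator across all $k$ executions, observes that the total number of calls to the data holder is stochastically dominated by a sum of $k$ independent $\Geom(p)$ variables with $p\geq 1/6$, and applies a tail bound for sums of geometrics to get $O(\max\{k,\log(1/\hat{\delta})\})$ calls with probability $1-\hat{\delta}$, after which composition over those calls and the $2k\tau$ primitive computations gives the stated parameters. Your extra care about conditional stochastic domination under adaptivity is exactly the justification the paper's brief outline leaves implicit.
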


In the following we prove Theorem~\ref{theo:partition-private} (privacy analysis of Algorithm~\ref{algo:partition}).  
We perform the privacy analysis using the simulation-based technique outlined in Section~\ref{sec:simulation}.  In Section~\ref{sec:syncmapping} we
introduce a tool that we call the synchronization mapping, that facilitates the synchronization performed by the data holder. In Section~\ref{sec:sandhdescription} we describe the simulator $\calS$ and data holder $H$ and establish that the simulation faithfully follows Algorithm~\ref{algo:partition}. In Section~\ref{sec:simulprivacy} we show that the data holder satisfies the privacy bounds of Theorem~\ref{theo:partition-private}.
The proof of Theorem~\ref{theo:partition-private} then follows using Lemma~\ref{lemma:intro-simulate}.

The proof of Theorem~\ref{theo:partition-privatek} is a simple extension and is included in Subsection~\ref{sec:privatekproof}.

\subsection{The synchronization mapping}\label{sec:syncmapping}

We specify a pair of randomized mappings $R^b_\varepsilon, b\in \{0,1\}$ that are indexed by a state bit $b$ with the properties described in Lemma~\ref{lemma:sync-map}. 

\paragraph*{Notation.} For a set $S$, $\Delta(S)$ denotes the set of all distributions supported on $S$.
$\Geom(p)$ denotes the geometric distribution with stopping parameter $p$. Formally, $\Pr[\Geom(p) = k] = (1-p)^k\cdot p$ for every $k \ge 0$.

\begin{lemma}[Synchronization lemma]\label{lemma:sync-map}
For every $\eps \in (0, 1)$, there are two randomized mappings $R_\eps^0,R_\eps^1:\mathbb{N}\to \Delta(\mathbb{N}\times \{0,1\})$ such that the following statements hold.
\begin{enumerate}
    \item\label{cond1} For every $m\in \mathbb{N}$, $\supp(R_\eps^0(m)) \subseteq \{(m,0),(m,1)\}$, and $\supp(R_\eps^1(m)) \subseteq \{(m,0),(m-1,1)\}$.
    \item\label{cond2} $R_\eps^0(\Geom(1 - e^{-\eps}))$ and $R_\eps^1(\Geom(1 - e^{-\eps}))$ are $(\eps,0)$-indistinguishable.
    \item\label{cond3} For both $b\in \{0,1\}$, $\Pr_{(\alpha,\beta)\gets R_\eps^b(\Geom(1-e^{-\eps}))}[\beta=1] \ge \frac{1}{6}$.
\end{enumerate}
\end{lemma}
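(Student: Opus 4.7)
The plan is to construct the two mappings explicitly via a tight two-term recurrence that saturates the $\eps$-indistinguishability constraints, and then to verify the three conditions by direct computation. Write $p := 1 - e^{-\eps}$ and define
\[
u_m := \min(1,\; e^{m\eps} - e^{-\eps}) \text{ for } m \ge 0,
\qquad
v_m := \min(1,\; e^{(m+1)\eps} - e^{\eps}) \text{ for } m \ge 1.
\]
Let $R_\eps^0(m)$ output $(m, 1)$ with probability $u_m$ and $(m, 0)$ otherwise; let $R_\eps^1(m)$ for $m \ge 1$ output $(m-1, 1)$ with probability $v_m$ and $(m, 0)$ otherwise, while $R_\eps^1(0) = (0, 0)$ is forced since $(-1, 1) \notin \mathbb{N} \times \{0, 1\}$. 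These formulas are the closed-form solution of the tight recurrence $u_m = p + e^{\eps} u_{m-1}$ with $u_0 = p$, together with $v_m = e^{2\eps} u_{m-1}$; both sequences saturate at $1$ by the integer $M := \lceil \log(1 + e^{-\eps})/\eps \rceil$. Condition~1 is immediate.

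For Condition~2, let $\rho_m := \Pr_{G \sim \Geom(p)}[G = m] = p\, e^{-m\eps}$ and compare the point masses of $R_\eps^0(G)$ and $R_\eps^1(G)$ at each atom $(\alpha, \beta) \in \mathbb{N} \times \{0, 1\}$. Three algebraic identities do the work: at $(0, 0)$ the masses are $\rho_0 e^{-\eps}$ and $\rho_0$ (the latter coming entirely from $R_\eps^1(0)$), giving ratio exactly $e^\eps$; at $(m, 0)$ for $m \ge 1$, the identity $1 - v_m = e^{\eps}(1 - u_m)$ in the unsaturated regime, combined with the fact that $u_m$ and $v_m$ saturate at $1$ together, again gives ratio $e^\eps$; at $(m, 1)$ for $m \ge 0$, the identity $v_{m+1} = e^{2\eps} u_m$ in the unsaturated regime combined with $\rho_{m+1} = e^{-\eps} \rho_m$ gives ratio $e^\eps$. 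The main obstacle is the single boundary index $m = M - 1$, where $u_{M-1} < 1$ but $v_M = 1$ has already saturated, so the identity $v_{m+1} = e^{2\eps} u_m$ is lost. Here I will use the definition $M\eps \ge \log(1 + e^{-\eps})$, which rearranges to $u_{M-1} \ge e^{-2\eps}$, so that the ratio at $(M-1, 1)$ equals $e^{\eps} u_{M-1}$ and still lies in $[e^{-\eps}, e^\eps]$.

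For Condition~3, I will use the simple lower bound that $u_m = v_m = 1$ for every $m \ge M$, so that for both $b \in \{0, 1\}$,
\[
\Pr_{(\alpha, \beta) \sim R_\eps^b(\Geom(1 - e^{-\eps}))}[\beta = 1] \;\ge\; \sum_{m \ge M} \rho_m \;=\; e^{-M\eps}.
\]
Since $M < \log(1 + e^{-\eps})/\eps + 1$, we have $M\eps < \log(1 + e^{-\eps}) + \eps = \log(e^{\eps} + 1)$, so $e^{-M\eps} > 1/(e^\eps + 1) \ge 1/(e + 1) > 1/6$ for every $\eps \in (0, 1)$, completing the verification.
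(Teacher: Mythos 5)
Your construction is identical to the paper's: your $u_m$ and $v_m$ are exactly the quantities $1-t_ie^{i\eps}=\min\{1,e^{i\eps}-e^{-\eps}\}$ and $1-t_ie^{(i+1)\eps}=\min\{1,e^{(i+1)\eps}-e^{\eps}\}$ that the paper derives from its auxiliary sequence $t_i$, and your atom-by-atom verification of Condition~2 (including the one boundary index where only one of the two minima has saturated) and your Condition~3 bound via $e^{-M\eps}>1/(e^{\eps}+1)$ mirror the paper's argument. The proof is correct and takes essentially the same approach.
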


\begin{proof}
We construct a sequence $t_0,\dots, t_{\infty} \in [0,1]^{\mathbb{N}}$ as follows:
\[
t_i = \max\{ 0, e^{-i\eps}+e^{-(i+1)\eps}-1 \}, ~~~\forall i\ge 0.
\]
It is easy to see that $t_i$ is non-increasing and $t_i \le e^{-(i+1)\eps}$. Then, we set
\[
R_\eps^0(0) = \begin{cases}
(0,0) & \text{w.p. $e^{-\eps}$} \\
(0,1) & \text{w.p. $1-e^{-\eps}$} \\
\end{cases} ~~~
\]
and $R_{\eps}^1(0) = (0, 0)$ with probability one. For every $i\ge 1$, we explicitly set
\[
R_{\eps}^0(i) = 
\begin{cases}
(i, 0) & \text{w.p. $t_i \cdot e^{i\eps}$} \\
(i, 1) & \text{w.p. $1 - t_i \cdot e^{i\eps}$} \\
\end{cases},
\]
and
\[
R_{\eps}^1(i) = 
\begin{cases}
(i, 0) & \text{w.p. $t_i \cdot e^{(i+1)\eps}$} \\
(i - 1, 1) & \text{w.p. $1 - t_i \cdot e^{(i+1)\eps}$} \\
\end{cases}.
\]
Note in particular that $\Pr[R^0_\eps(0) = (0,1)] = 1 - e^{-\eps} = 1 - t_0$.

Now we verify the validity of this construction. Obviously $R_{\eps}^0$ and $R_{\eps}^1$ satisfy Property~\ref{cond1}. We verify Property~\ref{cond2} now. We first have
\[
\Pr[R_\eps^0(\Geom(1 - e^{-\eps})) = (0,0)] \approx_{\eps} \Pr[R_\eps^1(\Geom(1 - e^{-\eps})) = (0,0)].
\]
For every $i\ge 1$, we have
\[
\begin{aligned}
&~~~~ \Pr[R_\eps^0(\Geom(1 - e^{-\eps})) = (i,0)] \\
&= (1-e^{-\eps}) e^{-i\eps} \cdot t_i e^{i\eps} \\
&\approx_{\eps} (1-e^{-\eps})e^{-i\eps} \cdot t_i e^{(i+1)\eps} \\
&= \Pr[R_\eps^1(\Geom(1 - e^{-\eps})) = (i,0)].
\end{aligned}
\]

Fix $i\ge 0$ and consider the output $(i,1)$. We have
\[
\begin{aligned}
\frac{\Pr[R_\eps^0(\Geom(1 - e^{-\eps})) = (i,1)]}{\Pr[R_\eps^1(\Geom(1 - e^{-\eps})) = (i,1)]}
&= \frac{e^{-i\eps}(1-e^{-\eps})(1 - t_i e^{i\eps})}{e^{-(i+1)\eps}(1-e^{-\eps})(1 - t_{i+1} e^{(i+2)\eps}) } 
= e^\eps \frac{1 - t_i e^{i\eps}}{1 - t_{i+1} e^{(i+2)\eps}}.  \\
\end{aligned}
\]
Let us consider $1 - t_i e^{i\eps}$. If $t_i = 0$, then $1 - t_i e^{i\eps} = 1$. Otherwise, $1 - t_i e^{i\eps} = e^{i\eps} - e^{-\eps}$. Combining both cases, we conclude that $1 - t_i e^{i\eps} = \min\{1, e^{i\eps}-e^{-\eps}\}$. Similarly, we have
$1 - t_{i+1} e^{(i+2)\eps} = \min\{1, e^{(i+2)\eps}-e^{\eps}\}$. Therefore, it is clear that
\[
e^\eps \frac{1 - t_i e^{i\eps}}{1 - t_{i+1} e^{(i+2)\eps}} =\frac{e^\eps \cdot \min\{1, e^{i\eps}-e^{-\eps}\}}{ \min\{1, e^{(i+2)\eps}-e^{\eps}\}} \in [e^{-\eps}, e^{\eps}].
\]

We have fully verified Property~\ref{cond2}. It remains to verify Property~\ref{cond3}. Let $\gamma\ge 0$ be the minimum integer such that $t_\gamma = 0$. Note that for every input $m\ge \gamma$, with probability one, we have $R^b_{\eps}(m) = (m - b,1)$ for both $b\in \{0,1\}$. Therefore, it is suffices to lower bound $\Pr[\Geom(1-e^{\eps})\ge \gamma] = e^{-\eps \gamma}$. Since $\gamma$ is the minimum integer such that $t_{\gamma} = 0$, we have $t_{\gamma - 1} = e^{-(\gamma - 1)\eps} + e^{-\gamma \eps} - 1 > 0$, implying that $e^{-\eps\gamma} > \frac{1}{1 + e^{\eps}} \ge \frac{1}{6}$ as $\eps \le 1$.
\end{proof}

\subsection{The simulator and data holder} \label{sec:sandhdescription}
We describe the simulator and the data holder and establish that the interaction is a faithful simulation of Algorithm~\ref{algo:partition} and hence satisfies the first condition of 
Lemma~\ref{lemma:intro-simulate}.  To simplify presentation, we present the simulation for Algorithm~\ref{algo:partition} without the delayed compute phase, and then explain how the simulation and analysis can be extended to include delayed compute.

The simulator is described in Algorithm~\ref{algo:simulate} and the data holder query response algorithm is described in Algorithm~\ref{algo:query}.  The simulator receives as input two adjacent datasets 
$D,D' = D\cup \{x\}$.
It then runs a simulation of Algorithm~\ref{algo:partition} and maintains internal state for both cases of the input dataset being $D$ (state $b=0$) and the input dataset being $D'=D\cup \{x\}$ (state $b=1$). The simulation is guaranteed to remain perfect only for the correct case $b$. The simulator initializes $D_0 \gets D$ and updates the active elements $D_i \subset E_{i-1}(D_{i-1})$ and the applicable diff element $x$. The simulator maintains a status bit that is initially $0$ (two cases are not synchronized) and at some point the status becomes and then remains $1$ (two cases are synchronized).
When the status bit is $0$, internal states are maintained  for both cases: The  active elements for case $b=0$ are $D_i$ and the active elements for case $b=1$ are  $D_i$ and one additional element $x$ (initially $x \in D'\setminus D$ but can get replaced). When the status bit is $1$, the internal state is only that of the true case (the active elements of the true case are $D_i$), there is no diff element maintained, and the simulation proceeds like Algorithm~\ref{algo:partition}.

Until synchronization, the simulator slices the data set by emulating $\FSC$.  When the slices are such that they are identical for both cases, that is, the %
$\hat{m}_i$ prefix of $E_{i-1}(D_{i-1})$ is equal to the $\hat{m}_i$ prefix of $E_{i-1}(D_{i-1}\cup\{x\})$, the computation does not depend on the state $b$ and the simulator performs it and reports the result $r$ without accessing the data holder. The set of active elements with the slice removed continue to differ by the one element $x'$ 
that is the difference of the multisets $E_{i-1}(D_{i-1}\cup\{x\})$ and  $E_{i-1}(D_{i-1})$.
If the slices for the two cases are different, then let $p$ be the first position of the list
$E_{i-1}(D_{i-1}\cup\{x\})$ that does not have the same element as the same position of
$E_{i-1}(D_{i-1})$.  Note that we must have 
$\hat{m}_i \geq \max\{p,m_i\}$.  The slice for the case $b=1$ includes an element $x'$ and the slice for $b=0$ includes a different element $y$ at position $\hat{m}_i$ of $E_{i-1}(D_{i-1})$. The data holder (Algorithm~\ref{algo:query}) therefore must be called to obtain a correct result $r$. The data holder redraws the slice size $\hat{m}_i$ conditioned on it being at least
$\max\{p,m_i\}$. This provides an opportunity for synchronization without changing the distribution (from the memorylessness property of the geometric distribution, the difference under such conditioning is an independent draw from the geometric distribution).  
The data holder attempts to synchronize (that involves applying the randomized mapping that also depends on $b$).  It reports back a triple: The computation result $r$, status indicating whether synchronization was successful, and a slice size $\hat{q}$ to remove from  the prefix of $E_{i-1}(D_{i-1})$ to obtain $D_i$. If there was no synchronization, the simulator computes the new diff element.

Note that if there is no synchronization, the reported size results in perfect removal by the simulator of the elements that participated in the slice for both cases of $b=0$ or $b=1$. The element $y$ that participated in the slice for case $b=0$ but not in $b=1$ replaces $x$.  If the synchronization was successful, then the simulator no longer maintains an additional element and the set $D_i$ is exactly $E_{i-1}(D_{i-1})$ with the elements that participated in the slice for the true case removed.\newline

\newcommand{\status}{\mathsf{status}}
\newcommand{\newstatus}{\mathsf{new\_status}}
\SetKwFunction{FQuery}{Query}

\begin{algorithm2e}[H]
\LinesNumbered
    \caption{The Simulator}
    \label{algo:simulate}
    
    \SetKwProg{Fn}{Function}{:}{}
    \SetKwProg{Pg}{Program}{:}{\KwRet}
    
    \SetKwFunction{FSelect}{Selection}
    \SetKwFunction{FTest}{Test}
    
    \DontPrintSemicolon
    
    \SetKwFunction{FMain}{Main}
    \SetKwFunction{Ftest}{Test}
    
    \KwIn{
        A pair of adjacent datasets $D, D' = D\cup \{x\}$. Integer $\tau\ge 1$. Privacy parameters $\eps\in (0,1),\delta > 0$.
    }
    \SetKwProg{Init}{Initialize}{:}{}
    
    \Pg{}{
        $D_0 \gets D$ \;
        $x\gets x$ \;
        $\status \gets 0$ \tcp*[f]{$\status = 1$ indicates two data sets have been ``synchronized''}\;
        \For{$i=1,\dots, \tau$}{
            Receive $m_i\in \mathbb{N}$, an $(\eps,\delta)$-DP algorithm $\calA_i$, and an adjacency preserving map $E_i:X^*\to X^*$\; 
            \If(\tcp*[f]{Map $E$ eliminated the diff element})
                {$(\status=0)$ $\land$ $(E_i(D_{i-1}) = E_i(D_{i-1}\cup\{x\}))$}
                {$\status \gets 1$\tcp*{Synchronized}}
           
            \If{$\status = 1$}{
                $(D_i,r) \gets $ \FSC{$D_{i-1}$, $m_i$, $\calA_i$, $E_i$ } \;
            }
            \Else{
                $\hat{m}_i \gets m_i + \Geom(1-e^{-\eps})$ \;
                $x' \gets E_i(D_{i-1}\cup\{x\})\setminus E_i(D_{i-1})$\tcp*{diff element of mapped datasets}   
                $p\gets$ the rank of $x'$ in $E_{i-1}(D_{i-1}\cup \{x\})$ \;
                \If(\tcp*[f]{This round does not involve diff element}){$\hat{m}_i < p$}{
                    $S_i \gets $ the first $\hat{m}_i$ elements in $E_i(D_{i-1})$ \tcp*{Slice $S_i$ is the same if selected from $E_i(D_{i-1}\cup\{x\})$}
                    $D_{i} \gets E_i(D_{i-1}) \setminus S_i$ \;
                    $x \gets x'$\;
                    $r \gets \calA_i(S_i)$ \;
                }
                \Else(\tcp*[f]{This round involves diff element}){
                    $q\gets \max(p,m_i)$ \;
                    $(\hat{q}, \newstatus, r)\gets \FQuery(D_{i-1}, x, q, \calA_i, E_i))$ \tcp*{Query the data holder Algorithm~\ref{algo:query} and receive a triple $(\hat{q}, \newstatus, r)\in \mathbb{N}\times \{0,1\}\times \calY$}
                    \If(\tcp*[f]{Synchronization failed}){$\newstatus = 0$}{
                        $y\gets $ the $\hat{q}$-th largest element in $E_i(D_{i-1})$\;
                        $S\gets $ the  first $\hat{q}$ elements in $E_i(D_{i-1})$\;
                        $D_i \gets E_i(D_{i-1}) \setminus S$ \;
                        $x\gets y$ \;
                    }
                    \Else(\tcp*[f]{Successful synchronization}){
                        $S\gets $ the  first $\hat{q}$ elements in $E_i(D_{i-1})$\;
                        $D_i \gets E_i(D_{i-1}) \setminus S$ \;
                        $\status\gets 1$ \;
                    }
                }
            }
            \textbf{Publish} $r$ \;
        }
        \KwRet{$(D_{\tau}, \status, x)$}
    }
\end{algorithm2e}

\begin{algorithm2e}[t]
\LinesNumbered
    \caption{The Query Algorithm to the Data Holder}
    \label{algo:query}
    
    \SetKwProg{Fn}{Function}{:}{}
    \SetKwProg{Pg}{Program}{:}{\KwRet}
    
    \DontPrintSemicolon
    
    \SetKwFunction{FQuery}{Query}
    
    \KwIn{
        A private bit $b\in \{0,1\}$ indicating whether the input data set is $D$ ($b=0$) or $D' = D\cup \{x\}$ ($b=1$). Privacy parameters $\eps\in (0,1),\delta > 0$.
    }
    
    \SetKwProg{Init}{Initialize}{:}{}
    
    \Fn{\FQuery{$D, x, q, \calA, E$}}{
        $\Delta \gets \Geom(1 - e^{-\eps})$ \;
        $\hat{m} \gets q + \Delta$ \;
        \If{$b = 0$}{
            $S\gets $ the first $\hat{m}$ elements in $E(D)$\;
        }
        \Else{
            $S\gets $ the first $\hat{m}$ elements in $E(D \cup\{x\})$\;
        }
        $r \gets \calA(S)$ \;
        $(\alpha, \beta)\gets R^b_\eps(\Delta)$  \tcp*[f]{Try to synchronize}\;
        $\hat{q} \gets q + \alpha$ \tcp*[f]{$\hat{q}$ is the reported number of participating elements.}\;
        \KwRet{$(\hat{q}, \beta, r)$}\;
    }
\end{algorithm2e}

\begin{lemma}
For $b=0$ (resp.~$b=1$), Algorithm~\ref{algo:simulate} simulates the execution of Algorithm~\ref{algo:partition} on the data set $D$ (resp.~$D\cup \{x\}$) perfectly.
\end{lemma}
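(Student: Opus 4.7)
The plan is to prove the lemma by induction on the round index $i$, maintaining the invariant that after round $i$: (i) if $\status = 0$, the simulator's $D_i$ equals the active dataset of Algorithm~\ref{algo:partition} on input $D$, and $D_i\cup\{x\}$ (using the maintained diff element $x$) equals the active dataset on input $D\cup\{x\}$; (ii) if $\status = 1$, the simulator's $D_i$ equals the active dataset on input $D^b$, where $D^0 = D$ and $D^1 = D\cup\{x\}$; and (iii) the joint distribution of $(r_1,\dots,r_i)$ together with the appropriate active dataset, conditioned on $b$, matches that of Algorithm~\ref{algo:partition} on input $D^b$. The base case $i=0$ is immediate from initialization.

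For the inductive step on round $i+1$, I would case-split on the branch taken by the simulator. If $\status_i = 1$ or $E_{i+1}(D_i) = E_{i+1}(D_i\cup\{x\})$ (so the diff element is absorbed as a duplicate), the simulator invokes \FSC directly and thus exactly mirrors Algorithm~\ref{algo:partition} on input $D^b$, and the invariant carries over by the inductive hypothesis. Otherwise, the simulator draws $\hat{m}_{i+1} = m_{i+1} + \Geom(1-e^{-\eps})$ and compares to $p$, the first position at which $E_{i+1}(D_i)$ and $E_{i+1}(D_i\cup\{x\})$ differ. If $\hat{m}_{i+1} < p$, the prefixes of size $\hat{m}_{i+1}$ in the two lists coincide, so the slice $S_{i+1}$, the published $r$, and the updated $D_{i+1}$ simultaneously match the true execution on both inputs; setting $x \gets x'$ restores the diff-element invariant.

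The crux is the branch $\hat{m}_{i+1} \geq p$, where the simulator defers to the data holder with threshold $q = \max(p, m_{i+1})$. The data holder redraws $\Delta \sim \Geom(1-e^{-\eps})$ and sets $\hat{m} = q + \Delta$. By memorylessness of the geometric distribution, the distribution of $m_{i+1} + \Geom(1-e^{-\eps})$ conditional on being at least $q$ is exactly $q + \Geom(1-e^{-\eps})$; combined with the correct marginal probability of the event $\hat{m}_{i+1} \geq p$ in the simulator, this shows the overall distribution of the slice size $\hat{m}$ matches that of Algorithm~\ref{algo:partition}. Hence the slice (taken from $E_{i+1}(D)$ or $E_{i+1}(D\cup\{x\})$ according to $b$) and the returned $r = \calA_{i+1}(S)$ have the correct distribution for the true execution on $D^b$.

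It remains to verify that the post-query update of $D_{i+1}$ and $x$ is consistent with the invariant, which is exactly what the synchronization mapping $R^b_\eps$ is engineered to achieve. For $b=0$, $R^0_\eps$ reports $\alpha = \Delta$, so $\hat{q} = \hat{m}$; removing the first $\hat{m}$ elements of $E_{i+1}(D_i)$ yields the true active set for $b=0$. For $b=1$, the true active set equals $E_{i+1}(D_i\cup\{x\})$ minus its first $\hat{m}$ elements which, because $E_{i+1}(D_i\cup\{x\})$ is obtained from $E_{i+1}(D_i)$ by inserting $x'$ at position $p \leq \hat{m}$, equals $E_{i+1}(D_i)$ minus its first $\hat{m}-1$ elements. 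The mapping $R^1_\eps$ returns either $(\Delta, 0)$, in which case the simulator removes $\hat{m}$ elements but tracks $x \gets y$ (the $\hat{m}$-th element of $E_{i+1}(D_i)$), so $D_{i+1}\cup\{y\}$ matches the true active set; or $(\Delta-1, 1)$, in which case the simulator removes only $\hat{m}-1$ elements and $D_{i+1}$ already equals the true active set, justifying $\status \gets 1$. The main obstacle is this off-by-one bookkeeping: the simulator always operates on the shorter list $E_{i+1}(D_i)$ while Algorithm~\ref{algo:partition} for $b=1$ operates on the longer list $E_{i+1}(D_i\cup\{x\})$, and the $\pm 1$ shift in $\alpha$ supplied by the synchronization mapping is precisely what reconciles the two views and completes the induction.
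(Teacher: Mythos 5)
Your proposal is correct and follows essentially the same route as the paper's proof: induction on the round index with the invariant that $(D_i, x, \status)$ tracks the true active dataset for both values of $b$, the same three-way case split, the memorylessness of the geometric distribution to justify the holder's redraw conditioned on $\hat m \ge \max(p, m_i)$, and the support structure of $R^b_\eps$ to reconcile the off-by-one between $E_i(D_{i-1})$ and $E_i(D_{i-1}\cup\{x\})$. Your write-up is, if anything, slightly more explicit than the paper's about why the post-query update is consistent in the $b=1$, $\newstatus=0$ case.
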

\begin{proof}
We prove that at the start of each round $i\in [\tau]$, Algorithm~\ref{algo:simulate} maintains the current data set accurately by the triple $(D_{i-1}, x, \status)$, in the following sense.
\begin{itemize}
    \item If $b = 0$, then the current data set is $D_{i-1}$.
    \item Otherwise (i.e., $b=1$), if $\status = 0$, the current data set is $D_{i-1}\cup \{x\}$. If $\status = 1$, the current data set is $D_{i-1}$.
\end{itemize}

We prove the claim by induction on $i\in [\tau]$. This is clearly true for $i = 1$. Now assume the statement holds for $i-1\ge 1$. We prove for the case of $i$. There are three cases:

\smallskip\noindent\textbf{Case 1}.~$\status = 1$. In this case, the current data set is the same for the two cases (that is, is independent of the private bit $b$). Therefore, the call to $\FSC$ is a correct simulation. 

\smallskip\noindent\textbf{Case 2}.~$\status = 0$. In this case, let $p\ge 1$ be the rank of $x$ in $D_{i-1}\cup \{x\}$. To simulate \FSC, we need to sample $\hat{m}_i \gets m_i + \Geom(1 - e^{-\eps})$, and use the first $\hat{m}_i$ elements in the applicable list $E_i(D_{i-1})$ or
$E_i(D_{i-1} \cup\{x\})$ to do the computation. Algorithm~\ref{algo:simulate} first samples $\hat{m}_i$ and tests if $\hat{m}_i < p$. The test yields two cases:
\begin{itemize}
    \item $\hat{m} < p$. In this case, for both $b\in \{0,1\}$, the prefix is the same and Algorithm~\ref{algo:partition} would select the same subset of elements. Therefore, the simulator can perfectly simulate this case without querying the private bit $b$. It is easy to see that the update from $D_{i-1}$ to $D_i$ is valid.
    \item $\hat{m}\ge p$. In this case, the private bit $b\in \{0,1\}$ does make a difference. Hence, the simulator asks the data holder $H$ to do this round of \FSC, \emph{conditioned on} $\hat{m}_i \ge \max(p, m_i)$ (i.e., at least $\max(p,m_i)$ elements are selected). 
    
    We need a well-known fact about Geometric distribution (the memoryless property): suppose there is a random variable $x = a + \Geom(1 - e^{-\eps})$. Then, conditioned on $x \ge y$ for some $y\ge a$, $x$ is distributed as $y + \Geom(1 - e^{-\eps})$. Therefore, conditioned on $\hat{m}_i \ge \max(p, m_i)$, Lines 2-3 in Algorithm~\ref{algo:query} sample  the number of participating elements perfectly.
    
    Having sampled $\hat{m}$, Algorithm~\ref{algo:query} selects the prefix of $\hat{m}$ elements from either $E_i(D_{i-1})$ or $E_i(D_{i-1}\cup \{x\})$ (depending on $b\in \{0,1\}$), and runs $\calA$ on the selected elements. This part simulates Algorithm~\ref{algo:partition} faithfully.
    
    Finally, Algorithm~\ref{algo:query} runs the synchronization mechanism and returns the triple $(\hat{q},\beta, r)$. Given this triple, we verify the validity of the update from $D_{i-1}$ to $D_i$. This is straightforward. 
    
    If $b = 0$, then it is always the case that $\hat{m} = \hat{q}$, and note that Algorithm~\ref{algo:simulate} always removes the first $\hat{m}$ elements from $E_i(D^{i-1})$, no matter what $\newstatus$ is. 
    
    If $b = 1$, then we have $\hat{m} = \hat{q} + \newstatus$. Depending on the value of $\newstatus$, there are two cases: if $\newstatus = 1$, then we update the data set from $D^{i-1} \cup \{x\}$ to $D^i$, where $D^i$ is obtained by removing the first $\hat{q}$ elements from $E_i(D^{i-1})$. Overall this process removes $\hat{q} + 1 = \hat{m}$ elements. If $\newstatus = 0$, then we update the data set from $D_{i-1} \cup \{x\}$ to $D_{i} \cup \{y\}$ where $y$ is the $\hat{q}$-th  element in $E_i(D_{i-1})$. Over all this process removes the first $\hat{q} = \hat{m}$ elements from $E_i(D_{i-1}\cup\{x\})$.
\end{itemize}
In summary, assuming the first $(i-1)$-rounds simulate Algorithm~\ref{algo:partition} perfectly, and the triple $(D^{i-1},x,\status)$ is accurately maintained (in the aforementioned sense), we have shown that the $i$-th round of simulation is also perfect, and the triple is updated accurately. By induction, this shows that the simulator simulates all of the $\tau$ rounds perfectly, as desired.
\end{proof}

\paragraph{Simulation with delayed compute} %
We outline the modifications needed in the simulation 
when including the delayed compute phase.  It is straightforward to verify that it remains a faithful simulation of \RSC with delayed compute.

The modified simulator performs two phases.  The first is the slicing phase, that is the same as Algorithm~\ref{algo:simulate} except that
the modified simulator stores the slices $S_i$ for steps $i$ that did not require calls to the holder.  It also keeps track of the set of steps $J$ for which it called the data holder.  Additionally, the calls to the data holder also specify the step number $i$.
In the second phase (delayed-compute) the simulator handles $i\not\in J$ by applying the provided algorithm to the stored $S_i$ and publishes the result.  When $i\in J$, the simulator calls the data holder with the specified step number $i$ and the provided algorithm.

The modified data holder (Algorithm~\ref{algo:query}) takes two types of calls, depending on which phase the simulator is in.  The first phase calls correspond to the slicing. These calls are as described in Algorithm~\ref{algo:query}, except that: (1) we allow calls without computations (and results) and (2) the call includes the step number $i$ and the data holder stores internally the applicable slice $S_i$. 

In the delayed-compute phase calls, the input to the data holder is $(i,\calA)$, where $i\not\in J$ is a step number for which it has $S_i$ stored and $\calA$ an $(\eps,\delta)$-DP.  The holder publishes the output $\calA(S_i)$.

\subsection{Simulation privacy analysis}\label{sec:simulprivacy}
The following two lemmas imply that the data holder satisfies the privacy bound stated in Theorem~\ref{theo:partition-private}. They also imply the bound stated in Corollary~\ref{coro:kdelayedcomputeate} for the extension where we allow $k$ $(\eps,\delta)$-DP computations per slice.

\begin{lemma} \label{holdercallprivacy:lemma}
Each call by Algorithm~\ref{algo:simulate} to Algorithm~\ref{algo:query} in the first phase is $(3\eps,2\delta)$-DP and each call in the second phase is $(2\eps,2\delta)$-DP with respect to the private input $b\in \{0,1\}$.
\end{lemma}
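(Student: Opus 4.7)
The plan is to treat each call to Algorithm~\ref{algo:query} as an isolated mechanism whose only private input is the bit $b$. The first-phase output triple $(\hat{q}, \beta, r)$ decomposes into the synchronization pair $(\hat{q}, \beta) = (q + \alpha, \beta)$ coming from the synchronization mapping $R^b_\eps(\Delta)$, and the computation $r = \calA(S)$ on the slice $S$ of size $\hat{m} = q + \Delta$. I will bound the privacy of each piece separately and then combine by a direct event-by-event argument.

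The first piece is handled by Lemma~\ref{lemma:sync-map}: Property~\ref{cond2} says that the marginal of $(\alpha, \beta)$ over $\Delta \sim \Geom(1-e^{-\eps})$ is pointwise $(\eps, 0)$-indistinguishable between $b=0$ and $b=1$, contributing one $\eps$ and no $\delta$. For the second piece, I will exploit Property~\ref{cond1} of the same lemma, which pins down $\Delta$ uniquely as a function of $(\alpha, \beta, b)$; consequently, given $(\alpha, \beta)$ the slice $S$ is a deterministic function of $b$, and what matters is how $S^0$ and $S^1$ differ as multisets.

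The structural case analysis is the heart of the argument. When $\beta = 1$, we have $\Delta_0 = \alpha$ and $\Delta_1 = \alpha + 1$, and the extra element of $S^1$ over $S^0$ is precisely the diff element $x'$ that the adjacency-preserving $E$ inserts into $E(D\cup\{x\})$; since $p \le q \le q + \alpha + 1$, this insertion lies inside the slice and $S^1 = S^0 \cup \{x'\}$ is adjacent to $S^0$, so $\calA$ is $(\eps, \delta)$-DP between them. When $\beta = 0$, both slices have size $q + \alpha$, one is the prefix of $E(D)$ and the other the prefix of $E(D\cup\{x\})$, and they differ by swapping the element at rank $q + \alpha$ of $E(D)$ for $x'$ --- an insertion/deletion distance of two --- so group privacy promotes $\calA$'s guarantee to (roughly) $(2\eps, 2\delta)$. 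Combining via
\[
\Pr_b[(\hat{q}, \beta, r) \in F] = \sum_{(\alpha, \beta)} \Pr[R^b_\eps(\Geom(1-e^{-\eps})) = (\alpha, \beta)] \cdot \Pr[\calA(S^b(\alpha, \beta)) \in F_{\alpha, \beta}],
\]
applying the worst-case $(2\eps, 2\delta)$ bound inside the sum and the pointwise $(\eps, 0)$ bound on the outer weights yields the $(3\eps, 2\delta)$ claim.

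For a second-phase call, the holder publishes $\calA(S_i)$ for a previously stored slice $S_i$. The two versions $S_i^0, S_i^1$ are related by exactly the same case analysis as above (insertion-adjacent if the corresponding synchronization bit was $1$, replacement-adjacent otherwise), so a single application of group privacy to $\calA$ gives the stated $(2\eps, 2\delta)$-DP. The step I expect to be the main obstacle is the $\beta = 0$ subcase of the slice-adjacency analysis: noticing that the two prefixes are not insertion-adjacent but differ by a replacement, and thereby invoking group privacy to pay the extra factor of roughly two in both parameters, is exactly what drives the extra $\eps$ and $\delta$ in the stated bounds.
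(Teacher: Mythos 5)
Your proposal is correct and follows essentially the same route as the paper's proof: decompose the output into the synchronization pair $(\hat{q},\beta)$, which is $(\eps,0)$-DP in $b$ by Property~\ref{cond2} of Lemma~\ref{lemma:sync-map}, and the computation $r=\calA(S^b)$, where the slices are insertion-adjacent when $\beta=1$ (giving $(\eps,\delta)$) and differ by a replacement of two elements when $\beta=0$ (giving $(2\eps,2\delta)$ by group privacy), then combine by basic composition; the same slice comparison handles the delayed-compute calls. Your added observations --- that Property~\ref{cond1} pins down $\Delta$ from $(\alpha,\beta,b)$ and that $p\le q$ guarantees $x'$ lies inside the slice --- are correct elaborations of details the paper leaves implicit.
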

\begin{proof}
On a query, the output of Algorithm~\ref{algo:query} is a triple $(\hat{q}, \beta, r)$.  Note that the pair $(\hat{q}, \beta)$ does not depend on the result $r$.  It will be convenient for us to analyse the privacy cost by treating
Algorithm~\ref{algo:query} as first returning $(\hat{q}, \beta)$ and storing $S$, and then at some later point, taking $\calA$ as input and computing and returning $r\gets\calA(S)$.

Note that $(\hat{q},\beta) = (\alpha + q, \beta)$ where $(\alpha, \beta) \sim R^b_\eps(\Geom(1-e^{-\eps}))$. Therefore, by Property~\ref{cond2} in Lemma~\ref{lemma:sync-map}, the pair $(\hat{q},\beta)$ is $(\eps,0)$-DP with respect to the private bit $b$.

The algorithm chooses the set $S$ depending on the private bit $b$. In the following, we use $S^b$ to denote the respective choice. 
Next, having learned $(\hat{q},\beta)$, from the viewpoint of the simulator, she can deduce the following.
\begin{itemize}
    \item If $\beta = 0$, the set $S^b$ used in this query would be the first $\hat{q}$ elements in $E(D)$, or the first $\hat{q}-1$ elements in $D$ plus the extra element $\{x\}$, depending on whether $b$ equals $0$ or $1$. Since $S^0$ and $S^1$ differ by at most two elements, the result $r\sim \calA(S^b)$ is $(2\eps,2\delta)$-DP w.r.t. $b$.
    \item If $\beta = 1$, the set $S^b$ would be the first $\hat{q}$ elements in $E(D)$, or the first $\hat{q}$ elements plus the extra point $\{x\}$, depending on the private bit $b$. Since $S^0$ and $S^1$ differ by at most one element, the result $r\sim \calA(S^b)$ is $(\eps,\delta)$-DP w.r.t. $b$.
\end{itemize}
By the basic composition theorem, $(\hat{q},\beta, r)$ is $(3\eps,2\delta)$-DP w.r.t. the private bit $b$.

Note that this holds also for the delayed-computes that are performed in the second phase and are applied to $S^b$ that are $(2\eps,2\delta)$-DP.
\end{proof}

\begin{lemma}
For every $\hat{\delta} > 0$, with probability $1-\hat{\delta}$, Algorithm~\ref{algo:simulate} makes at most $O(\log(1/\hat{\delta}))$ queries to Algorithm~\ref{algo:query}.
\end{lemma}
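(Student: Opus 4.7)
The plan is to show that the number of data-holder queries is stochastically dominated by a geometric random variable with constant success probability, which immediately gives the desired tail bound.

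First, I would observe that the simulator calls Algorithm~\ref{algo:query} only when $\status = 0$, and that whenever a call returns $\newstatus = 1$, the simulator sets $\status \gets 1$ for the remainder of the execution, which permanently disables further queries (the branch that issues queries requires $\status = 0$). Thus the total query count is exactly the index of the first query whose returned $\newstatus$ equals $1$ (or $\tau$, if no such query ever occurs).

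Next, I would argue that each individual query succeeds at synchronizing with probability at least $1/6$, independently of the past. Inside Algorithm~\ref{algo:query}, a fresh $\Delta \sim \Geom(1-e^{-\eps})$ is drawn, and then $(\alpha,\beta) \gets R^b_\eps(\Delta)$ is sampled; the returned $\newstatus$ is $\beta$. By Property~\ref{cond3} of Lemma~\ref{lemma:sync-map}, $\Pr[\beta = 1] \ge 1/6$ for both $b \in \{0,1\}$. Because each query uses its own independent draw of $\Delta$ (the memorylessness of $\Geom$ guarantees that the conditional re-sampling used to enforce $\hat{m} \ge \max(p,m_i)$ still yields a fresh Geometric, hence a fresh $R^b_\eps$ call), these $1/6$ lower bounds hold independently across queries, conditional on reaching them.

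Combining the two points, let $N$ denote the number of queries Algorithm~\ref{algo:simulate} makes. For any positive integer $k$, the event $\{N > k\}$ requires $k$ consecutive queries that all returned $\beta = 0$, which by the independence above has probability at most $(5/6)^k$. Setting $k = \lceil \log(1/\hat{\delta}) / \log(6/5) \rceil = O(\log(1/\hat{\delta}))$ gives $\Pr[N > k] \le \hat{\delta}$, proving the lemma.

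The only real subtlety is justifying that the $1/6$ lower bound applies independently at each query, i.e., that the geometric draw used by the data holder is genuinely fresh even though the simulator only invokes the query after discovering that a previous $\hat{m}_i$ would have reached into the diff-element region; the memorylessness of $\Geom$ (explicitly noted in the simulation correctness proof above) handles this cleanly, so no additional obstacle arises.
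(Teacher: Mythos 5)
Your proof is correct and follows essentially the same route as the paper's: both argue that each query to the data holder succeeds in synchronizing with probability at least $1/6$ (via Property~\ref{cond3} of Lemma~\ref{lemma:sync-map}), that a successful synchronization permanently disables further queries, and therefore the query count is stochastically dominated by a geometric variable, giving $\Pr[N > k] \le (5/6)^k$. Your extra remark about memorylessness justifying independence across queries is a reasonable elaboration of a point the paper leaves implicit.
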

\begin{proof}
It suffices to consider the number of calls made during the first phase.
Each time the simulator calls Algorithm~\ref{algo:query}, with probability at least $\frac{1}{6}$, Algorithm~\ref{algo:query} responds a triple with $\beta = 1$. Further observe that once the simulator gets a triple with $\beta = 1$, she will never send query to Algorithm~\ref{algo:query} again. Therefore, the probability that the simulator sends more than $w\in\mathbb{N}$ queries is at most $(5/6)^w$, as desired.  
\end{proof}

\subsection{Analysis for \texorpdfstring{$k$}{k} adaptive applications of Reorder-Slice-Compute} \label{sec:privatekproof}

We outline the proof of Theorem~\ref{theo:partition-privatek}.
We follow the analysis as in the proof of Theorem~\ref{theo:partition-private}.
We apply the simulator for the $k$ executions of \RSC. The privacy cost depends on the total number of calls to Algorithm~\ref{algo:query} which we bound as follows:
\begin{lemma}
Let the random variable $Z_k$ be the number of calls to Algorithm~\ref{algo:query} in $k$ executions of Algorithm~\ref{algo:simulate}.
There is a constant $c$ such that for all $k\geq 1$, $\hat{\delta}>0$,
$\Pr[Z_k \geq c \max\{k,\ln(1/\hat{\delta})\}] \leq \hat{\delta}$.
\end{lemma}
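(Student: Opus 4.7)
\medskip

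My plan is to reduce $Z_k$ to a sum of $k$ independent geometric random variables and then apply a standard tail bound.

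First I would argue, following exactly the reasoning in the proof of the preceding lemma, that within a single execution of Algorithm~\ref{algo:simulate} the number of calls to Algorithm~\ref{algo:query} is stochastically dominated by a geometric random variable $G\sim \Geom(1/6)$ (i.e., the number of failures before the first success with success probability $1/6$). The key point is Property~\ref{cond3} of Lemma~\ref{lemma:sync-map}: on each call, independently of the past, Algorithm~\ref{algo:query} returns $\beta=1$ with probability at least $1/6$, after which the simulator sets $\status=1$ and never queries again in that execution.

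Next I would extend this to the $k$ adaptive applications. The crucial observation is that the stochastic domination holds \emph{conditionally} on the entire history (the randomness used by Algorithm~\ref{algo:query} in each call is fresh), regardless of how the external adversary chooses the subsequent tuples $(m_i,\calA_i,E_i)$ or the subsequent \RSC\ instance. Hence $Z_k$ is stochastically dominated by $W := G_1 + \cdots + G_k$, where $G_1,\dots,G_k$ are i.i.d.\ $\Geom(1/6)$ random variables, and it suffices to prove $\Pr[W \ge c\max\{k,\ln(1/\hat\delta)\}] \le \hat\delta$ for a suitable absolute constant $c$.

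For this tail bound I would use a standard Chernoff argument on the moment generating function. For $G\sim \Geom(1/6)$, one has $\Ex[e^{\lambda G}] = \frac{1/6}{1 - (5/6)e^{\lambda}}$, which is finite for $\lambda < \ln(6/5)$; fixing $\lambda = \frac{1}{2}\ln(6/5)$ gives a finite constant $M := \Ex[e^{\lambda G}]$. Then
\[
\Pr[W \ge t] \;\le\; e^{-\lambda t}\,M^k \;=\; \exp\!\bigl(-\lambda t + k\ln M\bigr).
\]
Choosing $c$ large enough so that $\lambda c \ge 2\ln M + 2$, and setting $t = c\max\{k,\ln(1/\hat\delta)\}$, yields $\lambda t - k\ln M \ge \tfrac{\lambda c}{2}\ln(1/\hat\delta) + k \ge \ln(1/\hat\delta)$, which gives the desired $\Pr[W\ge t]\le \hat\delta$.

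I do not expect any real obstacle here: the argument is entirely routine once the stochastic-domination step is in place. The only point requiring a bit of care is ensuring that adaptivity across the $k$ \RSC\ invocations does not break independence of the $G_i$'s; this is handled by conditioning on the history and noting that each call's success/failure bit depends only on the fresh $\Geom(1-e^{-\eps})$ draw inside that particular call to Algorithm~\ref{algo:query}.
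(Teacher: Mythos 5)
Your proof is correct and takes essentially the same route as the paper: both arguments dominate $Z_k$ by a sum of $k$ independent geometric variables with success probability at least $1/6$ (via Property~\ref{cond3} of Lemma~\ref{lemma:sync-map} and the fact that the simulator stops querying after the first $\beta=1$) and then apply a standard tail bound for sums of geometrics. The only cosmetic differences are that the paper cites Janson's tail bound where you derive the Chernoff bound from the moment generating function directly, and that you make explicit the conditioning argument handling adaptivity across the $k$ invocations, which the paper leaves implicit.
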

\begin{proof}
The total number of calls to Algorithm~\ref{algo:query} is dominated by the sum of $k$ independent $\Geom(p)$ random variables with parameter $p\geq 1/6$.
Using tail bounds on the sum of Geometric random variables~\cite{JansonTailbounds:2017}, we obtain that for all $\lambda \geq 1$,
\[
\Pr[Z_k \geq \lambda k/p] \leq \exp(-k  (\lambda-1-\ln\lambda)\ .
\]

Substituting $n = \lambda k/p$ we obtain for $n\geq 10 k/p = \Omega(k)$:
$\Pr[Z_k \geq n] \leq \exp(- n/2)$.
Therefore for some constant $c$, for all $\hat{\delta} > 0$,
$\Pr[Z_k \geq c \max\{k,\ln(1/\hat{\delta})\}] \leq \hat{\delta}$.
\end{proof}

\section{Private Learning of Thresholds}\label{sec:bettertreelog}

In this section we describe and analyse an algorithm for the private interior point problem.  Our result for learning thresholds follows from a known connection~\cite{DBLP:conf/focs/BunNSV15} between the two problems.
\mathchardef\mhyphen="2D
\newcommand{\cur}{\mathsf{cur}}
\newcommand{\embd}{\mathrm{embed}}

\subsection{Preliminaries}

We rely on several standard DP mechanisms from the literature. Let us recall the Exponential Mechanism first.

\begin{lemma}[The Exponential Mechanism \cite{DBLP:conf/focs/McSherryT07}]\label{lemma:exponential}
There is an $(\eps,0)$-differentially private algorithm $\calA$ that achieves the following. Let $q:X^*\times Z\to  \mathbb{R}$ be a quality function with sensitivity $1$. Given as input a data set $D\in X^n$, denote $\mathrm{OPT} = \max_{z\in Z}\{ q(D,z) \}$. With probability at least $1-\beta$, $\calA$ outputs a solution $z$ such that $q(D,z)\ge \mathrm{OPT} - \frac{2}{\eps} \log(|Z|/\beta)$.
\end{lemma}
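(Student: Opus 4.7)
The plan is to analyze the standard Exponential Mechanism, which on input $D$ samples $z \in Z$ with probability proportional to $\exp(\eps \cdot q(D,z)/2)$, and to verify both the privacy and utility claims.

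For privacy, I would fix neighboring datasets $D, D'$ and an arbitrary outcome $z \in Z$, and bound the ratio
\[
\frac{\Pr[\calA(D)=z]}{\Pr[\calA(D')=z]} = \frac{\exp(\eps q(D,z)/2)}{\exp(\eps q(D',z)/2)} \cdot \frac{\sum_{z'} \exp(\eps q(D',z')/2)}{\sum_{z'} \exp(\eps q(D,z')/2)}.
\]
Sensitivity $1$ of $q$ immediately gives $\exp(\eps q(D,z)/2) \le e^{\eps/2} \exp(\eps q(D',z)/2)$, so the first factor is at most $e^{\eps/2}$. Similarly, term-by-term, $\exp(\eps q(D',z')/2) \le e^{\eps/2} \exp(\eps q(D,z')/2)$, which makes the ratio of the normalization sums at most $e^{\eps/2}$. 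Multiplying yields $e^{\eps}$ as required for $(\eps,0)$-DP.

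For utility, let $z^\star$ attain $\mathrm{OPT}$, so the denominator $\sum_{z'} \exp(\eps q(D,z')/2) \ge \exp(\eps \cdot \mathrm{OPT}/2)$. Call $z$ \emph{bad} if $q(D,z) < \mathrm{OPT} - \tfrac{2}{\eps}\log(|Z|/\beta)$. Then for each bad $z$,
\[
\Pr[\calA(D)=z] \le \frac{\exp(\eps(\mathrm{OPT} - \tfrac{2}{\eps}\log(|Z|/\beta))/2)}{\exp(\eps \cdot \mathrm{OPT}/2)} = \frac{\beta}{|Z|}.
\]
A union bound over the at most $|Z|$ bad outcomes shows the total probability of returning a bad $z$ is at most $\beta$, giving the claimed utility guarantee.

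Since this is a classical result, there is no real obstacle; the only subtlety worth flagging is to make sure the factor of $2$ in the sampling distribution $\exp(\eps q/2)$ is correctly tracked, since it is precisely what splits the privacy budget evenly between the numerator and the normalizer and produces the constant $2$ in the utility bound $\frac{2}{\eps}\log(|Z|/\beta)$.
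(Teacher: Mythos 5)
Your proof is correct and is the standard McSherry--Talwar argument (privacy by splitting the probability ratio into the weight term and the normalizer, each bounded by $e^{\eps/2}$ via sensitivity; utility by a union bound over bad outcomes against the $\exp(\eps\cdot\mathrm{OPT}/2)$ lower bound on the normalizer). The paper does not reprove this lemma but simply cites it, and your argument matches the cited proof exactly, including the bookkeeping of the factor of $2$.
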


The second mechanism we need is the Choosing Mechanism. We call a quality function $q:X^*\times Z\to \mathbb{R}$ $k$-bounded, if adding a new element to the data set can only increase the score of at most $k$ solutions. Specifically, it holds that
\begin{itemize}
    \item $q(\emptyset, z) = 0$ for every $z\in Z$.
    \item If $D' = D \cup \{x\}$, then $q(D',z) \in \{q(D,z),q(D,z) + 1\}$ for every $z\in Z$, and
    \item There are at most $k$ solutions $z$ such that $q(D',z) = q(D,z) + 1$.
\end{itemize}

The Choosing Mechanism \cite{BeimelNS13} shows how one can privately optimize over $k$-bounded quality functions with improved additive error.

\begin{lemma}[The Choosing Mechanism \cite{BeimelNS13}]\label{lemma:choosing}
Let $\delta > 0$ and $\eps\in (0,2)$. There is an $(\eps,\delta)$-DP algorithm $\calA$ that achieves the following. Let $q : X^*\times Z\to \mathbb{R}$ be a $k$-bounded quality function. Given as input a data set $D\in X^n$, denote $\mathrm{OPT} = \max_{z\in Z}\{q(D,z)\}$. With probability at least $1-\beta$, $\calA$ outputs a solution $z$ such that $q(D,z)\ge \mathrm{OPT} - \frac{16}{\eps} \log(\frac{4kn}{\beta\eps\delta })$.
\end{lemma}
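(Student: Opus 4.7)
The plan is to combine a noisy-threshold test with the Exponential Mechanism restricted to an implicitly small, data-dependent candidate set. Let $T := c \cdot \frac{1}{\eps} \log(\frac{kn}{\beta\eps\delta})$ for a sufficiently large constant $c$. The algorithm first releases $\widetilde{\mathrm{OPT}} := \max_{z} q(D,z) + \Lap(2/\eps)$; if $\widetilde{\mathrm{OPT}} < T$ it outputs an arbitrary fixed $z_0$, and otherwise it invokes the Exponential Mechanism of Lemma~\ref{lemma:exponential} with privacy parameter $\eps/2$, restricted to the set $G := \{z : q(D,z) \geq T/2\}$. The key structural fact is that by $k$-boundedness, $\sum_{z} q(D,z) \leq kn$, so $|G| \leq 2kn/T$. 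The Exponential Mechanism on $G$ therefore achieves additive error $O(\frac{1}{\eps}\log(|G|/\beta)) = O(\frac{1}{\eps}\log(kn/(T\beta)))$, which added to $T$ itself gives the stated overall bound.

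For privacy, the noisy-max release is $(\eps/2, 0)$-DP by the Laplace mechanism, since $q$ has sensitivity $1$. The subtle point is that $G$ itself depends on $D$: on a neighboring $D'$, some $z$ may straddle the $T/2$ boundary. I would handle this via a propose-test-release / stability argument. The Exponential Mechanism distributions restricted to $G(D)$ and $G(D')$ differ in at most one element of quality near $T/2$, and the normalizers on either side put only a $\delta$-fraction of mass outside $G$; consequently the two output distributions are $(\eps/2, \delta)$-indistinguishable whenever the noisy-max test passes, which happens for the wrong dataset with probability at most $\delta$. Composing with the Laplace step yields $(\eps, \delta)$-DP overall.

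Utility splits cleanly into two cases. If $\mathrm{OPT} < T - O(\frac{1}{\eps}\log(1/\beta))$, then any output trivially satisfies $q(D,z) \geq \mathrm{OPT} - O(T)$, which is absorbed in the claimed error. Otherwise, with probability at least $1 - \beta/2$ the Laplace noise is small enough that the test passes; conditioned on this, Lemma~\ref{lemma:exponential} applied over $G$ outputs, with probability at least $1-\beta/2$, a $z$ with $q(D,z) \geq \mathrm{OPT} - \frac{2}{\eps/2}\log(|G|/(\beta/2))$. Substituting $|G| \leq 2kn/T$ and tracking constants through the two sources of error yields the target $\frac{16}{\eps} \log(\frac{4kn}{\beta\eps\delta})$.

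The main obstacle I anticipate is the privacy analysis of running the Exponential Mechanism on the data-dependent set $G(D)$, since naively both the support and the normalizing constant shift between neighbors. The cleanest workaround is to view the algorithm equivalently as sampling from the Exponential Mechanism over \emph{all} of $Z$ and then arguing that truncating to $G$ changes the distribution by at most $\delta/2$ in total variation on neighbors; this reduces to showing that the exponential-mechanism mass outside $G$ is at most $\delta/2$ whenever $\widetilde{\mathrm{OPT}} \geq T$, which follows from the $k$-bounded counting $|\{z : q(D,z) \geq t\}| \leq kn/t$ combined with the explicit $\log(1/(\eps\delta))$ slack built into the choice of $T$.
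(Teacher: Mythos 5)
The paper does not prove this lemma at all --- it is imported verbatim from \cite{BeimelNS13} --- so the comparison here is with the original Choosing Mechanism proof rather than with anything in this paper. Your sketch is a faithful reconstruction of that argument: a noisy threshold test on $\mathrm{OPT}$ (propose--test--release), followed by the Exponential Mechanism on a data-dependent candidate set that $k$-boundedness forces to be small. The one genuine variation is that you threshold the candidate set at score $T/2$, giving $|G|\le 2kn/T$ via $\sum_z q(D,z)\le kn$, whereas \cite{BeimelNS13} takes $G=\{z: q(D,z)\ge 1\}$ with $|G|\le kn$ directly. Your variant still works, because after the test passes the gap $\mathrm{OPT}-T/2=\Omega(T)$ is large enough that the Exponential Mechanism places at most $\delta$ mass on the boundary solutions; the original choice is slightly cleaner since its boundary solutions have score at most $1$, making the mass bound immediate.

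One slip to fix: you assert that the restricted Exponential Mechanism distributions on $G(D)$ and $G(D')$ ``differ in at most one element.'' By $k$-boundedness the symmetric difference $G(D)\,\triangle\,G(D')$ can contain up to $k$ solutions, not one --- adding a single data element may push up to $k$ solutions across the threshold. This does not break the argument, since the total mass on the symmetric difference is bounded by $k\cdot e^{-\Omega(\eps T)}$ and your $T$ already carries the $\log k$ needed to make this at most $\delta$, but the claim as written is false and the $k$ must be tracked explicitly. Relatedly, when bounding the ratio of normalizers between neighbors you should account both for the per-solution score shift of $1$ and for the (at most $\delta$-mass) solutions entering or leaving $G$; both are standard and absorbed into the $(\eps,\delta)$ budget.
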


We also need the AboveThreshold algorithm (see, e.g., \cite{DBLP:conf/focs/DworkRV10}).

\begin{lemma}[AboveThreshold \cite{DBLP:journals/fttcs/DworkR14}]\label{lemma:SVT}
There exists an $(\eps,0)$-DP algorithm $\calA$ such that for $m$ rounds, after receiving a sensitivity-$1$ query $f_i:X^*\to \mathbb{R}$, $\calA$ either outputs $\top$ and halts, or outputs $\perp$ and waits for the next round. If $\calA$ was executed with a data set $D\in X^*$ and a threshold parameter $c$, then the following statements hold with probability $1-\beta$:
\begin{itemize}
    \item If a query $f_i$ was answered by $\top$, then $f_i(S) \ge c - \frac{8}{\eps}\log(2m/\beta)$,
    \item If a query $f_i$ was answered by $\perp$ then $f_i(S)\le c + \frac{8}{\eps} \log(2m/\beta)$.
\end{itemize}
\end{lemma}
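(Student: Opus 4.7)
The plan is to instantiate the standard Sparse Vector Technique. Up front, sample a threshold noise $\hat{c} \gets c + \Lap(2/\eps)$ once, and for each incoming query $f_i$ draw an independent $\nu_i \gets \Lap(4/\eps)$ and compare $f_i(D) + \nu_i$ against $\hat{c}$. If $f_i(D) + \nu_i < \hat{c}$, output $\perp$ and continue; otherwise output $\top$ and halt. I would split the privacy budget as $\eps = \eps/2 + \eps/2$, where half is paid by the threshold noise and half by the noise on the single accepting query.

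For privacy, I would fix any transcript of the form $\perp^{i-1} \top$ and compare the densities of the joint noise vector $(\hat{c}, \nu_1, \dots, \nu_i)$ under neighboring datasets $D$ and $D'$ via an explicit coupling. Each query has sensitivity $1$, so if I shift the threshold upward by $1$ and shift only the accepting-query noise $\nu_i$ upward by $2$, then every previously-rejecting condition $f_j(D) + \nu_j < \hat{c}$ for $j<i$ remains valid on $D'$ (monotonicity: raising the threshold by $1$ more than compensates for a $\pm 1$ change in $f_j$), while the accepting condition $f_i(D') + \nu_i' \geq \hat{c}'$ is preserved. The Laplace density ratio pays $e^{\eps/2}$ for the threshold shift of $1$ (scale $2/\eps$) and $e^{\eps/2}$ for the noise shift of $2$ on $\nu_i$ (scale $4/\eps$), yielding $e^{\eps}$ overall. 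The crucial feature that saves us is that the noises on rejecting rounds can be coupled identically, so they contribute no density ratio regardless of how many such rounds precede the halt.

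For utility, I would apply a union bound to standard Laplace tail inequalities: with probability at least $1-\beta$, both $|\hat{c} - c| \leq \frac{2}{\eps}\log(2/\beta)$ and $|\nu_i| \leq \frac{4}{\eps}\log(4m/\beta)$ hold simultaneously for every $i \in [m]$. Given this event, a $\top$-answered query satisfies $f_i(D) \geq \hat{c} - \nu_i \geq c - \frac{2}{\eps}\log(2/\beta) - \frac{4}{\eps}\log(4m/\beta)$, and a $\perp$-answered query satisfies the symmetric upper bound; both are absorbed into $\frac{8}{\eps}\log(2m/\beta)$ after combining constants.

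The main obstacle is getting the coupling argument right: one must couple the rejecting-round noises identically on $D$ and $D'$, and verify that the threshold shift is in the ``safe direction'' so that no rejection on $D$ is converted into an acceptance on $D'$. This is the well-known subtlety of the Sparse Vector Technique (and the reason naive variants that, e.g., release noisy values instead of just $\top/\perp$ fail to satisfy $(\eps,0)$-DP). All remaining steps are routine Laplace tail bounds and basic composition.
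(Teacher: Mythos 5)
Your proof is the standard Dwork--Roth argument for \texttt{AboveThreshold} (threshold noise $\Lap(2/\eps)$, query noise $\Lap(4/\eps)$, coupling the rejecting-round noises identically across the two neighboring datasets, and paying $e^{\eps/2}$ for each of the two shifts --- by $1$ on the threshold and by $2$ on the accepting-round noise), and it is correct, including your identification of the subtle point that the shift must go in the safe direction so that rejections are preserved. The paper does not prove this lemma --- it quotes it from \cite{DBLP:journals/fttcs/DworkR14} --- and your argument matches that reference's proof essentially verbatim, with the constants working out to the stated $\frac{8}{\eps}\log(2m/\beta)$ after the union bound.
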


\subsection{The TreeLog algorithm}

\subsubsection{Setup}

For a totally ordered universe $X = \{ x_1 \prec x_2 \prec \dots \prec x_{|X|} \}$ where the size of $X$ is a power of two (if not, we can append dummy elements to $X$), we build a complete binary $T_X$ with $|X|$ leaves. The $|X|$ leaves are identified with distinct elements in $X$ in order. We call $T_X$ the ``search tree'' for $X$.

For a vertex $v\in T_X$, we use $v_{\textrm{left}}, v_{\textrm{right}}$ to denote the left-most and right-most leaves in the sub-tree rooted at $v$, respectively. We use $v_{\textrm{left}\mhyphen \textrm{right}}$ to denote the right-most leaf in the sub-tree rooted at the left child of $v$.

Given a data set $D\in X^n$, the weighted search tree $T_X^D$ is defined similarly as $T_X$, except that each vertex of $T_X$ is now assigned a weight: every leaf $u$ has weight $w^D(u) := | \{ x\in D : x = u \} |$ and every inner vertex $v$ has weight $w^D(v)$ equal to the sum of the weights of its children. When the data set $D$ is clear from the context, we may omit the superscript and simply write $w(v)$. We also define a score function $f_{\mathrm{IPP}}^D:X\to \mathbb{R}$ with respect to $D$, where $f_{\mathrm{IPP}}^D(z) := \min(|\{x\in D: x\le z\}|, |\{x\in D: x\ge z\}|)$.

\paragraph*{Data processing mappings.} We will design the IPP algorithm based on the RSC framework. To begin with, we define three useful data processing mappings.

We define $E_{\prec}$ and $E_{\succ}$, which sort the data samples. In more detail, $E_{\prec}$ takes as input a data set $D\in X^n$, sorts and outputs the list of samples in $D$ according to the increasing order over $X$. $E_{\succ}$ sorts elements in the decreasing order over $X$, and is defined similarly. We also need an important embedding mapping $E_{\embd}$ to shrink the universe size, which is adapted from \cite{DBLP:conf/nips/KaplanMST20}. 

To describe $E_{\embd}$, let $Y$ be a new universe of size $\log|X|$. For easing the presentation, we identify elements of $Y$ with natural numbers from $1$ through $|Y|$ (i.e., $Y = \{1,2,\dots, \log|X|\}$). The input to $E_{\embd}$ is a data set $D\in X^n$. The output of $E_{\embd}$ is a list of $n$ pairs $E_{\embd}(D)\in (Y\times X)^n$, which are sorted according to the \emph{reversed} lexicographical order.

For a given $D\in X^n$, define
\[
\mathrm{MakeUnlabelledData}(D) = \{(?,x)\in (Y\cup \{?\})\times X : x\in D\}.
\]
Roughly speaking, start from an unlabelled set $D_{new}:=\mathrm{MakeUnlabelledData}(D)$. $E_{\embd}$ gradually assigns labels from $Y$ to data samples in $D_{new}$. After the assignment is done, $E_{\embd}$ sorts $D_{new}$ according to the reversed lexicographical order, and outputs the sorted list. We also need a quantity $\Gamma := \Gamma(D)$ measuring that ``balancedness'' of $D$. Both $E_{\embd}$ and the definition of $\Gamma$ are presented in Algorithm~\ref{algo:select-path}.

\begin{algorithm2e}[H]
\LinesNumbered
    \caption{The $E_{\embd}$ Procedure}
    \label{algo:select-path}
    
    \SetKwProg{Fn}{Function}{:}{}
    \SetKwProg{Pg}{Program}{:}{\KwRet}
    
    \DontPrintSemicolon
    
    \SetKwFunction{FQuery}{Query}
    \SetKwFunction{SelectPath}{$E_{\embd}$}
    
    \KwIn{
        The parameter $t = \frac{100}{\eps}\log(1/\delta)$.
    }
    
    \SetKwProg{Init}{Initialize}{:}{}
    
    \Fn{\SelectPath{$D$}}{
        Construct $T_X^D$ \;
        $\cur\gets $ the root of $T_X^{D}$ \;
        $D_{new}\gets \mathrm{MakeUnlabelledData}(D)$ \\
        $q \gets 1$ \;
        $\Gamma \gets 0$ \;
        \While{\emph{$\cur$ is not a leaf}}{
            $\cur_{\ell},\cur_{r} \gets $ the left and right child of $\cur$ \;
            $\Gamma \gets \max(\Gamma, \min\{w(\cur_{\ell}), w(\cur_{r}) \})$ \;
            \If{$w(\cur_{\ell}) \ge w(\cur_r)$}{
                $(\textrm{next}, \textrm{other}) \gets (\ell, r)$ \;
            }
            \Else{
                $(\textrm{next}, \textrm{other}) \gets (r, \ell)$ \;
            }
            Update $D_{new}$ by assigning ``$q$'' to samples in the subtree rooted at $\cur_{\mathrm{other}}$\;
            $\cur \gets \cur_{\textrm{next}}$ \;
            $q\gets q + 1$ \;
        }
        Define $\Gamma(D):=\Gamma$ \;
        Update $D_{new}$ by assigning ``$|Y|$'' to samples on the leaf $\cur$ \;
        Sort samples in $D_{new}$ in the reversed lexicographical order\; 
        \KwRet{$D_{new}$} \;
    }
\end{algorithm2e}

Algorithm~\ref{algo:select-path} is not always adjacency-preserving. Nevertheless, we have the following observation.

\begin{lemma}\label{lemma:embed-adjacent}
Suppose $D,D' = D\cup \{x\}$ are two adjacent data sets. If $\max(\Gamma(D) ,\Gamma(D')) < t$, then $E_{\embd}(D)$ and $E_{\embd}(D')$ are almost adjacent in the following sense.
\begin{itemize}
    \item There is a way to re-assign the $Y$-label on the first $2t$ samples in the list $E_{\embd}(D)$, such that the modified list is adjacent to $E_{\embd}(D')$.
\end{itemize}
\end{lemma}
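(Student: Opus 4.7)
The plan is to compare the two executions of $E_{\embd}$ on $D$ and $D'$ by tracking the root-to-leaf path chosen in each case, and to argue that the pairs which differ between $E_{\embd}(D)$ and $E_{\embd}(D')$ are all confined to the prefix of size at most $2t$ in the reverse lexicographical order, so that relabelling only that prefix suffices.

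First I would let $k$ be the first level at which the two paths diverge, treating the case that the paths never diverge (so $E_{\embd}(D)$ and $E_{\embd}(D')$ are already adjacent up to insertion of the pair for $x$) separately as an easier subcase. Because the heavier-child decision is deterministic (with a fixed tie-breaking rule) and $D,D'$ differ only by $x$, a divergence at level $k$ forces the two children of the current node $\cur^{(k-1)}$ to have been tied in $D$, with $x$ lying in the sibling subtree that $D$ did not descend into. Using $\Gamma(D) < t$, both children of $\cur^{(k-1)}$ have weight strictly less than $t$ in $D$; using $\Gamma(D') < t$ analogously, both have weight at most $t$ in $D'$. In particular, the subtree rooted at $\cur^{(k-1)}$ contains fewer than $2t$ elements of $D$ and at most $2t$ elements of $D'$.

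Second I would establish the key structural fact: the multiset of $D$-elements assigned a label $\ge k$ in $E_{\embd}(D)$ is exactly the multiset of $D$-elements lying in the subtree rooted at $\cur^{(k-1)}$, since every sample falling off the path at a level $q < k$ acquires label $q$. Because both paths agree on levels $1,\dots,k-1$, the same subtree is relevant in $D'$, and $x$ must itself lie inside it (otherwise $x$ could not have perturbed any weight at level $k$). Thus the multiset of $X$-values appearing with label $\ge k$ in $E_{\embd}(D')$ is exactly the one for $E_{\embd}(D)$ together with one extra copy of $x$. For labels $< k$, the pairs in the two lists coincide exactly, since the paths and all ``other'' subtrees up through level $k-1$ are identical.

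Third, the reverse lexicographical order places pairs with larger $Y$-labels first, so by the count above all pairs with label $\ge k$ sit among the first $2t$ entries of $E_{\embd}(D)$. I would then define the re-labelling explicitly: replace the $Y$-coordinates of these prefix entries so that, when combined with the common suffix and re-sorted, the result equals $E_{\embd}(D')$ with one occurrence of the pair for $x$ deleted. This is precisely the claimed adjacency. The main obstacle is the bookkeeping around the divergence point, since past level $k$ the two executions walk into different subtrees and assign labels in genuinely different ways; the argument has to be carried out at the level of multisets of $X$-values rather than position by position, and the tied-children observation together with $\Gamma(D),\Gamma(D') < t$ is exactly what keeps the affected region small enough to fit inside the allowed prefix of length $2t$.
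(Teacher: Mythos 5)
Your proof is correct and follows essentially the same route as the paper: locate the first divergence of the two root-to-leaf paths, use $\Gamma(D),\Gamma(D')<t$ together with the (near-)tie at the divergence node to bound the subtree there by $2t$ samples, observe that exactly these samples carry the labels $\ge k$ and hence occupy the prefix of the reversed-lexicographically sorted list, and relabel them to match $E_{\embd}(D')$. The only nit is that the two children at the divergence node need not be exactly tied under $D$ (their weights can differ by one, with $x$ lying in the $D$-lighter subtree), but this does not affect the $2t$ bound or anything downstream.
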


\begin{proof}
Let $w$ and $w'$ be the weight functions for $T_X^D$ and $T_X^{D'}$, respectively.

Our first observation is that for every $\cur\in T_X$, as long as $w^0(\cur)> 2t$ (and hence $w^1(\cur) > 2t$), the heavier child of $\cur$ is the same in both $T_X^D$ and $T_X^{D'}$. Therefore, when running Algorithm~\ref{algo:select-path} on $D$ and $D'$, both executions would proceed to the same child. Then, when Algorithm~\ref{algo:select-path} assigns the label $q$, it will assign $q$ to the same branches on both $T_X^D$ and $T_X^{D'}$.

However, Algorithm~\ref{algo:select-path} might finally reach a vertex $\cur$ where the heavier child of $\cur$ is different between $w^0$ and $w^1$. Consequently, in the subtree rooted at $\cur$, the data samples of $D$ and $D'$ might get different labels. Still, we observe that in this case, both children of $\cur$ have roughly equal weights under $w^0$ and $w^1$. Since $\Gamma(D) < t$, this implies that $w^0(\cur) \le 2t$, meaning that there are at most $2t$ samples of $D$ in the subtree rooted at $\cur$, and they will appear as the first $2t$ samples in the resulting list. Thus, one can modify the $Y$-labels of the $2t$ samples so that they agree with the $Y$-labels of the corresponding samples from $D'$.
\end{proof}

We also need the following observation.

\begin{lemma}\label{lemma:gamma-sensitivity-1}
$\Gamma$ is a sensitivity-$1$ function. Namely, $|\Gamma(D) - \Gamma(D')|\le 1$ holds for every pair of adjacent $D,D'$.
\end{lemma}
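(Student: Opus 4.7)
Plan: I will establish $|\Gamma(D) - \Gamma(D')| \le 1$ by separately proving $\Gamma(D') \le \Gamma(D)+1$ and $\Gamma(D) \le \Gamma(D')+1$ for any adjacent pair $D, D' = D \cup \{x\}$. The natural starting point is to re-express the quantity computed by Algorithm~\ref{algo:select-path} as
$\Gamma(D) = \max_{v \in \pi(D)} \min\{w^D(v_\ell), w^D(v_r)\}$,
where $\pi(D)$ is the heavy root-to-leaf path in $T_X^D$ (the path that, at each internal node, descends into the heavier child under the algorithm's fixed tie-breaking rule), and $v_\ell, v_r$ denote the left and right children of $v$. I will also use the observation that inserting $x$ shifts $w^D(v)$ by exactly $+1$ along the root-to-leaf path $P_x$ of $x$ and leaves all other weights unchanged.

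For the first inequality, I pick a maximizer $v^* \in \pi(D')$ with $\min\{w^{D'}(v^*_\ell), w^{D'}(v^*_r)\} = k = \Gamma(D')$. If $v^* \in \pi(D)$ as well, then since each of $w^D(v^*_\ell), w^D(v^*_r)$ differs from its $D'$ counterpart by at most $1$, the same min evaluated under $w^D$ is at least $k-1$. Otherwise, let $u$ be the last node common to $\pi(D)$ and $\pi(D')$; the heavy-child choice at $u$ must differ between the two executions. A short calculation will show this can happen only if $x$ lies in the subtree rooted at the newly-heavier child (call it $u_r$) and $w^D(u_\ell)$ and $w^D(u_r)$ differ by at most $1$. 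Since $v^*$ lies in the subtree of $u_r$, I get $k \le w^{D'}(v^*) \le w^{D'}(u_r) = w^D(u_r) + 1$, hence $\min\{w^D(u_\ell), w^D(u_r)\} = w^D(u_r) \ge k-1$; combined with $u \in \pi(D)$, this yields $\Gamma(D) \ge k-1$.

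For the reverse inequality, I argue symmetrically with a maximizer $v^* \in \pi(D)$ achieving $\Gamma(D) = k$. The ``same path'' case is actually easier, because weights only increase upon insertion, so the min at $v^*$ can only grow. In the divergence case at the same node $u$, now $v^*$ must lie in the subtree of $u_\ell$ (the child picked by $\pi(D)$), which forces $w^D(u_\ell) \ge w^D(v^*) = w^D(v^*_\ell) + w^D(v^*_r) \ge 2k$. Since $u \in \pi(D')$ and the insertion lands in the subtree of $u_r$, we get $\min\{w^{D'}(u_\ell), w^{D'}(u_r)\} = w^{D'}(u_\ell) = w^D(u_\ell) \ge k$, giving $\Gamma(D') \ge k$.

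I expect the main obstacle to be the divergence-point structural claim: proving that a flip of the heavier child between $T_X^D$ and $T_X^{D'}$ forces the two subtree weights at that node to be within $1$ of each other in $T_X^D$, and that $x$ must be inserted into the newly-heavier subtree. Once this fact is isolated, both inequalities reduce to the elementary facts $\min\{w(v_\ell), w(v_r)\} \le w(v)$ and $w^{D'}(v) \ge w^D(v)$.
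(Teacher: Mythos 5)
Your proof is correct. Note that the paper states Lemma~\ref{lemma:gamma-sensitivity-1} without any proof at all, so there is no argument of the authors' to compare yours against; your divergence-point analysis is the natural way to supply one. I checked the structural claim you flag as the main obstacle: writing $a$ for the child of the divergence node $u$ chosen under $D$ and $b$ for the one chosen under $D'$, the choice can only flip if $x$ lands in the subtree of $b$ (otherwise $a$'s advantage is preserved or grows), and then the algorithm's tie-break ($w(\cur_\ell)\ge w(\cur_r)$ goes left) forces $w^D(a)=w^D(b)$ when $a$ is the left child and $w^D(a)=w^D(b)+1$ when $a$ is the right child --- so in both cases $\min\{w^D(u_\ell),w^D(u_r)\}=w^D(b)$ and the weights differ by at most $1$, exactly as you claim. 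With that in hand, both of your inequalities go through as written (your second bound even gives the stronger $\Gamma(D')\ge\Gamma(D)$, consistent with weights only increasing under insertion).
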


\paragraph*{Algorithm \texttt{OneHeavyRound}.} Lemma~\ref{lemma:embed-adjacent} shows that $E_{\embd}$ is almost-adjacency-preserving so long as $\Gamma(D)$ is small. We need the following algorithm from \cite{KaplanLMNS20} to deal with the case that $\Gamma(D)$ is large.

\begin{lemma}[adapted from \cite{KaplanLMNS20}]\label{lemma:oneheavyround}
There is an $(O(\eps),O(\delta))$-DP algorithm $\texttt{OneHeavyRound}(X,D)$ that, given a data set $D$ with the promise $\Gamma(D) \ge \frac{t}{2}$, returns an interior point of $D$ with probability at least $1-\delta$.
\end{lemma}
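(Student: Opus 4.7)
The plan is to exploit the promise $\Gamma(D) \ge t/2$ via the Choosing Mechanism. Observe first that if we walk down $T_X^D$ from the root always descending into the heavier child, the definition of $\Gamma$ guarantees that we pass through some vertex $\hat v$ whose two children each have weight $\ge t/2$. In particular both subtrees of $\hat v$ already contain $\ge t/2$ points of $D$, so the leaf $\hat v_{\textrm{left}\mhyphen\textrm{right}}$ is bracketed by $D$-elements on both sides and is a valid interior point of $D$. It therefore suffices to privately identify \emph{any} vertex $v^\ast$ of $T_X$ with $\min(w^D(v^\ast_\ell),\, w^D(v^\ast_r)) \ge 1$; the algorithm will then output $v^\ast_{\textrm{left}\mhyphen\textrm{right}}$.

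To do this I would define the quality function $q(D,v) := \min\bigl(w^D(v_\ell),\, w^D(v_r)\bigr)$ over the internal vertices of $T_X$ and hand it to the Choosing Mechanism (Lemma~\ref{lemma:choosing}). The key step is to verify that $q$ is $(\log|X|)$-bounded. Clearly $q(\emptyset,v) = 0$. Inserting a new point $x$ into $D$ raises $w^D$ by one only at the $\log|X|$ ancestors of the leaf of $x$; at each such ancestor $v$, the point $x$ lies in exactly one of the two subtrees, so exactly one of $w^D(v_\ell), w^D(v_r)$ is incremented and hence $q(D,v)$ either stays the same or rises by exactly one, with the latter possible for at most $\log|X|$ vertices altogether. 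So Lemma~\ref{lemma:choosing} applies with $k = \log|X|$.

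Running the Choosing Mechanism with privacy parameters $(\eps,\delta)$ and failure parameter $\beta=\delta$ returns, with probability $\ge 1-\delta$, a vertex $v^\ast$ obeying $q(D, v^\ast) \ge \mathrm{OPT} - \tfrac{16}{\eps}\log\!\bigl(\tfrac{4n\log|X|}{\eps\delta^2}\bigr)$. Since the promise gives $\mathrm{OPT} \ge \Gamma(D) \ge t/2 = 50\,\eps^{-1}\log(1/\delta)$, in the standard regime where $|X|, n, 1/\eps$ are polynomial in $1/\delta$ the right-hand side is at least $1$, so both children of $v^\ast$ carry $D$-weight $\ge 1$ and the output $v^\ast_{\textrm{left}\mhyphen\textrm{right}}$ lies in $[\min D,\, \max D]$. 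Privacy is then inherited directly from the $(\eps,\delta)$-DP guarantee of the Choosing Mechanism.

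The main obstacle will be the utility-vs-error bookkeeping: checking that the prescribed threshold $t = 100\,\eps^{-1}\log(1/\delta)$ really does dominate the Choosing Mechanism's additive error $O(\eps^{-1}\log(n\log|X|/(\eps\delta)))$ by at least $1$. Aside from this quantitative check the construction is entirely structural, which is why the lemma is stated as ``adapted from'' the corresponding result in~\cite{KaplanLMNS20}.
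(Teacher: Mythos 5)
Your structural reduction is sound (a vertex on the heavy path witnesses $\Gamma(D)\ge t/2$, so $\mathrm{OPT}\ge t/2$ for your quality function, and any internal vertex $v$ with $\min(w(v_\ell),w(v_r))\ge 1$ yields the interior point $v_{\textrm{left}\mhyphen\textrm{right}}$), and your verification that $q$ is $(\log|X|)$-bounded is correct, so privacy is immediate from Lemma~\ref{lemma:choosing}. But the ``quantitative check'' you defer to the end is precisely where the argument breaks. The Choosing Mechanism's additive error is $\frac{16}{\eps}\log\bigl(\frac{4kn}{\beta\eps\delta}\bigr)$ with $k=\log|X|$, so it contains a term $\frac{16}{\eps}\log\log|X|$. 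The promise only gives $\mathrm{OPT}\ge t/2=\frac{50}{\eps}\log(1/\delta)$, so utility requires $\log(1/\delta)=\Omega(\log\log|X|)$, i.e.\ $\delta\le(\log|X|)^{-\Omega(1)}$. The regime you invoke to dismiss this ($|X|$ polynomial in $1/\delta$) is the opposite of the regime this paper is about: the whole point of a $\log^*|X|$ sample bound is that $|X|$ may be tower-sized while $\eps,\delta$ are fixed. And you cannot repair it by shrinking $\delta$: since $t=\frac{100}{\eps}\log(1/\delta)$ and the sample complexity of \texttt{TreeLog} is $\approx 10t\log^*|X|$, imposing $\delta\le(\log|X|)^{-\Omega(1)}$ inflates the sample complexity to $\Omega\bigl(\frac{\log^*|X|\cdot\log\log|X|}{\eps}\bigr)$, which destroys Theorem~\ref{theo:learning-threshold}.

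The paper's proof avoids any dependence on $|X|$ in the error by \emph{not} optimizing over all internal vertices: Algorithm~\ref{algo:select-path-app} walks down the (data-dependent) heavy path and runs a noisy threshold test $w_{\min}(\cur)+\Lap(1/\eps)\ge \frac{t}{4}+\rho$ at each step, returning at the first vertex that passes. Utility then only needs two Laplace variables to concentrate, costing $O(\eps^{-1}\log(1/\delta))$, which $t$ dominates by construction. The price is that privacy is no longer free: the path itself is non-private, and the proof must argue that on adjacent datasets the heavy paths coincide up to a vertex $u$ by which some earlier vertex $u'$ already has $w_{\min}(u')\ge t/2$, so the scan halts before the paths can deviate except with probability $O(\delta)$; conditioned on that, the scan is an instance of AboveThreshold. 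So the two approaches trade the hard part: yours makes privacy trivial and utility fail, the paper's makes utility trivial and puts all the work into privacy. To salvage your route you would need a selection primitive over the heavy path whose error is independent of $|X|$, which is essentially what the paper's scan is.
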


For completeness, we include a proof of Lemma~\ref{lemma:oneheavyround} in Appendix~\ref{append:oneheavy}. Our implementation of $\texttt{OneHeavyRound}$ is simpler than the one described in \cite{KaplanLMNS20}. It also saves an $\log n$-factor in the privacy parameter.

\subsubsection{The algorithm}

We are ready to describe the algorithm.

\begin{algorithm2e}[H]
\LinesNumbered
    \caption{The TreeLog Algorithm}
    \label{algo:threshold}
    
    \SetKwProg{Fn}{Function}{:}{}
    \SetKwProg{Pg}{Program}{:}{\KwRet}

    \DontPrintSemicolon
    
    \SetKwFunction{TreeLog}{TreeLog}
    \SetKwFunction{OneHeavyRound}{OneHeavyRound}
    \SetKwFunction{IPP}{IPP}
    
    \KwIn{
        The privacy parameters $\eps\in (0, 1), \delta > 0$. A trimming parameter $t = \frac{100}{\eps}\log(1/\delta)$.
    }
    
    \SetKwInput{GValue}{Global Variable}
    
    \GValue{Parameter $\rho\in \mathbb{R}$ for AboveThreshold, to be initialized\;}
    
    \Fn{\TreeLog{$X, n, D\in X^n$}}{
        \If{$|X|\le 8$}{
            Use Exponential Mechanism to find $\hat{z}\in X$ according to $f_{\mathrm{IPP}}^D:X \to \mathbb{R}$ \;
            \KwRet{$\hat{z}$} \;
        }
        $(D^{(1)},S_{\ell},\emptyset)\gets $ \FSC{$D$, $t$, $\emptyset$, $E_{\prec}$ } \;
        $(D^{(2)},S_{r},\emptyset)\gets $ \FSC{$D^{(1)}$, $t$, $\emptyset$, $E_{\succ}$ } \;
        $D_{border}\gets S_{\ell} \cup S_{r}$ \;
        \If{$\Gamma(D^{(2)}) + \Lap(1/\eps) \ge \frac{3t}{4} + \rho$}{
            \KwRet{\OneHeavyRound{X,$D^{(2)}$}} \;
        }
        $(D^{(3)},S_{d},\emptyset)\gets $ \FSC{$D^{(2)}$, $2t$, $\emptyset$, $E_{\embd}$ } \;
        $D_{new} \gets \{y : (y,x)\in D^{(3)}\}$ \tcp*{Project $D^{(3)}$ to $Y$}
        $S_{d}\gets \{x:(y,x)\in S_d\}$ \tcp*{Project $S_d$ to $X$}
        \Else{
            $y_{\hat{q}}\gets $\TreeLog{$Y, |D_{new}|, D_{new}$} \tcp*{Solve IPP in the smaller universe.}
            $v\gets $ the $\hat{q}$-th vertex in $\pi$ \;
            $C\gets \{v_{\textrm{left}}, v_{\textrm{right}}, v_{\textrm{left}\mhyphen \textrm{right}}\}$ \\
            Use Choosing Mechanism to find a depth-$\hat{q}$ vertex $v$ in $T_X^{S_d}$ maximizing $w^{S_d}(v)$ \;
            Use Exponential Mechanism to find $\hat{z}\in C$ according to $f_{\mathrm{IPP}}^{D_{border}}:X\to \mathbb{R}$ \;
            \KwRet{$\hat{z}$}
        }
    }
    
    \Pg{\IPP{$X,n,D\in X^n$}}{
        $\rho\gets \Lap(1/\eps)$ \;
        \KwRet{\TreeLog{$X,n,D$}} \;
    }
\end{algorithm2e}

\subsubsection{Utility and privacy analysis}

\paragraph*{Utility analysis.} The utility analysis of our algorithm basically follows from \cite{KaplanLMNS20}. The only additional work is to track the number of samples in the recursive call, as Algorithm~\ref{algo:threshold} randomized the size of slices in each round (compared with the original implementation by \cite{KaplanLMNS20}).

\begin{theorem}[Adapted from \cite{KaplanLMNS20}]\label{theo:threshold-utility}
Algorithm~\ref{algo:threshold} returns an interior point for $D$ with probability at least $1-O(\delta \log^*|X|)$, provided that $n\ge 10\cdot\log^*|X|\cdot  t$.
\end{theorem}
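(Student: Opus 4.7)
The plan is to prove utility by induction on the recursion depth of \texttt{TreeLog}, which is bounded by $O(\log^*|X|)$. The base case ($|X|\le 8$) is handled by the Exponential Mechanism (Lemma~\ref{lemma:exponential}) on $f_{\mathrm{IPP}}^D$, which returns an interior point of $D$ with probability $1-O(\delta)$ as long as $n = \Omega(\log(1/\delta)/\eps)$, which is implied by $n \ge 10\, t \log^*|X|$. For the inductive step I would show that a single round fails with probability $O(\delta)$; a union bound over the $O(\log^*|X|)$ recursion levels then yields the total failure probability $O(\delta\log^*|X|)$.

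Inside a round, I would enumerate and bound the following failure modes, each with probability $O(\delta)$: (i) the shared SVT-style noise $\rho \sim \Lap(1/\eps)$ and the per-round $\Lap(1/\eps)$ added to $\Gamma(D^{(2)})$ are simultaneously of magnitude $O(\log(\log^*|X|/\delta)/\eps) = O(t)$, so the branching test correctly sends the execution to \texttt{OneHeavyRound} whenever $\Gamma(D^{(2)}) \ge t$ (which meets the $\Gamma \ge t/2$ precondition of Lemma~\ref{lemma:oneheavyround}) and to the embedding branch whenever $\Gamma(D^{(2)}) \le t/2$ (in which case $\Gamma(D^{(2)}) < t$ and Lemma~\ref{lemma:embed-adjacent} governs the privacy of the embedding); (ii) \texttt{OneHeavyRound}, when invoked, returns an interior point of $D^{(2)} \subseteq D$ with probability $1 - O(\delta)$ by Lemma~\ref{lemma:oneheavyround}; (iii) in the embedding branch, the Choosing Mechanism (Lemma~\ref{lemma:choosing}) and the Exponential Mechanism over $|C|=3$ (Lemma~\ref{lemma:exponential}) each succeed with probability $1 - O(\delta)$, using that $|S_d| \ge 2t$ and $|D_{border}| \ge 2t$; and (iv) the recursive invocation on $D_{new}$ over universe $Y$ of size $\log|X|$ returns an interior point of $D_{new}$ by the inductive hypothesis.

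Conditional on (iv), I would argue output correctness combinatorially: an interior point $y_{\hat q} \in Y$ of $D_{new}$ means that some sample falls off the embedding path strictly before depth $\hat q$ and some strictly after, so the depth-$\hat q$ vertex $v$ of the path has $D^{(2)}$-descendants on at least one side while its complement (covered by $D_{border}$) is nonempty on the other. Hence at least one of $\{v_{\textrm{left}}, v_{\textrm{right}}, v_{\textrm{left}\mhyphen\textrm{right}}\}$ is a ``deep'' interior point of $D_{border} \subseteq D$ with $f_{\mathrm{IPP}}^{D_{border}}$-score $\ge t$, which the Exponential Mechanism selects with probability $1 - O(\delta)$. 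The Choosing Mechanism correctness for $v$ follows analogously by maximizing $w^{S_d}(\cdot)$ at depth $\hat q$.

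Finally, I would verify the sample-count bookkeeping required to apply the inductive hypothesis at each level. Each round removes at most $4t$ deterministic elements plus three independent $\Geom(1-e^{-\eps})$ noises; a tail bound shows the sum over $O(\log^*|X|)$ rounds is $o(t\log^*|X|)$ with probability $1 - O(\delta)$. Since the budget decreases from $10\, t\log^*|X|$ to $10\, t\log^*(\log|X|)$ across one recursion step (a drop of exactly $10t$, comfortably above $4t$ plus noise), $|D_{new}|$ always meets the next level's precondition. The main obstacle I expect is the correctness step (iii)--(iv): rigorously translating ``interior point in $Y$'' back to ``interior point in $X$'' via the three-candidate set $C$, in a way that is robust to the randomized slice sizes $|S_d|, |S_\ell|, |S_r|$. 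This is morally the utility argument of \cite{KaplanLMNS20}, but must be re-examined now that slice boundaries are noisy rather than exactly $t$.
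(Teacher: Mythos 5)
Your proposal follows essentially the same route as the paper's proof: the same two bad events (SVT branching error and running out of samples, the latter controlled by noting each level removes $4t$ plus three geometric noises against a $10t$ budget drop), the same per-round application of the utility guarantees of the Choosing and Exponential Mechanisms, and the same combinatorial translation of an interior point of $D_{new}$ back to one of the three candidates in $C$ scored against $D_{border}$. The one step you flag as needing care is handled in the paper by a short two-case analysis (all of $D^{(2)}$ inside the subtree of $v$ versus not), which also covers the equality case your ``strictly before/strictly after'' phrasing elides.
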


\begin{proof}
When $|X|\le 8$, the claim follows from the utility guarantee of the Exponential Mechanism. In the following, we prove for the case that $\log^*|X|$ is large. 

We first consider the following two types of bad events.
\begin{itemize}
    \item Line~9 is executed with a data set $D^{(2)}$ such that $\Gamma(D^{(2)}) < \frac{1}{2}t$.
    \item The recursion algorithm enters a call $(Y,n,D')$ with $n<10\log^*|Y| \cdot t$.
\end{itemize}

By Lemma~\ref{lemma:SVT}, Event $1$ happens with probability at most $\delta \log^*|X|$. For the second event, suppose the recursion algorithm is currently in the call $(X,n,D)$ with $n\ge 10\log^*|X| \cdot t$. Let $(Y,n',D_{new})$ be the arguments for the next recursive call. Note that 
\[
n' = n - 4t - \Geom(1-e^{-\eps}) - \Geom(1-e^{-\eps}) - \Geom(1-e^{-\eps}).
\]
Therefore, it follows that $n'\ge 10 \log^*|Y| \cdot t$ with probability at least $1-\delta$. Since there are at most $\log^*|X|$ rounds of recursion, the probability that Event 2 happens is at most $\delta \log^*|X|$.

In the following, we condition on that neither of the two events happens, and show that Algorithm~\ref{algo:threshold} succeeds in finding an interior point with probability $1-O(\delta \log^*|X|)$. 

First, at the corner case of the recursion (i.e., Line~4 or 9), Algorithm~\ref{algo:threshold} returns an interior point with probability at least $1-\delta$ by the utility guarantee of the Exponential Mechanism and Choosing Mechanism (Lemmas~\ref{lemma:exponential} and \ref{lemma:choosing}).

Now we consider the case that involves recursion. In each level of the recursion $(X,n,D)$, we condition on that the recursive call $(Y,|D_{new}|,D_{new})$ returns an interior point $y_{\hat{q}}$ for $D_{new}$. Then, there is a depth-$\hat{q}$ vertex $v$ in $T_X$ such that, all the samples of $S_d$ lie in the subtree rooted at $v$. Since $|S_d|\ge 2t$, by Lemma~\ref{lemma:choosing}, Line~15 succeeds in finding the vertex $v$ with probability at least $1-\delta$. Next, we claim that one of $v_{\mathrm{left}}, v_{\mathrm{right}},v_{\mathrm{left}\mhyphen\mathrm{right}}$ is an interior point for $D^{(2)}$. There are two cases to verify.
\begin{itemize}
    \item All the samples of $D^{(2)}$ are in the subtree rooted at $v$. Then, since $y_{\hat{q}}$ is an interior point for $D_{new}$, at least one sample gets assigned label $\hat{q}$ when we run $E_{\embd}(D^{(2)})$. This means that both children of $v$ contain at least one sample, implying that $v_{\mathrm{left}\mhyphen\mathrm{right}}$ is an interior point.
    \item At least one sample of $D^{(2)}$ is not in the subtree rooted at $v$. Then, since the subtree of $v$ contains at least one sample, we conclude that at least one of $v_{\mathrm{left}}, v_{\mathrm{right}}$ is an interior point w.r.t. $D^{(2)}$.
\end{itemize}
Now, by the construction of $D_{border}$, any interior point of $D^{(2)}$ has quality score at least $t$ under $f_{\mathrm{IPP}}^{D_{border}}$. By Lemma~\ref{lemma:exponential}, with probability $1-\delta$, Line~18 finds a vertex $\hat{z}$ with quality score at least $\frac{t}{2}$ under $f_{\mathrm{IPP}}^{D_{border}}$. This implies that $\hat{z}$ is an interior point of $D$, as desired.

To summarize, assuming the recursive call returns an IP correctly, Lines 14-19 succeeds in finding an IP with probability at least $1-O(\delta)$. Since there are at most $\log^*|X|$ rounds of recursion, the probability that we fail to find an IP for the original data set is at most $O(\delta \log^*|X|)$. This calculation is conditioned on that neither of two aforementioned bad events happens, which holds with probability $1-O(\delta \log^*|X|)$. Overall, we conclude that Algorithm~\ref{algo:threshold} finds an interior point with probability $1-O(\delta \log^*|X|)$.
\end{proof}

\paragraph*{Privacy analysis.} We analyze the privacy property of Algorithm~\ref{algo:threshold}.

\begin{theorem}\label{theo:threshold-privacy}
Algorithm~\ref{algo:threshold} is $(O(\eps\log(1/\delta)),O(\delta \log^*|X|))$-DP.
\end{theorem}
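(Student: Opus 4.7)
My plan is to view \texttt{TreeLog} as a single invocation of the Reorder-Slice-Compute paradigm (Algorithm~\ref{algo:partition}) with delayed compute, together with a small number of auxiliary DP releases: the AboveThreshold mechanism for the $\Gamma$ tests, one potential terminal call to \texttt{OneHeavyRound}, and the base-case Exponential Mechanism. Each recursion level slices out three disjoint pieces from the data---$S_\ell$ and $S_r$ via $E_\prec$ and $E_\succ$, and $S_d$ via $E_{\embd}$---and consumes each slice exactly once in a delayed-compute step: the Exponential Mechanism applied to $D_{border}=S_\ell\cup S_r$ and the Choosing Mechanism applied to $T_X^{S_d}$, both of which are $(\eps,\delta)$-DP per Lemmas~\ref{lemma:exponential} and~\ref{lemma:choosing}. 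Since the recursion bottoms out after $\tau=O(\log^*|X|)$ levels, the total number of slices fed to RSC is $O(\log^*|X|)$.

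Next I would verify that the data processing mappings are compatible with the RSC paradigm. The mappings $E_\prec$ and $E_\succ$ are strictly adjacency-preserving since they are deterministic sorts. The map $E_{\embd}$ is only \emph{almost} adjacency-preserving (Lemma~\ref{lemma:embed-adjacent}): neighboring inputs can yield embedding lists that differ in the $Y$-labels on the first $2t$ elements and in one additional element. The key structural observation is that $S_d$ consists of the first $2t+\Geom(1-e^{-\eps})\ge 2t$ elements of $E_{\embd}(D^{(2)})$, so every label mismatch is absorbed into $S_d$; consequently the residual $D^{(3)}$ and its $Y$-projection $D_{new}$ passed to the recursive call are genuinely adjacent across the two neighboring executions. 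Within $S_d$ itself, the two versions differ in at most one element under the $X$-projection used by the Choosing Mechanism. This is precisely what is needed for the simulation framework of Section~\ref{sec:sandhdescription} to apply: on rounds where the $\hat{m}_i$-prefix already agrees, the simulator pays nothing; on the one or few rounds where it does not, the data holder pays a constant privacy cost and attempts re-synchronization via Lemma~\ref{lemma:sync-map}.

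To legitimize the hypothesis of Lemma~\ref{lemma:embed-adjacent}, I must argue that $E_{\embd}$ is executed only when $\Gamma(D^{(2)})<t$. This is the role of the AboveThreshold check on the sensitivity-$1$ statistic $\Gamma$ (Lemma~\ref{lemma:gamma-sensitivity-1}). Across the $\le\log^*|X|$ recursion levels, Lemma~\ref{lemma:SVT} guarantees that the entire sequence of tests is $(\eps,0)$-DP; up to the failure probability that we already absorb into the $O(\delta\log^*|X|)$ slack, whenever $\Gamma\ge t$ the algorithm will branch into \texttt{OneHeavyRound}. \texttt{OneHeavyRound} itself is invoked at most once, is terminal, and is $(O(\eps),O(\delta))$-DP by Lemma~\ref{lemma:oneheavyround}, so it composes benignly with the rest.

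Putting the pieces together, I would apply Corollary~\ref{coro:kdelayed} to the slicing-and-delayed-compute trajectory with $\tau=O(\log^*|X|)$, per-slice $(\eps,\delta)$-DP computations, $k=O(1)$ delayed computes per slice, and $\hat\delta=\delta$, yielding $(O(\eps\log(1/\delta)),O(\delta\log^*|X|))$-DP. Basic composition with the $(\eps,0)$-DP AboveThreshold releases, the at-most-one $(O(\eps),O(\delta))$-DP call to \texttt{OneHeavyRound}, and the $(\eps,0)$-DP base-case Exponential Mechanism preserves this bound up to constants. I expect the main obstacle to be the careful bookkeeping around $E_{\embd}$'s almost-adjacency: one must rigorously exhibit the relabeling promised by Lemma~\ref{lemma:embed-adjacent} and argue that after relabeling the execution on $D$ and $D'$ can be coupled so that only $S_d$ absorbs the discrepancy, leaving $D_{new}$ strictly adjacent for the recursive call. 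This is exactly the step that allows the single $\log(1/\delta)$ blow-up from Theorem~\ref{theo:partition-private} to replace the $\sqrt{\log^*|X|}$ composition factor paid by \cite{KaplanLMNS20}.
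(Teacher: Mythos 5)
Your proposal is correct and follows essentially the same route as the paper: decompose \texttt{TreeLog} into a (concurrent) composition of AboveThreshold, the RSC paradigm, and \texttt{OneHeavyRound}; condition on the SVT-guaranteed event that $E_{\embd}$ is only invoked when $\Gamma(D^{(2)})<t$ so that Lemma~\ref{lemma:embed-adjacent} applies; and observe that the $Y$-label discrepancies fall entirely inside $S_d$ (which is projected to $X$ before any computation), so the recursive input stays adjacent and Theorem~\ref{theo:partition-private}/Corollary~\ref{coro:kdelayed} gives the $O(\eps\log(1/\delta))$ bound without a $\sqrt{\log^*|X|}$ composition loss. Your write-up is in fact slightly more explicit than the paper's about why the almost-adjacency of $E_{\embd}$ is harmless, but the decomposition, key lemmas, and final accounting are the same.
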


\begin{proof}
Note that Algorithm~\ref{algo:threshold} can be seen as a \emph{concurrent} composition\footnote{Roughly speaking, concurrent composition means running several interactive DP mechanisms in parallel, where the queries to different mechanisms can be arbitrarily interleaved. See, e.g., \cite{DBLP:conf/tcc/VadhanW21, VadhanZ2022, Lyu22}.} of three pieces of algorithms:
\begin{enumerate}
    \item The AboveThreshold algorithm.
    \item The Reorder-Slice-Compute paradigm.
    \item Algorithm $\OneHeavyRound$.
\end{enumerate}
Items~1 and 3 are easy to see from the code of Algorithm~\ref{algo:threshold}. Item~2 is trickier: the issue is that $E_{\embd}$ is not always adjacency-preserving. Still, Lemma~\ref{lemma:embed-adjacent} shows that if we restrict the inputs to $E_{\embd}$ to data sets $D$ such that $\Gamma(D) < t$, then $E_{\embd}$ is ``almost'' adjacency-preserving. Further note that after running Line~10 ($(D^{(3)},S_d,\emptyset) \gets$\FSC{$D^{(2)}$, $2t$, $\emptyset$, $E_{\embd}$}), the algorithm will project $S_{d}$ to $X$ (i.e., it will ignore the $Y$-labels of $S_d$) before using it to do any private computation. Hence, ``almost''-adjacency-preserving is sufficient for us to analyze the algorithm via the Reorder-Slice-Compute paradigm.

Note that the execution of Algorithm~\ref{algo:threshold} might become completely non-private once the recursion attempts to call $E_{\embd}$ with a data set $D$ such that $\Gamma(D) \ge t$. By the utility guarantee of SVT, this event happens with probability at most $\delta \log^*|X|$. We condition on that this event does NOT happen. Then, Item~1 is $(O(\eps), 0)$-DP, Item~$2$ is $(O(\eps\log(1/\delta)),\delta \log^*|X| )$-DP, and Items~$3$ is $(O(\eps), O(\delta))$-DP. Overall, we conclude that Algorithm~\ref{algo:threshold} is $(O(\eps\log(1/\delta)),O(\delta \log^*|X|))$-DP, as desired.
\end{proof}

\subsection{Near optimal learning of thresholds}

We are ready to prove Theorem~\ref{thm:IPPintro}. Suppose we aim for an $(\eps,\delta)$-DP algorithm for solving the interior point problem where $\eps,\delta \in (0, 1)$. Consider using Algorithm~\ref{algo:threshold} with privacy parameters $\eps' = \frac{\eps}{C\log(1/\delta)}$ and $\delta' = \delta^C$ for a large enough constant $C$. The trimming parameter would be $t = \frac{100}{\eps'} \log(1/\delta')\le O(\frac{1}{\eps}\log^2(1/\delta))$. Let $n = 10t\log^*|X| \le O(\frac{\log^2(1/\delta)\log^*|X|}{\eps})$. It follows from Theorems~\ref{theo:threshold-utility} and \ref{theo:threshold-privacy} that Algorithm~\ref{algo:threshold} is $(\eps,\delta)$-DP. Meanwhile, it solves the interior point problem with sample complexity $n$ and success probability $1-\delta$.

By the known connections between the task of privately learning thresholds and the interior point problem \cite{DBLP:conf/focs/BunNSV15}, we establish the following theorem, which is the formal version of Theorem~\ref{thm:IPPintro}.

\begin{theorem}\label{theo:learning-threshold}
For any privacy parameters $\eps,\delta\in (0,1)$, any finite and totally ordered domain $X$, any desired utility parameters $\xi, \beta\in (0,1)$, there is a sample size 
\[
n\le O\left(\frac{\log^*|X|\cdot \log^2(\frac{\log^*|X|}{\beta\delta})}{\xi \eps}\right)
\]
and an $(\eps,\delta)$-DP algorithm $\calA:(X\times \{0,1\})^n\to X$ that PAC-learns thresholds over $X$ within generalization error $\xi$ with probability at least $1 - \beta$ in the realizable setting (when there is a threshold function that is consistent with the data).
\end{theorem}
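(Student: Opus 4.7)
The plan is to combine Theorems~\ref{theo:threshold-utility} and \ref{theo:threshold-privacy} for Algorithm~\ref{algo:threshold} with the standard reduction of \cite{DBLP:conf/focs/BunNSV15} from privately PAC-learning thresholds to the private interior point problem, essentially reproducing the parameter calibration already sketched in the paragraph preceding the theorem statement.

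First, I would calibrate the parameters of Algorithm~\ref{algo:threshold} so that its privacy and utility guarantees match the targets $(\eps,\delta,\beta)$. Set $\eps' = \eps/(C\log(1/\delta))$ and $\delta' = \delta/(C\log^*|X|)$ for a sufficiently large absolute constant $C$ (chosen to absorb the implicit constants hidden in the $O(\cdot)$'s of Theorem~\ref{theo:threshold-privacy}). With these parameters, Theorem~\ref{theo:threshold-privacy} gives that Algorithm~\ref{algo:threshold} is $(\eps,\delta)$-DP. Its trimming parameter is $t = 100\log(1/\delta')/\eps' = O\!\bigl(\log^2(\log^*|X|/(\beta\delta))/\eps\bigr)$, so by Theorem~\ref{theo:threshold-utility} it successfully outputs an interior point with probability at least $1-\beta/2$ on any input of size at least
\[
n_{\mathrm{IP}} \;=\; 10\,t\,\log^*|X| \;=\; O\!\left(\frac{\log^*|X|\cdot \log^2(\log^*|X|/(\beta\delta))}{\eps}\right).
\]

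Second, I would invoke the reduction of \cite{DBLP:conf/focs/BunNSV15}: any $(\eps,\delta)$-DP algorithm solving IPP on $X$ with sample complexity $n_{\mathrm{IP}}$ and failure probability $\beta/2$ yields a $(\eps,\delta)$-DP proper PAC learner for threshold functions over $X$ with sample complexity $m = O\!\bigl((n_{\mathrm{IP}}+\log(1/\beta))/\xi\bigr)$, error $\xi$, and confidence $1-\beta$. Concretely, draw $m$ labeled examples from $\DDD$; a uniform convergence argument guarantees that with probability $1-\beta/2$ the $n_{\mathrm{IP}}$ rightmost positively-labeled examples together with the $n_{\mathrm{IP}}$ leftmost negatively-labeled examples lie in a window around the true threshold $f^*$ whose $\DDD$-mass on each side is at most $\xi$; feeding this $2n_{\mathrm{IP}}$-point unlabeled set to the IPP algorithm returns a point inside that window, which is a consistent threshold of generalization error at most $\xi$. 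Privacy follows from the post-processing property, since the extraction step is a deterministic function of the labeled sample and the IP routine is used as a black box.

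Plugging in the value of $n_{\mathrm{IP}}$ gives the claimed bound
\[
n \;=\; O\!\left(\frac{\log^*|X|\cdot \log^2\!\bigl(\tfrac{\log^*|X|}{\beta\delta}\bigr)}{\xi\,\eps}\right).
\]
There is no real obstacle here: all of the substantive work has already been done in Theorems~\ref{theo:threshold-utility} and \ref{theo:threshold-privacy} and in the BNSV reduction. The only bookkeeping worth flagging is that both theorems inflate the respective failure probabilities by a factor of $\log^*|X|$, so $\delta'$ and the IPP failure parameter must be shrunk by that factor in advance; this shrinkage is absorbed into the $\log^2(\log^*|X|/(\beta\delta))$ polylogarithmic term and does not affect the leading $\log^*|X|/(\xi\eps)$ dependence.
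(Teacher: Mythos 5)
Your proposal follows essentially the same route as the paper: run Algorithm~\ref{algo:threshold} with privacy parameters scaled down by roughly $\log(1/\delta)$ (so that Theorems~\ref{theo:threshold-utility} and~\ref{theo:threshold-privacy} yield an $(\eps,\delta)$-DP interior-point solver with the stated sample complexity) and then invoke the reduction of \cite{DBLP:conf/focs/BunNSV15} from properly learning thresholds to IPP; your calibration of $\delta'$ and the failure probability against $\beta$ and the $\log^*|X|$ inflation is in fact slightly more careful than the paper's sketch. One small imprecision: the privacy of the reduction is not literally ``post-processing'' (the extraction of the $2n_{\mathrm{IP}}$ boundary points is a \emph{pre}-processing step), but rather follows because that map sends neighboring labeled datasets to inputs differing in at most one replaced element, which is exactly what the cited reduction establishes.
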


Via known realizable-to-agnostic transformations \cite{BeimelNS21ALGORITHMICA,AlonBMS20}, Theorem~\ref{theo:learning-threshold} also implies a result for the agnostic setting:

\begin{corollary}
For any privacy parameters $\eps,\delta\in (0,1)$, any finite and totally ordered domain $X$, any desired utility parameters $\xi, \beta\in (0,1)$, there is a sample size 
\[
n\le O\left(\frac{\log^*|X|\cdot \log^2(\frac{\log^*|X|}{\beta\delta})}{\xi \eps}
+
\frac{\log(\frac{1}{\beta})}{\xi^2 \eps}
\right)
\]
and an $(\eps,\delta)$-DP algorithm $\calA:(X\times \{0,1\})^n\to X$ that PAC-learns thresholds over $X$ within generalization error $\xi$ with probability at least $1 - \beta$ in the agnostic setting.
\end{corollary}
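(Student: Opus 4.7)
The plan is to invoke a black-box realizable-to-agnostic reduction for differentially private PAC learning and apply it directly to the realizable learner supplied by Theorem~\ref{theo:learning-threshold}. First I would recall that Theorem~\ref{theo:learning-threshold} provides an $(\eps,\delta)$-DP PAC learner for thresholds over $X$ in the realizable setting, with sample complexity
\[
n_{\text{real}}(\xi,\beta,\eps,\delta) \;=\; O\!\left(\frac{\log^*|X| \cdot \log^2(\log^*|X|/(\beta\delta))}{\xi\eps}\right).
\]

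Next I would invoke the transformation of Beimel--Nissim--Stemmer~\cite{BeimelNS21ALGORITHMICA}, refined by Alon--Beimel--Moran--Stemmer~\cite{AlonBMS20}, which states that any $(\eps,\delta)$-DP realizable learner for a hypothesis class $H$ of VC dimension $d$ with sample complexity $n_{\text{real}}(\xi,\beta,\eps,\delta)$ can be converted into an $(\eps,\delta)$-DP agnostic learner with sample complexity
\[
n_{\text{agn}} \;=\; O\!\left(n_{\text{real}}(\xi/c,\beta/c,\eps/c,\delta/c)\right) \;+\; O\!\left(\frac{d + \log(1/\beta)}{\xi^2 \eps}\right)
\]
for some absolute constant $c$. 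Since the class of thresholds has VC dimension $1$, the second (non-private uniform convergence) term simplifies to $O(\log(1/\beta)/(\xi^2\eps))$.

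Combining the two bounds and absorbing the constant-factor rescalings of $\xi,\beta,\eps,\delta$ into the expression for $n_{\text{real}}$ yields exactly the sample-complexity bound asserted in the corollary, while preserving $(\eps,\delta)$-DP by construction of the transformation. The main (and essentially only) obstacle is routine parameter bookkeeping: one must check that the rescaled invocation of the realizable learner still fits in the $(\eps,\delta)$ privacy budget, and that the utility parameter of the realizable learner can be set to a constant-factor multiple of the desired agnostic parameters. Since the cited transformations are explicitly designed to handle any finite-VC class with any $(\eps,\delta)$-DP realizable learner as a black box, no new algorithmic or analytic ideas are required beyond substitution.
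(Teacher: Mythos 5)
Your proposal is correct and matches the paper's approach exactly: the paper also obtains this corollary by citing the realizable-to-agnostic transformations of \cite{BeimelNS21ALGORITHMICA,AlonBMS20} applied to Theorem~\ref{theo:learning-threshold}, with the $O(\log(1/\beta)/(\xi^2\eps))$ term arising from the generic overhead (thresholds having VC dimension $1$). You simply spell out the bookkeeping that the paper leaves implicit.
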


\section{Private Quasi-Concave Optimization}\label{sec:quasiconcave}

In this section, we present our results for private quasi-concave optimization. In particular, we prove nearly matching upper and lower bounds for the achievable additive error of quasi-concave optimization under privacy constraints.

\subsection{Cumulatively-DP}

The key concept in this section is a privacy definition that we call ``cumulatively-DP''. This is similar to the standard definition of $(\eps,\delta)$-DP but with a relaxed requirement of ``adjacency'' between data sets. Formally, it is defined as follows.

\begin{definition}
Let $X$ be an ordered domain. Let $D,D'\in X^*$ be two data sets of the same size. We say that $D$ and $D'$ are cumulatively adjacent, if for every $y\in X$, it holds that
\[
\big| |\{x\in D: x\le y\}| - |\{x\in D':x\le y\}| \big| \le 1
\]
The name ``cumulative'' arises because we are requiring that the ``cumulative counts'' of $D$ and $D'$ at any threshold $y$ differ by at most $1$. 

More generally, we say $D,D'$ are of cumulative distance $d\in \mathbb{N}$, if the inequality above holds with integer $d$ on the right hand side.

An algorithm $\calA:X^n\to \calY$ is called $(\eps,\delta)$-cumulatively DP, if $\calA(D)$ and $\calA(D')$ are $(\eps,\delta)$-indistinguishable (satisfy the $(\eps,\delta)$-DP requirement of Definition~\ref{def:DP}) for every pair of \emph{cumulatively} adjacent data sets $D$, $D'$.
\end{definition}

\paragraph*{Connections to quasi-concave optimization.} We exploit a close connection between cumulatively-DP algorithms and private algorithms for quasi-concave optimization. 

More precisely, let $X$ be an ordered domain. For a given data set $D\in X^n$, we define a score function $f_{\mathrm{IPP}}:X^n\times X\to \mathbb{R}$ as
\[
f_{\mathrm{IPP}}(D,y) = \min\{ |\{x\in D: x\le y\}|, |\{x\in D:x\ge y\}| \}.
\]
It is easy to see that $f$ is quasi-concave. Moreover, if $D$ and $D'$ are cumulatively adjacent, then $\|f_{\mathrm{IPP}}(D,\cdot ) - f_{\mathrm{IPP}}(D', \cdot) \|_{\infty}\le 1$. Therefore, $f_{\mathrm{IPP}}$ is of sensitivity-$1$ under the cumulative adjacent definition.

Note that solving the interior point problem for $D$ amounts to find a solution $y$ such that $f(D,y) > 0$. Also note that $\max_{y}\{f_{\mathrm{IPP}}(D,y)\} = \frac{|D|}{2}$. Hence, we obtain the following connection.
\begin{itemize}
    \item If there is an $(\eps,\delta)$-DP algorithm for sensitivity-$1$ quasi-concave optimization over $X$ with additive error $n$, then there is an $(\eps,\delta)$-cumulatively DP algorithm for the interior point problem over $X$ with sample complexity $2n+1$.
\end{itemize}
Therefore, to prove an error lower bound for private quasi-concave optimization, it suffices to prove a sample complexity lower bound for cumulatively-DP IPP algorithms, which is done in Section~\ref{sec:quasiconcave-lowerbounds}.

\paragraph*{The upper bound connection.} Next, we show how to translate a sample complexity \emph{upper bound} for IPP to a private quasi-concave optimization algorithm with small additive error. Suppose there is an $(\eps,\delta)$-cumulatively DP algorithm for the interior point problem over $X$ with sample complexity $n$. Then one can design an $(O(\eps), O(\delta))$-DP algorithm for private quasi-concave optimization over $X$ with additive error $O(n)$.

We describe the algorithm now. Fix an arbitrary sensitivity-$1$ quasi-concave score function $f:\calD\times X\to \mathbb{N}$\footnote{We assume for simplicity that the scores are always non-negative integers. It is not hard to generalize our algorithm to handle the more general case of real scores.}. Given a data set $D$, we first calculate $\Delta(f, D) = \max_{y\in X}\{f(D,y)\} - \min_{y\in X}\{f(D,y)\}$. One can use the Laplace mechanism to distinguish between $\Delta(f, D)\le n$ and $\Delta(f,D)> 2n$. If $\Delta(f, D)\le 2n$, any solution would be at most $2n$-far from optimal. In this case, the algorithm simply returns an arbitrary element of $X$. In the following, we assume that $\Delta(f, D)\ge n$.

Now, let $\mathrm{OPT} = \mathrm{OPT}(D) = \max_{y\in X}\{f(D,y)\}$. Define a function $f'(D,\cdot)$ as $f'(D,\cdot) = \max\{0, f(D,y) - \mathrm{OPT} + n\}$. Then, finding a good solution for $f(D,\cdot)$ reduces to finding a solution for $f'(D,\cdot)$ with non-zero score.

Next, construct a data set $S\in X^{n}$ from $f'(D,\cdot)$ as follows. For brevity, we represent $X$ as $X = \{1,2,\dots, |X|\}$, and set $f'(D,0) = f'(D,|X|+1) = 0$. Then, for every $y\in [1,|X|]$, we add $\max\{f'(D,y) - f'(D,y-1),0\}$ copies of $y$ into $S$. It is easy to verify that the number of elements in $S$ is exactly $n$. We can use the assumed cumulatively-DP IPP algorithm to find an interior point $y$ for $S$. It it easy to verify that $f'(D,y) > 0$. This completes the description, as well as the utility analysis of the algorithm.

To see the privacy, note that for every adjacent data sets $D,D'$ (for the quasi-concave optimization problem), we have $\| f(D,\cdot) - f(D',\cdot) \|_\infty\le 1$. If we construct $f'(D,\cdot)$ and $f'(D',\cdot)$ in the way described above, we have $\| f'(D,\cdot) - f'(D',\cdot)\|_{\infty} \le 2$. Let $S,S'$ be the two data sets constructed from $f'(D,\cdot)$ and $f'(D',\cdot)$. One can verify that $S,S'$ are of cumulative distance at most $2$. Since we have assumed that the IPP algorithm is cumulatively DP, the privacy of the quasi-concave optimization algorithm follows.

In Section~\ref{sec:quasiconcave-upperbounds}, we design a cumulatively-DP algorithm for the interior point problem with sample complexity $\widetilde{O}(2^{\log^*|X|})$, thus proving Theorem~\ref{thm:upperIntro}.

\subsection{Upper bounds}\label{sec:quasiconcave-upperbounds}

In this section, we state and prove the sample complexity upper bound for solving IPP under the cumulatively-DP constraint.

\begin{theorem}\label{theo:quasi-concave-upper-bound}
Let $X$ be a finite totally ordered domain. For every $\eps,\delta \in (0, 1)$, there is a sample size 
\[
n\le O\left(2^{\log^*|X|}\frac{\log^*|X|(\log^*|X| + \log(1/\delta))}{\eps}\right)
\]
and an $(\eps,\delta)$-cumulatively DP algorithm $\calA:X^n\to X$ that solves the interior point problem over $X$ with probability $\frac{9}{10}$.

As a corollary, there is an $(\eps,\delta)$-DP algorithm for the quasi-concave optimization problem with additive error $\widetilde{O}(\frac{2^{\log^*|X|}}{\eps})$.
\end{theorem}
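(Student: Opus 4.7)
The plan is to construct an $(\eps,\delta)$-cumulatively-DP algorithm for the interior point problem with the stated sample complexity; the $(\eps,\delta)$-DP quasi-concave optimization algorithm with additive error $\tilde{O}(2^{\log^*|X|}/\eps)$ then follows immediately by the reduction spelled out just before the theorem statement. My IPP algorithm will be a cumulatively-DP variant of TreeLog (Algorithm~\ref{algo:threshold}), whose trimming parameters are inflated across recursion levels to absorb the growth of cumulative distance that the embedding map $E_{\embd}$ can introduce.

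The first technical step is a cumulative analog of Lemma~\ref{lemma:embed-adjacent}. The key observation is that every internal-node weight $w^D(v)$ in the search tree $T_X$ has sensitivity at most $2$ under cumulative adjacency, because $w^D(v)$ is the difference of two cumulative counts at the endpoints of the interval spanned by $v$, each of which shifts by at most $1$. Consequently the heavy-child path chosen inside $E_{\embd}$ on cumulatively-adjacent $D,D'$ can diverge only at nodes whose two children have nearly balanced weights; under the AboveThreshold check in TreeLog, any such divergence is confined to a subtree containing $O(t)$ samples. After a suitable relabeling of those samples in the output list, $E_{\embd}(D)$ and $E_{\embd}(D')$ are cumulatively-adjacent at distance $O(d)$ whenever $D, D'$ are at cumulative distance $d$. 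Iterating this over the recursion, the inputs at depth $i$ are at cumulative distance at most $C^i$ for some small absolute constant $C$, reaching $2^{\log^*|X|}$ at the base case.

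The second step is to extend the Reorder-Slice-Compute paradigm to cumulatively-adjacent inputs. The simulator–data-holder framework of Section~\ref{sec:sandhdescription} generalizes naturally: for inputs at cumulative distance $d$, the holder must synchronize $d$ units of discrepancy rather than one, which can be implemented by running the single-difference synchronization $d$ times in sequence. Equivalently, a group-privacy argument applied to Theorem~\ref{theo:partition-private} gives $\bigl(O(d\eps\log(1/\hat{\delta})),\,d\hat{\delta}+2d\tau\delta\bigr)$-cumulatively-DP. I would then run TreeLog with privacy parameter $\eps_i \approx \eps/(2^i\log^*|X|)$ at recursion depth $i$, yielding trimming parameter $t_i \approx 2^i \log^*|X|(\log^*|X|+\log(1/\delta))/\eps$; summing $t_i$ over the $\log^*|X|$ levels gives total sample complexity $O\!\left(2^{\log^*|X|}\log^*|X|(\log^*|X|+\log(1/\delta))/\eps\right)$ as claimed. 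The utility analysis carries over essentially verbatim from Theorem~\ref{theo:threshold-utility}, since the algorithmic flow is unchanged.

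The main obstacle will be making the cumulative analog of Lemma~\ref{lemma:embed-adjacent} fully rigorous. In the single-element case, the two heavy-child paths are shown to diverge only at one ``balanced'' node whose subtree holds at most $2t$ samples. For cumulatively-adjacent inputs the paths could in principle diverge at up to $d$ distinct nodes, and one must argue that this does not compound out of control. The cleanest approach I would pursue is to track a potential function equal to the total cumulative discrepancy between the two executions at the current tree level, and to show that each step multiplies it by at most $C$ using the fact that the only ``misclassified'' samples are those routed through the small-weight child of a balanced node. Once this potential argument is in place, the inductive cumulative-distance bound of $C^i$ at depth $i$ follows, and combining it with the extended RSC analysis above yields the theorem.
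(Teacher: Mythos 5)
Your overall architecture matches the paper's: the reduction to a cumulatively-DP interior-point algorithm, a cumulative analog of Lemma~\ref{lemma:embed-adjacent} showing that (when $\Gamma$ is small, which the AboveThreshold check enforces) each application of $E_{\mathrm{embed}}$ multiplies the cumulative distance by at most a constant (the paper proves a factor of exactly $2$, using that every node weight is a difference of two cumulative counts and hence shifts by at most $2d$, and that the single divergence node has weight at most $2t$), the resulting distance $2^{\log^*|X|}$ at the base of the recursion, and privacy parameters of order $\eps/2^{\log^*|X|}$ driving $t\approx 2^{\log^*|X|}(\log^*|X|+\log(1/\delta))/\eps$ and $n\approx t\log^*|X|$. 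Your closing worry about divergence at ``up to $d$ distinct nodes'' is unfounded: each execution of $E_{\mathrm{embed}}$ follows a single root-to-leaf path, so the two paths diverge at exactly one node, and the adjacent-case argument transfers with $2t$ replaced by the condition $\Gamma < t-2d$.

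The genuine gap is your second step: the claim that the Reorder-Slice-Compute privacy theorem extends to cumulative adjacency by ``running the single-difference synchronization $d$ times,'' or ``equivalently'' by group privacy applied to Theorem~\ref{theo:partition-private}. Cumulative distance $d$ between equal-size multisets is not $d$-fold insertion adjacency: $D=\{1,3,\dots,2n-1\}$ and $D'=\{2,4,\dots,2n\}$ are at cumulative distance $1$ yet share no elements, so group privacy over the insertion adjacency of Theorem~\ref{theo:partition-private} yields nothing, and the simulator mechanism collapses --- the first position at which $E_{\prec}(D)$ and $E_{\prec}(D')$ differ is position $1$ in every round, the geometric noise never masks the difference, and the ``$O(\log(1/\hat{\delta}))$ holder calls'' argument fails. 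The paper sidesteps this entirely: it drops the randomized slice sizes, takes deterministic slices, shows that $E_{\prec},E_{\succ}$ preserve cumulative distance while $E_{\mathrm{embed}}$ at most doubles it (so level-$i$ slices are at cumulative distance $2^{i-1}$), and then applies plain basic composition over the $O(\log^*|X|)$ slices together with the fact that each per-slice primitive (Exponential Mechanism on $f_{\mathrm{IPP}}$, Choosing Mechanism on subtree weights, AboveThreshold on $\Gamma$, OneHeavyRound) is itself cumulatively-DP with loss scaling linearly in the cumulative distance of its input. Since the total $O(\eps'\sum_i 2^{i-1})=O(\eps' 2^{\log^*|X|})$ already dominates, the $\log^*|X|$ factor that the RSC machinery would save is irrelevant here; replace your step two with deterministic slicing and ordinary composition and the rest of your argument goes through.
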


We prove Theorem~\ref{theo:quasi-concave-upper-bound} by modifying Algorithm~\ref{algo:threshold}. In the following, we describe several key ideas in the modification, and explain why we end up with sample complexity $\widetilde{O}(2^{\log^*|X|})$. The formal description of the algorithm and analysis will be deferred to Appendix~\ref{append:quasi-concave}.

Recall that Algorithm~\ref{algo:threshold} works under the Reorder-Slice-Compute paradigm with three data processing mappings $E_{\succ}, E_{\prec}$ and $E_{\embd}$. Here, the key property allowing us to establish the privacy guarantee is that all of $E_{\succ},E_{\prec}$ and $E_{\embd}$ preserve (or almost-preserve) adjacency.

To generalize Algorithm~\ref{algo:threshold} to a cumulatively-DP algorithm, let $D,D'\in X^n$ be two cumulatively adjacent data sets. It is easy to see that $E_{\succ}(D)$ and $E_{\succ}(D')$ are cumulatively-adjacent. Similarly, $E_{\prec}$ also preserves cumulative adjacency. It remains to consider $E_{\embd}$. In fact, we can establish an analogue of Lemma~\ref{lemma:embed-adjacent}, as follows.

\begin{lemma}\label{lemma:embed-cumu-adjacent}
Suppose $D,D'\in X^n$ are two data sets of cumulative distance $d$. If $\max(\Gamma(D) ,\Gamma(D')) < t - 2d$, then $E_{\embd}(D)$ and $E_{\embd}(D')$ are close in the following sense.
\begin{itemize}
    \item There is a way to re-assign the $Y$-label to the first $2t$ samples in the list $E_{\embd}(D)$ to get a modified list $\tilde{E}_{\embd}(D)$, which satisfies the following condition.
    \item Suppose we project $\tilde{E}_{\embd}(D)$ and $E_{\embd}(D')$ to the $Y$-coordinate. Then, they are of cumulative distance at most $2d$.
\end{itemize}
\end{lemma}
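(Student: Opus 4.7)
The plan is to mirror the proof of Lemma~\ref{lemma:embed-adjacent}, with the single quantitative upgrade that ``subtree weights differ by at most $1$'' is replaced by ``subtree weights differ by at most $2d$,'' which is the direct consequence of cumulative distance $d$ applied to an interval of $X$ (and every subtree of $T_X$ is exactly such an interval). I would start by running Algorithm~\ref{algo:select-path} on $D$ and on $D'$ in lockstep and prove the combinatorial claim that the two paths agree until reaching a vertex $\cur^{*}$ that is either a leaf or satisfies $w^{D}(\cur^{*})\le 2t$. Fix a common vertex $\cur$ with $w^{D}(\cur)>2t$, and let $\cur_{h},\cur_{\ell}$ be its heavier and lighter children in $D$. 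Then $\Gamma(D)<t-2d$ gives $w^{D}(\cur_{\ell})<t-2d$; lifting through cumulative distance yields $w^{D'}(\cur_{\ell})<t$ and $w^{D'}(\cur_{h})\ge w^{D}(\cur)-w^{D}(\cur_{\ell})-2d>t$, so $\cur_{h}$ remains heavier in $D'$. A symmetric argument using $\Gamma(D')<t-2d$ rules out divergence driven by the $D'$-side, and induction closes the claim.

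Next I would exploit path agreement to control the cumulative counts at every threshold $q_{0}<q^{*}$ (where $q^{*}$ is the depth of $\cur^{*}$) \emph{without} any relabeling. An element of $X$ receives $Y$-label at most $q_{0}$ exactly when it lies outside the interval subtree $I_{q_{0}+1}$ rooted at the depth-$(q_{0}+1)$ vertex on the common path, so the $Y$-projection cumulative counts at $q_{0}$ equal $|D|-|D\cap I_{q_{0}+1}|$ and $|D'|-|D'\cap I_{q_{0}+1}|$, differing by at most $2d$ by the cumulative-distance hypothesis applied to the interval $I_{q_{0}+1}$.

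For thresholds $q_{0}\ge q^{*}$, the relevant samples all lie inside the subtree of $\cur^{*}$ and, carrying the largest $Y$-labels, appear first in the reversed-lexicographic sort; since $w^{D}(\cur^{*})\le 2t$, every such sample in $E_{\embd}(D)$ fits within the first $2t$ positions and is available for relabeling. Writing $\Delta = w^{D}(\cur^{*})-w^{D'}(\cur^{*})$, cumulative distance on the interval corresponding to the subtree of $\cur^{*}$ gives $|\Delta|\le 2d$. I would then choose the relabeling as follows: start from the $D'$-counts $c^{D'}_{q}$ on $\{q^{*},\dots,|Y|\}$; if $\Delta\ge 0$, add the $\Delta$ extra samples to label $q^{*}$; if $\Delta<0$, greedily remove $|\Delta|$ counts from the lowest labels with positive count, walking upward. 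A short partial-sum computation then shows that the resulting $Y$-projection of $\tilde{E}_{\embd}(D)$ and that of $E_{\embd}(D')$ differ by at most $|\Delta|\le 2d$ at every threshold $q_{0}\ge q^{*}$, which combined with the previous paragraph yields cumulative distance at most $2d$ overall.

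The main obstacle I expect is the path-agreement step: the $2d$ slack subtracted from $t$ in the $\Gamma$ hypothesis and the $\pm 2d$ slack in subtree weights from cumulative distance must align just so that the same threshold $w^{D}(\cur)>2t$ as in the neighbor case continues to force heavier-child agreement on both sides. Once that step is in hand, the separation into ``shallow'' thresholds (handled for free by the interval bound) and ``deep'' thresholds (handled by a $2d$-budget relabeling) is routine bookkeeping.
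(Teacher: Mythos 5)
Your proposal follows the paper's proof essentially step for step: the same path-agreement argument (the paper bounds the children's gap $|w^{0}(u_{\ell})-w^{0}(u_{r})|>4d$ where you bound each child's weight separately, which is equivalent), the same interval/cumulative-count bound for thresholds before the deviation, and the same relabel-the-first-$2t$-samples fix for the deeper thresholds. If anything, your explicit $\Delta$-bookkeeping for the deep thresholds is more detailed than the paper's one-line assertion that the labels after deviation can be modified arbitrarily.
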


The proof of Lemma~\ref{lemma:embed-cumu-adjacent} is deferred to Appendix~\ref{append:quasi-concave}. Intuitively, Lemma~\ref{lemma:embed-cumu-adjacent} shows that although $E_{\embd}$ does not preserve cumulative-adjacency, it can at most double the cumulative distance between data sets, as long as the inputs $D,D'$ have reasonably small $\Gamma$ values.

\paragraph*{Algorithm overview.} We are ready to sketch the algorithm. We design the algorithm following the framework of Algorithm~\ref{algo:threshold}, with the following modifications.
\begin{itemize}
    \item We no longer need the improved RSC paradigm to save the $\log^*|X|$ factor in the privacy bound. Therefore, when creating the slices, we can deterministically specify their sizes.
    \item Suppose we desire the final algorithm to be $(O(\eps),O(\delta))$-DP. Then, we need to set $\eps' = \frac{\eps}{2^{\log^*|X|}}$, and use $\eps'$ as the privacy parameter to run the underlying DP components (i.e., the AboveThreshold algorithm, the Choosing Mechanism and Exponential Mechanism, etc.).
\end{itemize}

We are able to show that the whole algorithm is $(O(\eps),O(\delta))$-DP. Intuitively, the reason is that the recursive algorithm includes three data processing mappings $E_{\succ},E_{\prec}$ and $E_{\embd}$. We have shown that both $E_{\succ}$ and $E_{\prec}$ preserve cumulative adjacency, while each application of $E_{\embd}$ can at most double the cumulative distance. Observe that there are at most $\log^*|X|$ levels of recursion. Starting from a pair of cumulatively-adjacent data sets $D,D'$, at the end of the recursion, the distance between the remaining data can be at most $2^{\log^*|X|}$. Then, applying an $(\eps',0)$-DP computation on the remaining data translates to an $O(\eps)$-privacy loss of the whole algorithm.

Since we use $\eps'$ as the privacy parameter for each component of the algorithm, to get a good utility guarantee, we have to set $t \approx \frac{1}{\eps'} = \frac{2^{\log^*|X|}}{\eps}$. Finally, this translates to the sample complexity upper bound of $\widetilde{O}(\frac{2^{\log^*|X|}}{\eps})$.

\subsection{Lower bounds}\label{sec:quasiconcave-lowerbounds}

We prove the lower bound in this subsection. That is, we prove

\begin{theorem}\label{theo:quasi-concave-lower-bound}
Let $\eps = 0.2$. There is a constant $C > 1$ such that the following is true. For every sufficiently large domain $X$ and every $n\le 2^{\log^*|X|-C \log^*\log^*|X|}$, no $(\eps,\frac{1}{50})$-cumulatively DP algorithm $\calA:X^n\to X$ can solve the interior point problem over $X$ with probability $\frac{9}{10}$. 

As a corollary, every $(\eps,\delta)$-DP algorithm for quasi-concave optimization over $X$ must incur an additive error of $\Omega(2^{\log^*|X|})$.
\end{theorem}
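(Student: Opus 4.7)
The plan is to amplify the classical $\Omega(\log^*|X|)$ lower bound of Bun, Nissim, Stemmer, and Vadhan~\cite{DBLP:conf/focs/BunNSV15} for standard-DP IPP up to the $2^{\log^*|X|}$ scale by exploiting the more generous adjacency notion built into cumulative DP, via a recursive reduction that parallels the recursive structure of the algorithmic upper bound of Section~\ref{sec:quasiconcave-upperbounds}.

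First, I would establish a group-privacy statement in the cumulative metric: if $\calA$ is $(\eps,\delta)$-cumulatively DP and $D,D'$ lie at cumulative distance $d$, then $\calA(D)$ and $\calA(D')$ are $(d\eps, d e^{(d-1)\eps}\delta)$-indistinguishable. This follows from a direct chaining argument along a path of $d$ cumulatively-adjacent intermediate datasets and the definition of cumulatively-DP.

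The core step is a recursive reduction lemma of the following flavor: given an $(\eps,\delta)$-cumulatively DP IPP algorithm on $[T]$ using $n$ samples, one can construct an $(\eps,\delta)$-cumulatively DP IPP algorithm on a domain of size $\Theta(\log T)$ using at most $n/2$ samples, with the privacy parameters essentially unchanged. The reduction is mediated by a two-marker embedding $\text{embed}:[\log T]\to[T]^2$ that (i) inflates each input element into two markers in $[T]$, (ii) admits a deterministic decoder mapping any interior point of the inflated dataset in $[T]$ to an interior point of the original dataset in $[\log T]$, and (iii) does not blow up cumulative distance per step. Iterating this reduction $k=\log^*|X|-C\log^*\log^*|X|$ times yields an $(\eps,\delta)$-cumulatively DP IPP algorithm on a base domain of size $T^*\ge\log^*\log^*|X|$ using $n/2^k$ samples. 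At the base, the BNSV lower bound (which carries over to cumulatively-DP IPP via the cumulatively-DP-to-quasi-concave reduction already established in the section) gives $n/2^k\gtrsim\log^*T^*\gtrsim\log^*\log^*|X|$, whence $n\gtrsim 2^{\log^*|X|-C\log^*\log^*|X|}$ after rearranging. The corollary for quasi-concave optimization then follows immediately from the equivalence between cumulatively-DP IPP with sample complexity $n$ and standard-DP quasi-concave optimization with additive error $\Theta(n)$ recorded in Section~\ref{sec:quasiconcave-upperbounds}.

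The chief obstacle is designing the two-marker embedding so that all of (i), (ii), and (iii) hold simultaneously, and in particular avoiding a factor-of-$2$ blowup in cumulative distance at each reduction level. A naive side-by-side placement of the two markers inflates a cumulatively-adjacent pair on $[\log T]$ to a pair at cumulative distance $2$ on $[T]$; compounded across $\log^*|X|$ recursion levels via group privacy, this would multiply $\eps$ by $2^{\log^*|X|}$ and destroy the argument at constant privacy parameters. The remedy is either a cleverer embedding whose two markers move in opposite cumulative directions so that cumulative adjacency is preserved on the nose (paired with a correspondingly subtle decoder that still recovers a valid interior point from an arbitrary output of the inner algorithm), or a finer accounting that amortizes the blowup into the sample budget rather than the privacy parameter. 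Getting this right and cleanly managing $\delta$ across the $\log^*|X|$ recursion levels is where the technical bulk of the argument sits, and the residual $C\log^*\log^*|X|$ slack in the exponent of the theorem absorbs both the base-case BNSV invocation and any small per-level losses from the reduction.
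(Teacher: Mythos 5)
Your proposal takes essentially the same approach as the paper: an inductive reduction along a tower-of-exponentials family of universes, where each input point $x$ in the small domain is mapped to a \emph{pair} of markers $y^0_x < y^1_x$ in the next-larger domain (encode $x$ via a shared random string $z$ on coordinates $\le x$ and pad with the minimal/maximal symbol beyond $x$), so that a cumulatively-DP IPP algorithm on $2n$ points over the big domain yields one on $n$ points over the small domain with unchanged privacy parameters. Your pinpointed ``chief obstacle''—that the two markers must move in \emph{opposite} cumulative directions as $x$ grows, so cumulative adjacency is preserved exactly rather than doubled—is precisely the property the paper's explicit construction has (increasing $x$ replaces trailing $1$'s with $z$-values $\ge 2$ in $y^0_x$, so $y^0_x$ increases, and replaces trailing $B_m$'s with $z$-values $\le B_m-1$ in $y^1_x$, so $y^1_x$ decreases); the only cosmetic difference is that the paper descends the induction to a trivial one-sample base case instead of invoking the BNSV bound (incidentally, your base-case step $\log^* T^*\gtrsim\log^*\log^*|X|$ for $T^*\approx\log^*\log^*|X|$ as written over-counts, but this is immaterial once the base case is chosen as the paper does).
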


The rest of the subsection is devoted to proving Theorem~\ref{theo:quasi-concave-lower-bound}. The proof a careful modification of the argument from \cite{DBLP:conf/focs/BunNSV15}.

Define a sequence $(B_i)_{i\ge 1}$ as $B_i = 10i^2$. We define a sequence of universe as follows.
\[
\begin{aligned}
\calX_{1} &= [B_1] \\
\calX_{i} &= [B_{i}]^{\calX_{i-1}}, ~~~ i\ge 2
\end{aligned}
\]
We use lexicographical order on each $\calX_i$. That is, assuming $\calX_{i-1}$ has been totally ordered, then each element of $\calX_{i}$ can be written as a mapping $a:\calX_{i-1}\to [B_{i}]$. For two elements $a,b\in \calX_{i}$, we say that $a < b$ if there is $x\in \calX_{i-1}$ such that $a(x) < b(x)$, and $a(y) = b(y)$ for every $y<x, y\in\calX_{i-1}$. 

It was shown in \cite{DBLP:conf/focs/BunNSV15} that $\log^*|\calX_i| \le i + C\log^*i$ for a constant $C > 0$. Finally, to prove Theorem~\ref{theo:quasi-concave-lower-bound}, it suffices to prove the following lemma.

\begin{lemma}\label{lemma:lb-induction}
For every $m\ge 1$, letting $n = 2^{m-1}$, there is no $(\eps, \frac{1}{50})$-cumulatively DP algorithm $\calA: \calX_m^{n}\to \calX_m$ that can solve the interior point problem over $\calX_m$ with probability more than $\frac{1}{2} - \frac{1}{3m}$.
\end{lemma}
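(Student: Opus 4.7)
The plan is to prove this by induction on $m$, adapting the argument of~\cite{DBLP:conf/focs/BunNSV15} to the stronger cumulative adjacency relation.

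For the base case $m=1$, the universe is $\calX_1 = [B_1] = [10]$ and $n = 1$. First I would note that an interior point of a singleton $\{i\}$ must equal $i$ itself, and that any two singletons $\{i\}, \{j\}\subseteq \calX_1$ are cumulatively adjacent, since their cumulative counts at any threshold differ by at most $1$. A group-privacy style averaging argument then gives $\sum_i \Pr[\calA(\{i\}) = i]\le e^{\eps} + B_1\cdot \delta$, so the average success probability is at most $e^{0.2}/10 + 1/50 \approx 0.142 < 1/2 - 1/3$, as required.

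For the inductive step, assume the lemma for $m-1$ and suppose for contradiction that an $(\eps, 1/50)$-cumulatively DP algorithm $\calA_m$ solves IPP over $\calX_m$ with $n = 2^{m-1}$ samples and success probability exceeding $1/2 - 1/(3m)$. I would build an algorithm $\calA_{m-1}$ for $\calX_{m-1}$ on $n/2 = 2^{m-2}$ samples that violates the inductive hypothesis. Recalling $\calX_m = [B_m]^{\calX_{m-1}}$ with lexicographic order, I would design an encoding $s\mapsto (\phi_0(s),\phi_1(s))\in \calX_m^2$ with the properties: (i) $\phi_0(s) < \phi_1(s)$ for every $s$; (ii) $\phi_1(s) < \phi_0(s')$ whenever $s<s'$ in $\calX_{m-1}$; and (iii) each $\phi_b(s)$ differs from a fixed baseline function $\sigma:\calX_{m-1}\to [B_m]$ only at the single coordinate $s$. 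Given sorted $S=(s_1,\dots,s_{n/2})$, the algorithm $\calA_{m-1}$ forms $D = \bigcup_i\{\phi_0(s_i),\phi_1(s_i)\}\in \calX_m^n$, runs $\calA_m(D)$ to obtain $d\in \calX_m$, and decodes to the smallest coordinate $s^*\in \calX_{m-1}$ where $d(s^*)\ne \sigma(s^*)$. Since any interior point $d$ of $D$ lies lex-between $\phi_0(s_1)$ and $\phi_1(s_{n/2})$, the decoded coordinate $s^*$ lies between $s_1$ and $s_{n/2}$ and is therefore an interior point of $S$.

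The hard part will be verifying that this reduction preserves cumulative adjacency. Specifically, I need to show that if $S,S'\in \calX_{m-1}^{n/2}$ are cumulatively adjacent, then the resulting $D, D'\in \calX_m^n$ are of $O(1)$ cumulative distance, so that the $(\eps, 1/50)$-cumulatively-DP guarantee of $\calA_m$ transfers to $\calA_{m-1}$. The locality property (iii) is designed precisely for this: changing one sample in $S$ from $s$ to an adjacent $s'$ perturbs the images only at the two coordinates $s, s'$, so the cumulative count of $D$ below any lex-threshold $y\in \calX_m$ changes by a bounded amount. This is the essential departure from the argument of \cite{DBLP:conf/focs/BunNSV15}: their encoding only needs to preserve Hamming-style adjacency of the produced datasets, whereas here I need the stronger pointwise cumulative bound at every threshold in $\calX_m$, and I must also check that the larger alphabet $B_m = 10m^2$ leaves enough room to accommodate the locally-placed deviations. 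Once this is in place, the reduction produces $\calA_{m-1}$ with the same privacy parameters and success probability at most $O(1/m^2)$ smaller than $\calA_m$'s; the assumed success $>1/2-1/(3m)$ then yields success $>1/2-1/(3(m-1))$ for $\calA_{m-1}$, contradicting the inductive hypothesis. The final corollary on additive error $\Omega(2^{\log^*|X|})$ for quasi-concave optimization then follows from the cumulative-DP-to-quasi-concave reduction stated earlier in the section, together with the bound $\log^*|\calX_m|\le m + C\log^* m$.
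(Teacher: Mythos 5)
Your base case is fine, and your overall reduction architecture (encode each $s\in\calX_{m-1}$ as a pair of elements of $\calX_m$, run $\calA_m$, decode) matches the paper's. But the step you flag as ``the hard part'' --- preservation of cumulative adjacency --- is not merely unverified; it actually \emph{fails} for the encoding you propose. Your properties (i)--(iii) force the images to form consecutive disjoint intervals $\phi_0(s)<\phi_1(s)<\phi_0(s')<\phi_1(s')$ for $s<s'$, so both maps $\phi_0,\phi_1$ are increasing. Then for a threshold $y\in\calX_m$, the cumulative count of $D$ below $y$ is $|\{i:\phi_0(s_i)\le y\}|+|\{i:\phi_1(s_i)\le y\}| = |\{s\in S: s\le b\}|+|\{s\in S: s\le a\}|$ for two thresholds $a\le b$ in $\calX_{m-1}$ determined by $y$. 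Each summand has sensitivity $1$ under cumulative adjacency of $S,S'$, so the sum can change by $2$; e.g.\ $S=\{s,s\}$ and $S'=\{s^-,s^+\}$ are cumulatively adjacent, yet at $y=\phi_1(s)$ the counts are $4$ versus $2$. A blow-up to cumulative distance $2$ is fatal here: the induction hypothesis quantifies over $(\eps,\tfrac{1}{50})$-cumulatively-DP algorithms with \emph{fixed} parameters at every level, and your reduction only certifies that $\calA_{m-1}$ is $(2\eps,\cdot)$-cumulatively DP, which the hypothesis does not exclude; compounding over $m$ levels gives $2^m\eps$ and the whole lower bound collapses (indeed, the matching upper bound pays exactly this $2^{\log^*}$ factor because its embedding \emph{does} double the distance per level, so exact preservation is the entire content of the lower-bound reduction).

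The paper avoids this with a \emph{nested-interval} encoding rather than a consecutive one: $y_i^0$ agrees with a baseline $z$ on the prefix up to $x_i$ and then drops to the minimum symbol, while $y_i^1$ agrees up to $x_i$ and then jumps to the maximum symbol. Consequently every $y^0$ precedes every $y^1$ in $\calX_m$, with the $y^0$'s increasing and the $y^1$'s \emph{decreasing} in $x_i$; any threshold $y$ then intersects only one of the two monotone families nontrivially, and the count below $y$ is a single cumulative count of $S$ (plus a constant), giving cumulative distance exactly $1$. The price is that the clean deterministic decoding you designed no longer works (the widest interval corresponds to $\min S$, not to an interior point), which is why the baseline $z$ must be drawn at random from $[2,B_m-1]^{\calX_{m-1}}$ and the decoding ($\ell=$ longest prefix on which $y^*$ agrees with $z$) only succeeds up to an additive $\frac{1}{B_m-2}=O(1/m^2)$ failure probability --- exactly the slack absorbed by the $\frac{1}{3m}\to\frac{1}{3(m-1)}$ telescoping. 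So your reduction needs to be rebuilt around this nested, randomized encoding before the adjacency claim can be made to hold.
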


We prove Lemma~\ref{lemma:lb-induction} by induction on $m$. The case of $m = 1$ is straightforward. Given only $1$ data point, there is no algorithm to solve IPP over $\calX_1 := [10]$ satisfying both $(\eps, \frac{1}{50})$-cumulatively DP and error probability (over worst-case inputs) less than $\frac{5}{6}$.

Now we assume that the lemma is true for $m-1$. We prove it for the case of $m$. Suppose, for the sake of contradiction, that there is an $(\eps,\frac{1}{50})$-cumulatively DP algorithm $\calA$ for IPP over $\calX_{m}$ with sample complexity $2^{m}$ and success probability $\frac{1}{2} - \frac{1}{3m}$. We consider the following attempt to solve IPP over $\calX_{m-1}$ that used the assumed algorithm as a black box.

\begin{algorithm2e}[H]
\LinesNumbered
    \caption{The Reduction Algorithm}
    \label{algo:reduction-lb}
    
    \SetKwProg{Fn}{Function}{:}{}
    \SetKwProg{Pg}{Program}{:}{\KwRet}
    
    \DontPrintSemicolon
    
    \SetKwFunction{IPP}{IPPReduction}
    \SetKwFunction{SelectPath}{SelectPath}
    
    \KwIn{
        $n=2^{m-1}$; A data set $(x_1,x_2,\dots, x_n) \in \calX_{m-1}^{n} $
    }
    
    \SetKwProg{Init}{Initialize}{:}{}
    
    \Fn{\IPP{}}{
        $z\gets_R [2,B_{m}-1]^{\calX_{m-1}}$ \;
        \For{$i=1,\dots, n$}{
            Construct $y_i^{0}:\calX_{m-1}\to [B_{m}]$ as
            $y_i^{0}(v) = \begin{cases}
            z(v) & \text{if $v\le x_i$;} \\
            1 & \text{otherwise.}
            \end{cases}
            $\;
            Construct $y_i^{1}:\calX_{m-1}\to [B_{m}]$ as
            $y_i^{1}(v) = \begin{cases}
            z(v) & \text{if $v\le x_i$;} \\
            B_m & \text{otherwise.}
            \end{cases}
            $\;
        }
        $y^*\gets \calA(\{y_i^{b}:i\in [n], b\in \{0,1\}\})$ \;
        Let $\ell\in \calX_{m-1}$ be the largest element such that $y^*(v)=z(v), \forall v\le \ell$ \;
        \KwRet{$\ell$}\;
    }
\end{algorithm2e}

Assuming that there is no private algorithm for IPP over $\calX_{m-1}$ with sample complexity $2^{m-1}$ and error probability $\frac{1}{2}+\frac{1}{3(m-1)}$, the following two claims rule out the possibility of an algorithm $\calA$ with the aforementioned privacy and utility guarantees.

\begin{claim}
Assuming $\calA$ is $(\eps, \delta)$-cumulatively DP, Algorithm~\ref{algo:reduction-lb} is $(\eps, \delta)$-cumulatively DP.
\end{claim}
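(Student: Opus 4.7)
The plan is to condition on the auxiliary randomness $z$ (which is drawn independently of the input $(x_1,\ldots,x_n)$) and argue that, for each fixed $z$, the deterministic map $D \mapsto \{y_i^b\}_{i\in[n],\,b\in\{0,1\}}$ sends cumulatively adjacent datasets in $\calX_{m-1}^n$ to cumulatively adjacent datasets in $\calX_m^{2n}$. Given this, $\calA(\{y_i^b\})$ yields $(\eps,\delta)$-indistinguishable outputs by the assumed cumulative DP of $\calA$, the extraction $y^* \mapsto \ell$ is a deterministic post-processing given $z$, and averaging over the independent choice of $z$ concludes the proof.

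The heart of the argument is the adjacency-preservation claim, for which I would establish two structural properties of the construction. Both are verified by inspecting the lex-smallest coordinate in $\calX_{m-1}$ at which the relevant $\calX_m$-elements disagree. (i) \emph{Monotonicity}: for fixed $z$, the map $x\mapsto y_x^0$ is non-decreasing and $x\mapsto y_x^1$ is non-increasing in the lex order on $\calX_m$. (ii) \emph{Separation}: since $z(v)\in[2,B_m-1]$ strictly, one always has $y_x^0 \le y_{x'}^1$, with equality precisely when $x=x'=x_{\max}$ (in which case both sides equal $z$).

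Write $F_D(c) := |\{i : x_i \le c\}|$ for the cumulative count of $D$ at $c\in\calX_{m-1}$. For any threshold $y\in\calX_m$, define $c_0 := \max\{x : y_x^0 \le y\}$ and $c_1 := \min\{x : y_x^1 \le y\}$, with the appropriate $\pm\infty$ conventions. Monotonicity (i) gives $|\{i : y_i^0 \le y\}| = F_D(c_0)$ and $|\{i : y_i^1 \le y\}| = n - F_D(c_1 - 1)$. Separation (ii) forces at most one of these counts to depend nontrivially on $D$: when $y < z$, no $y_x^1$ is $\le y$, so the second count is $0$; when $y > z$, all $y_x^0$ are $\le y$, so the first count is $n$; and exactly at $y = z$ both become extremal, and the total $F_D(x_{\max}) + (n - F_D(x_{\max}-1))$ collapses to $2n - F_D(x_{\max}-1)$ using the identity $F_D(x_{\max}) = n$. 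In every case, $N_y(D):=|\{(i,b) : y_i^b \le y\}|$ is a constant plus $\pm F_D(c)$ for a single $c \in \calX_{m-1}$ that depends only on $y$ and $z$. Cumulative adjacency of $D$ and $D'$ then yields $|N_y(D) - N_y(D')|\le 1$ for every $y$, which is exactly cumulative adjacency in $\calX_m^{2n}$.

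The step I expect to be the main obstacle is precisely the case analysis above. A naive bound that treats $|\{i : y_i^0 \le y\}|$ and $|\{i : y_i^1 \le y\}|$ as independent cumulative counts would give only $|N_y(D)-N_y(D')|\le 2$, degrading the privacy to $(2\eps,2\delta)$ rather than the desired $(\eps,\delta)$. It is the separation property (ii) --- which crucially relies on $z$ being drawn from the \emph{interior} $[2,B_m-1]$ rather than all of $[B_m]$ --- that makes the two nontrivial regimes mutually exclusive and yields the sharp bound of $1$.
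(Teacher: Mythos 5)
Your proposal is correct and follows the same route as the paper: fix the auxiliary randomness $z$, show that the map $D\mapsto\{y_i^b\}$ preserves cumulative adjacency, invoke the assumed cumulative DP of $\calA$, and treat the extraction of $\ell$ as post-processing. The only difference is that the paper dismisses the adjacency of $U$ and $U'$ as ``easy to verify,'' whereas you supply the actual verification via monotonicity and separation — and your observation that the separation property is what sharpens the naive bound of $2$ down to $1$ is exactly the point that makes the asserted adjacency true.
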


\begin{proof}
Suppose $D = \{x_1,\dots, x_n\}$ and $D' = \{x'_1,\dots, x'_n\}$ are a pair of cumulatively adjacent data sets. We compare the execution of Algorithm~\ref{algo:reduction-lb} on the input $D$ and $D'$. Fix an arbitrary $z\in [2,B_{i}-1]^{\calX_{m-1}}$. Consider the executions of Lines 3-5 in Algorithm~\ref{algo:reduction-lb} on $D,D'$. Denote the obtained data sets as $U = \{ y_{i}^{b}:i\in [n],b\in \{0,1\} \}$, $U'= \{ {y'}_{i}^{b}:i\in [n],b\in \{0,1\} \}$. It is easy to verify that $U,U'$ are cumulatively adjacent. Given that $\calA$ is $(\eps,\delta)$-cumulatively DP, the output $y^*$ in Line~6 is $(\eps,\delta)$-indistinguishable between the two executions with $D$ and $D'$. Finally, as Lines~6 and 7 do not depend on the private data, we conclude that $\ell$ is $(\eps,\delta)$-indistinguishable between the two executions, which implies that Algorithm~\ref{algo:reduction-lb} is $(\eps,\delta)$-cumulatively DP.
\end{proof}

\begin{claim}
Suppose $\calA$ solves IPP over $\calX_{m}$ with probability at least $\frac{1}{2} - \frac{1}{3m}$. Then Algorithm~\ref{algo:reduction-lb} solves IPP over $\calX_{m-1}$ with probability at least $\frac{1}{2} - \frac{1}{3(m-1)}$. 
\end{claim}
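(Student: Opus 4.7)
The plan is to establish two sub-events whose conjunction guarantees that the returned $\ell$ is a valid interior point of $D = \{x_1,\ldots,x_n\}$: (a) $\ell \ge x_{\min}$, which will follow deterministically from $\calA$ returning an interior point of $U := \{y_i^b : i\in[n], b\in\{0,1\}\}$; and (b) $\ell \le x_{\max}$, which will follow with high probability from the fresh randomness of $z$ on coordinates above $x_{\max}$. A union bound then turns $\calA$'s success probability of $\frac{1}{2} - \frac{1}{3m}$ into a success probability of at least $\frac{1}{2} - \frac{1}{3(m-1)}$ for Algorithm~\ref{algo:reduction-lb}.

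First I would work out the lex-order structure of $U$. Three observations suffice: (i) every $y_i^0$ is lex-smaller than every $y_j^1$ (at the first coordinate of disagreement one takes value $1$ and the other takes value either $z(v)\ge 2$ or $B_m$); (ii) within $\{y_i^0\}_i$, a smaller $x_i$ produces a smaller element; and (iii) within $\{y_i^1\}_i$, a smaller $x_i$ produces a \emph{larger} element. Consequently $\min U = y_{i^*}^0$ and $\max U = y_{i^*}^1$ are both realized by the \emph{same} index $i^*$ with $x_{i^*} = x_{\min}$, and crucially both coincide with $z$ on every coordinate $v \le x_{\min}$.

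For (a), suppose $\calA$ succeeds and outputs $y^*$ satisfying $\min U \le y^* \le \max U$. Suppose for contradiction that $y^*(v_0) \ne z(v_0)$ for some $v_0 \le x_{\min}$, and take $v_0$ minimal. Then $y^*$ agrees with both $\min U$ and $\max U$ on every $v < v_0$, while at $v_0$ both of them equal $z(v_0)$. The lex comparison at $v_0$ therefore forces either $y^* > \max U$ (if $y^*(v_0) > z(v_0)$) or $y^* < \min U$ (if $y^*(v_0) < z(v_0)$), a contradiction. Hence $y^*$ agrees with $z$ throughout $\{v : v \le x_{\min}\}$, and by definition $\ell \ge x_{\min}$.

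For (b), I would use that for any $v > x_{\max}$ each $y_i^b(v) \in \{1, B_m\}$ depends only on $b$ and not on $z(v)$; hence $U$, and therefore $y^*$, is measurable with respect to $z|_{\{v \le x_{\max}\}}$, the $(x_i)$, and $\calA$'s internal coins, and is independent of $z(v^+)$ where $v^+ := \mathrm{succ}(x_{\max})$. Since $z(v^+)$ is uniform over $[2,B_m-1]$, we get $\Pr[y^*(v^+) = z(v^+)] \le 1/(B_m - 2)$, and since $\ell > x_{\max}$ requires this agreement, $\Pr[\ell > x_{\max}] \le 1/(10m^2 - 2)$. Combining (a) and (b) by a union bound gives success probability at least $(\tfrac{1}{2}-\tfrac{1}{3m}) - 1/(10m^2-2)$; since $\tfrac{1}{3(m-1)} - \tfrac{1}{3m} = \tfrac{1}{3m(m-1)} \ge \tfrac{1}{10m^2-2}$ (equivalent to $7m^2 + 3m - 2 \ge 0$), this exceeds $\tfrac{1}{2} - \tfrac{1}{3(m-1)}$, as desired. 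The main delicacy I expect is the lex-order bookkeeping in step (a): the argument only goes through because the \emph{same} index $i^*$ simultaneously realizes both the min and the max of $U$, so that both witnesses of the IPP guarantee share the full $z$-prefix on $v \le x_{\min}$.
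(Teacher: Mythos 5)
Your proposal is correct and follows essentially the same route as the paper: the success of $\calA$ forces $y^*$ to agree with $z$ on all $v\le \underline{x}$ (hence $\ell\ge\underline{x}$), the independence of $y^*$ from $z(\mathrm{succ}(\bar{x}))$ bounds $\Pr[\ell>\bar{x}]$ by $1/(B_m-2)$, and the $1/(3m(m-1))$ slack absorbs this. Your explicit identification of $\min U=y_{i^*}^0$ and $\max U=y_{i^*}^1$ at the common index $i^*$ just spells out what the paper asserts in one line ("otherwise all elements would lie on one side of $y^*$"), so this is the same proof with more careful bookkeeping.
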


\begin{proof}
Fix an arbitrary input $D = \{x_1,\dots, x_n\}$. Let $\underline{x}$, $\bar{x}$ be the minimum and maximum element in $D$, respectively. Then, it is easy to see that the data set $\{y_i^b\}$ constructed in Algorithm~\ref{algo:reduction-lb} never depends on $z(v)$ for $v>\bar{x}$. Therefore, one can equivalently implement Algorithm~\ref{algo:reduction-lb} in the following way.
\begin{enumerate}
    \item Sample $z(v)\gets_R [2,B_m-1]$ for every $v\le \bar{x}$.
    \item Construct $\{y_i^b\}$ and get $y^*\gets \calA(\{y_i^b\})$.
    \item Sample $z(v)\gets_R [2,B_m-1]$ for every $v > \bar{x}$.
    \item Calculate and return $\ell$ as in Lines 7-8.
\end{enumerate}
After finishing the first two steps, we condition on the event that $y^*$ solves the IPP instance of $\{y_i^b\}$, which happens with probability at least $\frac{1}{2}-\frac{1}{3m}$ by the claim assumption. Then, it must be the case that $y^*(v) = z(v)$ for every $v\le \underline{x}$, as otherwise all elements of $\{y_i^b\}$ would lie on only one side of $y^*$. Let $\bar{x}_{next}$ be the successor of $\bar{x}$ in $\calX_m$. When we sample $z(\bar{x}_{next})$, the probability that $y^*(\bar{x}_{next}) = z(\bar{x}_{next})$ is at most $\frac{1}{B_m-2}$. If this event does NOT happen, $\ell$ must be sandwiched between $\underline{x}$ and $\bar{x}$, with means that it solves the IPP instance $D$. Overall, the error probability is at most
\[
\left(\frac{1}{2} + \frac{1}{3m}\right) + \frac{1}{B_m-2} \le \frac{1}{2} + \frac{1}{3(m-1)},
\]
which completes the proof.
\end{proof}

\begin{small}
\paragraph*{Acknowledgements}
Edith Cohen is partially supported by Israel Science Foundation (grant no. 1595/19).
Uri Stemmer is partially supported by the Israel Science Foundation (grant 1871/19) and by Len Blavatnik and the Blavatnik Family foundation.
\end{small}

\bibliographystyle{alpha}

\newcommand{\etalchar}[1]{$^{#1}$}

\appendix

\section{The OneHeavyRound Algorithm}\label{append:oneheavy}

In this section, we present our simplified implementation of \texttt{OneHeavyRound}. Recall that this is an $(O(\eps),O(\delta))$-DP algorithm that, given an input $D$ with the promise $\Gamma(D) \ge \frac{t}{2}$, finds an interior point for $D$ with probability $1-O(\delta)$.

\begin{algorithm2e}[H]\label{algo:oneheavy}
\LinesNumbered
    \caption{$\texttt{OneHeavyRound}$}
    \label{algo:select-path-app}
    
    \SetKwProg{Fn}{Function}{:}{}
    \SetKwProg{Pg}{Program}{:}{\KwRet}
    
    \DontPrintSemicolon
    
    \SetKwFunction{FQuery}{Query}
    \SetKwFunction{SelectPath}{OneHeavyRound}
    
    \KwIn{
        The parameter $t = \frac{100}{\eps}\log(1/\delta)$.
    }
    
    \SetKwProg{Init}{Initialize}{:}{}
    
    \Fn{\SelectPath{$D$}}{
        Construct $T_X^D$ \;
        $\cur\gets $ the root of $T_X^{D}$ \;
        $\rho\gets \Lap(1/\eps)$ \;
        \While{$\cur$ is not a leaf}{
            $\cur_{\ell},\cur_{r} \gets $ the left and right child of $\cur$ \;
            $w_{\min}(\cur) \gets \min\{w(\cur_\ell),w(\cur_r)\}$ \;
            \If{$w_{\min}(\cur) > \frac{t}{10}$ and $w_{\min}(\cur)+\Lap(1/\eps) \ge \frac{t}{4} + \rho$}{
                \KwRet{$\cur_{\mathrm{left}\mhyphen\mathrm{right}}$}
            }
            \If{$w(\cur_{\ell}) \ge w(\cur_r)$}{
                $(\textrm{next}, \textrm{other}) \gets (\ell, r)$ \;
            }
            \Else{
                $(\textrm{next}, \textrm{other}) \gets (r, \ell)$ \;
            }
            $\cur \gets \cur_{\textrm{next}}$ \;
        }
        \KwRet{$\cur$}\;
    }
\end{algorithm2e}

We prove the privacy and utility of Algorithm~\ref{algo:oneheavy}.

\begin{lemma}
Suppose we restrict the input to Algorithm~\ref{algo:oneheavy} to data sets $D$ such that $\Gamma(D) \ge \frac{t}{2}$. Then, Algorithm~\ref{algo:oneheavy} is $(4\eps,O(\delta))$-DP.
\end{lemma}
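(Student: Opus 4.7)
The plan is to reduce the analysis to the privacy of AboveThreshold (Lemma~\ref{lemma:SVT}) applied along the \emph{data-dependent} path walked by the algorithm, and to couple the paths in the two neighboring executions so that they agree up to the moment the test fires. Fix neighboring datasets $D, D' = D\cup\{x\}$. By Lemma~\ref{lemma:gamma-sensitivity-1} both satisfy $\Gamma\ge t/2 - 1$. Let $P = (v_1, v_2, \ldots)$ and $P'$ denote the paths walked by the algorithm in $T_X^D$ and $T_X^{D'}$, respectively, and let $v^\star$ be the first node on $P$ with $w^D_{\min}(v^\star)\ge t/2$, which exists by hypothesis.

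The first key step is a path-coupling claim. At any node $v$ at which $P$ and $P'$ first diverge, the heavier child of $v$ must switch between $T_X^D$ and $T_X^{D'}$; since $D, D'$ differ by one element, the weights of $v$'s children differ by at most one, forcing $|w^D(v_\ell) - w^D(v_r)|\le 1$ and hence $w^D_{\min}(v)\ge \lfloor w^D(v)/2\rfloor$. A short case split then shows that either (i) the first divergence lies at or after $v^\star$ on $P$, so the walk reaches $v^\star$ itself with $w^D_{\min}(v^\star)\ge t/2$; or (ii) it lies strictly before $v^\star$, in which case $v$ is a proper ancestor of $v^\star$ in $T_X$, so $w^D(v)\ge w^D(v^\star)\ge t$ and $w^D_{\min}(v)\ge (t-1)/2$. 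In both cases, there is a node on the common prefix of $P$ and $P'$ at which $w^D_{\min}$ is already at least $(t-1)/2$.

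Next, I introduce a good event $G$: the threshold noise $\rho$ and every per-query $\Lap(1/\eps)$ draw used during the walk have magnitude at most $c\cdot \log(1/\delta)/\eps \le ct/100$ for a suitable constant $c$. Since $t = \frac{100}{\eps}\log(1/\delta)$ and the walk touches at most $\log|X|$ nodes, Laplace tail bounds and a union bound give $\Pr[G]\ge 1 - O(\delta)$. On $G$, the distinguished node on the common prefix satisfies both the hard gate $w_{\min} > t/10$ (with enough slack to survive the sensitivity-$1$ perturbation to $D'$) and the AboveThreshold inequality $w_{\min} + \Lap(1/\eps) \ge t/4 + \rho$, so the algorithm halts on the common prefix of $P$ and $P'$ in both executions, and the hard gate never becomes the binding constraint that distinguishes $D$ from $D'$.

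Conditioned on $G$, the sequence of queries driving the noisy comparisons is $w^D_{\min}(v_i)$ for nodes $v_i$ that are identical in the two executions, each query has sensitivity $1$ between $D$ and $D'$, and the output is a deterministic function of the firing index (since $\cur_{\text{left-right}}$ depends only on the tree structure). The conditional privacy therefore reduces to the standard AboveThreshold coupling: shifting $\rho$ by $1$ costs $\eps$ and shifting the firing query's noise by $2$ costs $2\eps$, yielding $(3\eps,0)$-indistinguishability of the firing index on $G$; incorporating the $O(\delta)$ mass of $G^c$ then gives $(4\eps, O(\delta))$-DP overall. The main obstacle is the path-coupling claim: one must pin down exactly which divergences of the walked path are possible and leverage $\Gamma(D)\ge t/2$ to force AboveThreshold to fire before any actual divergence occurs; once this is in hand, the rest is a textbook AboveThreshold analysis.
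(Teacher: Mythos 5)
Your proof takes essentially the same route as the paper's: both arguments couple the two walks, observe that at the first divergence node the children are nearly balanced so that either that node or an earlier ancestor on the common prefix already has $w_{\min}\ge t/2-O(1)$, invoke Laplace concentration to conclude the algorithm fires on the common prefix except with probability $O(\delta)$, and then reduce the conditional analysis to the standard AboveThreshold privacy argument. The only substantive difference is how the hard gate $w_{\min}>t/10$ is neutralized --- you condition on a global event bounding all noise draws (a union bound over the path), whereas the paper bounds the probability of returning at a vertex where the gate distinguishes the two datasets directly; your variant is, if anything, the more careful of the two.
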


\begin{proof}
Fix $D,D' = D\cup \{x\}$ to be a pair of adjacent data sets such that $\Gamma(D),\Gamma(D') \ge \frac{t}{2}$. Let $w^0,w^1$ be the weight functions w.r.t. $T_X^D$ and $T_X^{D'}$, respectively.

Imagine running Algorithm~\ref{algo:oneheavy} on $D$ or $D'$ in parallel. Algorithm~\ref{algo:oneheavy} will maintain two pointers $\cur, \cur'$. In each step, the algorithm moves $\cur$, $\cur'$ to the heavier child w.r.t. $T_X^D$ and $T_X^{D'}$, respectively. One can imagine that $\cur$ and $\cur'$ will move forward to the same child for a while before they finally deviate. Let $u$ be the \emph{last} vertex such that $\cur$ and $\cur'$ are about to deviate.

We consider the following two types of bad events (with respect to both $D$ and $D'$).
\begin{itemize}
    \item After processing the vertex $u$, Algorithm~\ref{algo:oneheavy} still does not return.
    \item Algorithm~\ref{algo:oneheavy} returns $v_{\mathrm{left}\mhyphen\mathrm{right}}$ at a vertex $v$ such that only one of $w_{\min}^0(v)$, $w_{\min}^1(v)$ is larger than $\frac{t}{10}$.
\end{itemize}

Since $u$ is the last vertex before deviation, there is a vertex $u'$ that is on the path from the root to $u$ (inclusive), such that $\min(w^0_{\min}(u'),w^{1}_{\min}(u')) \ge \frac{t}{2}$. To see this, consider the following two cases: 
\begin{itemize}
    \item If $w^0(u) \ge t$, then $w^0_{\min}(u) \ge \frac{t}{2}$, because the heavier child of $u$ is different w.r.t. $w^0$ and $w^1$.
    \item If $w^0(u) < t$, then any vertex in the subtree rooted at $u$ cannot witness that $\Gamma(D) \ge \frac{t}{2}$. Hence, the witness for $\Gamma(D)$ must appear before $u$, which means that there exists a vertex $u'$ on the path from the root to $u$ such that $w_{\min}^0(u')\ge \frac{t}{2}$.
\end{itemize}

Now, by the concentration property of the Laplace noise, the probability that Algorithm~\ref{algo:oneheavy} does not return before or at the vertex $u'$ is at most $O(\delta)$.

For Item~2, note that there is at most one vertex $v$ satisfying the condition. Moreover, it must be the case that $w_{\min}^0(v) = 0$ and $w_{\min}^1(v) = 1$. Hence, the probability that Algorithm~\ref{algo:oneheavy} returns $v_{\mathrm{left}\mhyphen\mathrm{right}}$ is at most $O(\delta)$.

Now, note that if neither of the two events happens, Algorithm~\ref{algo:oneheavy} can be seen as an instantiation of the AboveThreshold algorithm, which is known to be $(4\eps,0)$-DP. Overall, we conclude that Algorithm~\ref{algo:oneheavy} is $O(4\eps,O(\delta))$-DP, as desired.
\end{proof}

\begin{remark}\label{rem:heavy-cumulative}
Algorithm~\ref{algo:oneheavy} also satisfies the stronger requirement of cumulatively-DP (see Section~\ref{sec:quasiconcave}). Indeed, suppose $D,D'\in X^n$ are two data sets of cumulative distance $d$, such that $d < \frac{t}{20}$ and $\min(\Gamma(D),\Gamma(D'))\ge \frac{t}{2}$. Then, using a similar argument as above, one can show that the output distributions of Algorithm~\ref{algo:oneheavy} on $D,D'$ are $(O(d\eps), O(\frac{n\delta}{t}))$-indistinguishable. The factor $\frac{n}{t}$ arises because there are at most $O(\frac{n}{t})$ vertices $\cur$ where only one of $w_{\min}^0(\cur),w^1_{\min}(\cur)$ is larger than $\frac{t}{10}$. This observation is important for designing the cumulatively-DP IPP algorithm (see Appendix~\ref{append:quasi-concave}).
\end{remark}

We finish this section by proving the utility of Algortithm~\ref{algo:oneheavy}.

\begin{lemma}
Given a data set $D$ with the promise that $\Gamma(D) \ge \frac{t}{2}$, Algorithm~\ref{algo:oneheavy} returns an interior point of $D$ with probability at least $1-\delta$.
\end{lemma}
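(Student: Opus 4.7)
The plan is to show that with probability at least $1-\delta$, the while loop terminates early (i.e., returns some $\cur_{\mathrm{left}\mhyphen\mathrm{right}}$ rather than reaching a leaf), and that any such early termination necessarily yields an interior point.

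First I would handle the easy structural step. Whenever the algorithm returns $\cur_{\mathrm{left}\mhyphen\mathrm{right}}$ from inside the loop, the guarding condition forces $w_{\min}(\cur) > t/10 \ge 1$. Thus both the left and right subtrees of $\cur$ contain at least one element of $D$. The left subtree contributes an element $a \le \cur_{\mathrm{left}\mhyphen\mathrm{right}}$ (in fact, any leaf in the left subtree is $\le \cur_{\mathrm{left}\mhyphen\mathrm{right}}$), and the right subtree contributes an element $b > \cur_{\mathrm{left}\mhyphen\mathrm{right}}$. Hence $\min D \le a \le \cur_{\mathrm{left}\mhyphen\mathrm{right}} \le b \le \max D$, so $\cur_{\mathrm{left}\mhyphen\mathrm{right}}$ is an interior point of $D$. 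So every in-loop return is good; the only failure mode is falling out of the loop at a leaf.

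Next I use the promise $\Gamma(D) \ge t/2$. By the definition of $\Gamma$ in Algorithm~\ref{algo:select-path}, $\Gamma(D)$ is the maximum of $w_{\min}(\cur)$ over the greedy path from the root that always descends to the heavier child (ties broken consistently). Algorithm~\ref{algo:oneheavy} follows exactly this same greedy path, so there exists a vertex $u^*$ on its traversal with $w_{\min}(u^*) \ge t/2$.

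The key probabilistic step is to show the algorithm returns at $u^*$ or earlier with probability $\ge 1-\delta$. If the algorithm has not already returned before reaching $u^*$, the first guard $w_{\min}(u^*) > t/10$ holds since $w_{\min}(u^*) \ge t/2$. For the second guard, let $\eta$ be the fresh $\Lap(1/\eps)$ draw at $u^*$; we need $t/2 + \eta \ge t/4 + \rho$, i.e., $\rho - \eta \le t/4$. Standard Laplace tail bounds give $\Pr[|\rho| \ge \tfrac{1}{\eps}\ln(2/\delta)] \le \delta/2$ and similarly for $\eta$, so by a union bound $|\rho - \eta| \le \tfrac{2}{\eps}\ln(2/\delta)$ with probability $\ge 1-\delta$. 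Since $t = \tfrac{100}{\eps}\log(1/\delta)$, we have $t/4 = \tfrac{25}{\eps}\log(1/\delta)$, which comfortably dominates $\tfrac{2}{\eps}\ln(2/\delta)$. Hence with probability $\ge 1-\delta$ the guard at $u^*$ passes, the loop returns $(u^*)_{\mathrm{left}\mhyphen\mathrm{right}}$ (or some earlier $\cur_{\mathrm{left}\mhyphen\mathrm{right}}$, which is also interior by the first step), and the output is an interior point.

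The only potential subtlety is whether early returns before reaching $u^*$ could be problematic, but the first step already handled this uniformly: any return from inside the while loop gives a valid interior point, regardless of where on the path it occurs. So no case analysis on ``early versus late'' termination is needed, and the proof concludes.
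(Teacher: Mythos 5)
Your proof is correct and follows essentially the same route as the paper's: observe that any in-loop return is automatically an interior point, locate the witness vertex $u^*$ with $w_{\min}(u^*)\ge t/2$ on the greedy path traversed by the algorithm, and use Laplace concentration to show the threshold test passes there with probability at least $1-\delta$. Your version merely spells out the tail bound and the (correct) handling of earlier returns that the paper compresses into ``testing more vertices can only make the probability larger.''
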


\begin{proof}
Note that as long as Algorithm~\ref{algo:oneheavy} returns at Line~$9$, the returned point $\cur_{\mathrm{left}\mhyphen\mathrm{right}}$ must be an interior point (because both children of $\cur$ have non-zero weights). Suppose $u$ is the vertex that witnesses $\Gamma(D)$ (namely, $w_{\min}(u) \ge \frac{t}{2}$). Then, even if we ignore all the vertices before $u$, Algorithm~\ref{algo:oneheavy} returns $u_{\mathrm{left}\mhyphen\mathrm{right}}$ with probability at least $1-\delta$. Since testing more vertices can only make the probability larger, we conclude that Algorithm~\ref{algo:oneheavy} succeeds in finding an interior point with probability at least $1-\delta$.
\end{proof}

\section{Cumulatively-DP Interior Point Algorithm}\label{append:quasi-concave}

In this section, we present the formal proof of Theorem~\ref{theo:quasi-concave-upper-bound}. We start with the proof of Lemma~\ref{lemma:embed-cumu-adjacent}. Recall its statement.

\begin{reminder}{Lemma~\ref{lemma:embed-cumu-adjacent}}
Suppose $D,D'\in X^n$ are two data sets of cumulative distance $d$. If $\max(\Gamma(D) ,\Gamma(D')) < t - 2d$, then $E_{\embd}(D)$ and $E_{\embd}(D')$ are close in the following sense.
\begin{itemize}
    \item There is a way to re-assign the $Y$-label on the first $2t$ samples in the list $E_{\embd}(D)$ to get a modified list $\tilde{E}_{\embd}(D)$, which satisfies the following condition.
    \item Suppose we project $\tilde{E}_{\embd}(D)$ and $E_{\embd}(D')$ to the $Y$-coordinate. Then, they are of cumulative distance at most $2d$.
\end{itemize}
\end{reminder}

\begin{proof}
Let $w^0,w^1$ be the weight functions w.r.t. $T_X^D$ and $T_X^{D'}$, respectively. The first part of the proof is similar to that of Lemma~\ref{lemma:embed-adjacent}. Namely, let $u$ be the last vertex $\cur$ such that $E_{\embd}(D)$ and $E_{\embd}(D')$ are about to deviate after processing $u$. Then, it must be the case that $w^0(u)\le 2t$. Suppose otherwise. Then, we have $w^0_{\min}(u)\le \Gamma(D)$. Consequently, $|w^{0}(u_{\ell}) - w^0(u_{r})| \ge 2t+1 - 2 \Gamma(D) > 4d$. Since we have assumed that $D$ and $D'$ are of cumulative distance $d$, it is impossible for the heavier children of $u$ to be different in $T_X^D$ and $T_X^{D'}$. Hence, we conclude that $w^0(u) \le 2t$, meaning that there are at most $2t$ samples unassigned after deviation.

It remains to verify that the cumulative distance of the new pair of data sets at most doubles before the deviation. Indeed, for every depth $q$ before deviation, let $c_q$ be the $q$-th vertex on the path from the root to $u$. We observe that the set of samples receiving a label $y \le q$ is exactly the set of samples that are \emph{not} in the subtree rooted at $c_q$. Since $D,D'$ are of cumulative distance at most $d$, we know that $|w^0(c_q)-w^1(c_q)|\le 2d$. Therefore, the counts differ by at most $2d$ on $D$ and $D'$. Note that this argument holds for every $q$ before deviation. Also, after the deviation, we can modify the $Y$-labels of the samples in $D$ arbitrarily. Overall, we conclude that there is a way to modify the first $2t$ samples in $E_{\embd}(D)$, so that the modified list is of cumulative distance $2d$ from $E_{\embd}(D')$ (after projecting to the $Y$ coordinate).
\end{proof}

\paragraph*{The algorithm.} We are ready to describe the algorithm.

\begin{algorithm2e}[H]
\LinesNumbered
    \caption{The Cumulative TreeLog Algorithm}
    \label{algo:quasi-concave}
    
    \SetKwProg{Fn}{Function}{:}{}
    \SetKwProg{Pg}{Program}{:}{\KwRet}

    \DontPrintSemicolon
    
    \SetKwFunction{TreeLog}{NewTreeLog}
    \SetKwFunction{Slice}{Slice}
    \SetKwFunction{Main}{CUMULATIVE-IPP}
    
    \KwIn{
        The privacy parameters $\eps\in (0, 1), \delta > 0$. A trimming parameter $t = \frac{100}{\eps}\log(1/\delta)$.
    }
    
    \SetKwInput{GValue}{Global Variable}
    
    \GValue{Parameter $\rho\in \mathbb{R}$ for AboveThreshold, to be initialized\;}
    
    \Fn(\tcp*[h]{Slicing data with deterministic slice size}){\Slice{$D$, $m$, $E$}}{
        $S\gets $ the first $m$ elements in $E(D)$ \;
        $D \gets E(D) \setminus S$\;
        \KwRet{$(D,S)$}\; %
    }
    
    \Fn{\TreeLog{$X, n, D\in X^n$}}{
        \If{$|X|\le 8$}{
            Use Exponential Mechanism to find $\hat{z}\in X$ according to $f_{\mathrm{IPP}}^D:X \to \mathbb{R}$ \;
            \KwRet{$\hat{z}$} \;
        }
        $(D^{(1)},S_{\ell})\gets $ \Slice{$D$, $t$, $E_{\prec}$ } \;
        $(D^{(2)},S_{r})\gets $ \Slice{$D^{(1)}$, $t$, $E_{\succ}$ } \;
        $D_{border}\gets S_{\ell} \cup S_{r}$ \;
        \If{$\Gamma(D^{(2)}) + \Lap(1/\eps) \ge \frac{3t}{4} + \rho$}{
            \KwRet{\OneHeavyRound{X,$D^{(2)}$}} \;
        }
        $(D^{(3)},S_{d})\gets $ \Slice{$D^{(2)}$, $2t$, $E_{\embd}$ } \;
        $D_{new} \gets \{y : (y,x)\in D^{(3)}\}$ \tcp*{Project $D^{(3)}$ to $Y$}
        $S_{d}\gets \{x:(y,x)\in S_d\}$ \tcp*{Project $S_d$ to $X$}
        \Else{
            $y_{\hat{q}}\gets $\TreeLog{$Y, |D_{new}|, D_{new}$} \tcp*{Solve IPP in the smaller universe.}
            $v\gets $ the $\hat{q}$-th vertex in $\pi$ \;
            $C\gets \{v_{\textrm{left}}, v_{\textrm{right}}, v_{\textrm{left}\mhyphen \textrm{right}}\}$ \\
            Use Choosing Mechanism to find a depth-$\hat{q}$ vertex $v$ in $T_X^{S_d}$ maximizing $w^{S_d}(v)$ \;
            Use Exponential Mechanism to find $\hat{z}\in C$ according to $f_{\mathrm{IPP}}^{D_{border}}:X\to \mathbb{R}$ \;
            \KwRet{$\hat{z}$}
        }
    }
    
    \Pg{\Main{$X,n,D\in X^n$}}{
        $\rho\gets \Lap(1/\eps)$ \;
        \KwRet{\TreeLog{$X,n,D$}}
    }
\end{algorithm2e}

The utility of Algorithm~\ref{algo:quasi-concave} follows from that of Algorithm~\ref{algo:threshold}, which we omit for brevity. We establish the privacy claim now.

\begin{theorem}\label{theo:cumu-IPP-privacy}
For every $\eps < 2^{-\log^*|X|}$, Algorithm~\ref{algo:quasi-concave} is $(O(\eps 2^{\log^*|X|}), O(\delta 2^{\log^*|X|}))$-cumulatively-DP.
\end{theorem}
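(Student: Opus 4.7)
The plan is to fix two cumulatively adjacent datasets $D^0, D^1 \in X^n$ and track, through the recursion, the cumulative distance $d_i$ between the datasets maintained at recursion level $i$ by the two parallel executions. Starting from $d_0 = 1$, I will show $d_i \leq 2^i$, bound the per-level privacy loss by $(O(d_i \eps),\, O(d_i \delta))$, and then sum a geometric series over the at most $\log^*|X|$ levels to obtain the stated $(O(\eps 2^{\log^*|X|}),\, O(\delta 2^{\log^*|X|}))$-bound.

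For the evolution of $d_i$, first note that the sorting-based slicers $E_{\prec}$ and $E_{\succ}$ preserve cumulative distance: for any threshold $y$, the cumulative count at $y$ of the first $t$ elements of $E_{\prec}(D)$ equals $\min(t,\, |\{x \in D : x \leq y\}|)$, and similarly for the remainder; since $a \mapsto \min(t,a)$ and $a \mapsto \max(0,a-t)$ are $1$-Lipschitz, both slice and remainder stay within cumulative distance $d_i$ of their counterparts in the other execution. The embedding $E_{\embd}$ is the subtle case: by Lemma~\ref{lemma:embed-cumu-adjacent}, provided $\max(\Gamma(D^{(2),0}), \Gamma(D^{(2),1})) < t - 2 d_i$, after peeling the first $2t$ elements as $S_d$, the residual $D_{new}$ projected onto $Y$ has cumulative distance at most $2 d_i$, so $d_{i+1} \leq 2 d_i$. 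Crucially, $S_d$ is thereafter used only via its $X$-coordinates (the projection on Line~14), so the $Y$-relabeling guaranteed by the lemma is invisible in the published transcript.

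For the per-level privacy cost at level $i$, the privacy-consuming operations are: (a) the AboveThreshold query $\Gamma(D^{(2)}) + \Lap(1/\eps) \geq \tfrac{3t}{4} + \rho$, where $\Gamma$ has sensitivity $d_i$ by Lemma~\ref{lemma:gamma-sensitivity-1} combined with group privacy; (b) at most one call to $\texttt{OneHeavyRound}$, which by Remark~\ref{rem:heavy-cumulative} is $(O(d_i \eps),\, O(n\delta/t))$-indistinguishable; (c) the Choosing Mechanism on $S_d$, giving $(O(d_i \eps),\, O(d_i \delta))$ after group privacy; and (d) the two Exponential Mechanism calls, giving $(O(d_i \eps),\, 0)$. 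The hypothesis $\eps < 2^{-\log^*|X|}$ keeps $d_i \eps \leq 1$ throughout, so the $e^{(d-1)\eps}$ amplification factor in group privacy stays $O(1)$ and the bound is tight. All global AboveThreshold queries across recursive levels share the one $\rho$ drawn in the outer program, so they are analyzed as a single interactive $(\max_i d_i \cdot \eps,\, 0)$-indistinguishable mechanism. Concurrent composition across the $\log^*|X|$ levels then yields
\[
\Bigl(\sum_{i=0}^{\log^*|X|} O(2^i \eps),\ \sum_{i=0}^{\log^*|X|} O(2^i \delta)\Bigr) = \bigl(O(2^{\log^*|X|} \eps),\, O(2^{\log^*|X|} \delta)\bigr).
\]
The condition $\Gamma < t - 2 d_i$ required for Lemma~\ref{lemma:embed-cumu-adjacent} holds at every invocation of $E_{\embd}$ along the recursive branch with probability $1 - O(\delta \log^*|X|)$ by the AboveThreshold utility guarantee; on the complementary event the analysis falls back to $\texttt{OneHeavyRound}$, already accounted for, and the remaining failure probability is absorbed into the final $\delta$.

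The main obstacle I anticipate is the book-keeping around two points: first, verifying that every private computation downstream of an $E_{\embd}$ application reads only the $X$-coordinates of $S_d$ (so the relabeling in Lemma~\ref{lemma:embed-cumu-adjacent} is genuinely harmless) and the $Y$-coordinates of the recursive input $D_{new}$ (to which the doubled cumulative distance applies); and second, formally justifying the concurrent-composition step so that AboveThreshold across all recursive levels is charged at a single $\max_i d_i$ rather than $\sum_i d_i$. Once these are set up, the bound follows from the geometric sum $\sum_{i \leq \log^*|X|} 2^i = O(2^{\log^*|X|})$.
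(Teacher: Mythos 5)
Your proposal is correct and follows essentially the same route as the paper's proof: decompose the algorithm as a concurrent composition of AboveThreshold, the (deterministic-slice) RSC computations, and \texttt{OneHeavyRound}; condition on the SVT event that keeps $\Gamma$ small so Lemma~\ref{lemma:embed-cumu-adjacent} applies and the cumulative distance at most doubles per level; and sum the geometric series $\sum_i 2^{i-1}\eps = O(2^{\log^*|X|}\eps)$ via group privacy and basic composition. Your added details (the Lipschitz argument for $E_{\prec},E_{\succ}$, the role of $\eps < 2^{-\log^*|X|}$ in keeping the group-privacy factor $O(1)$, and charging AboveThreshold at $\max_i d_i$) are all consistent with, and slightly more explicit than, the paper's argument.
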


\begin{proof}
Note that Algorithm~\ref{algo:quasi-concave} can be seen as a concurrent composition of three pieces of algorithms.
\begin{enumerate}
    \item The AboveThreshold algorithm (where each query has sensitivity at most $2^{\log^*|X|}$).
    \item The Reorder-Slice-Compute paradigm (without randomizing the slice size).
    \item The $\OneHeavyRound$ algorithm.
\end{enumerate}

Let $D,D'$ be any pair of cumulatively-adjacent data sets. Consider the executions of Algorithm~\ref{algo:quasi-concave} on $D$ and $D'$. We condition on the event that Algorithm~\ref{algo:quasi-concave} never calls $E_{\embd}$ with a data set $B$ such that $\Gamma(B) \ge \frac{9}{10}t$. By the utility guarantee of SVT, this event happens with probability at least $1 - \delta \log^*|X|$. Note that $t \ge 20\cdot  2^{\log^*|X|}$. Therefore, this event implies that every applications of $E_{\embd}$ at most doubles the cumulative distance of the inputs.

Since there are at most $\log^*|X|$ levels of recursion. By Lemma~\ref{lemma:embed-cumu-adjacent}, at each level of the recursion, queries to the AboveThreshold algorithm and the $\OneHeavyRound$ algorithm have sensitivity at most $2^{\log^*|X|}$. Hence, Item~1 is $(O(\eps 2^{\log^*|X|}), 0)$-DP, and Item~3 is $(O(\eps 2^{\log^*|X|} ),O(\frac{\delta n}{t}))$-DP (see Remark~\ref{rem:heavy-cumulative}). In our parameter setting, we always have $n \le t\cdot \poly(\log^*|X|)$. Hence, this is $(O(\eps 2^{\log^*|X|} ),O(\delta \poly(\log^*|X|)))$-DP.

Now we consider Item~$2$. Note that the $\log^*|X|$ levels of recursion create at most $3\log^*|X|$ slices. Suppose the original data sets are $D,D'$, which are of cumulative distance $1$. Then, for each $i\in [\log^*|X|]$, at the $i$-th level of the recursion, the produced slices are of cumulative distance $2^{i-1}$: the claim is trivial for $S_{\ell}, S_r$ and $D_{new}$. For $S_d$, the claim is also true because of the operation of projecting onto $X$. Note that Algorithm~\ref{algo:quasi-concave} runs at most one $(O(\eps),O(\delta))$-cumulatively-DP algorithm on each of these slices. By the group privacy property and the basic composition of DP, we conclude that the whole computation is $(\overline{\eps}, \overline{\delta})$-DP, where
\[
\overline{\eps} = O\left( \eps \sum_{i=1}^{\log^*|X|} 2^{i-1} \right) = O(\eps 2^{\log^*|X|})
\]
and similarly $\bar{\delta} = O(\delta 2^{\log^*|X|})$.

Overall, we conclude that Algorithm~\ref{algo:threshold} is $(O(\eps 2^{\log^*|X|}),O(\delta 2^{\log^*|X|}))$-DP, as desired.
\end{proof}

\paragraph*{Proof of Theorem~\ref{theo:quasi-concave-upper-bound}.} We are ready to prove Theorem~\ref{theo:quasi-concave-upper-bound}. Suppose we aim for an $(\eps,\delta)$-cumulatively-DP algorithm for solving the interior point problem where $\eps,\delta \in (0, 1)$. Consider using Algorithm~\ref{algo:quasi-concave} with privacy parameters $\eps' = \frac{\eps}{C 2^{\log^*|X|}}$ and $\delta' = \delta^C\cdot 2^{-\log^*|X|}$ for a large enough constant $C$. The trimming parameter would be $t = \frac{100}{\eps'} \log(1/\delta')\le O(2^{\log^*|X|}\frac{\log^*|X| + \log(1/\delta)}{\eps})$. Let $n = 10t\log^*|X| \le O(2^{\log^*|X|}\frac{\log^*|X|(\log^*|X| + \log(1/\delta))}{\eps})$. Finally, we know that Algorithm~\ref{algo:quasi-concave} is $(\eps,\delta)$-cumulatively-DP. Furthermore, it solves the interior point problem with sample complexity $n$ and success probability at least $1-\delta$.

\end{document}